\newcommand{\transp}{\top}
\renewcommand{\ge}{\geqslant}
\renewcommand{\leq}{\leqslant}
\renewcommand{\geq}{\geqslant}
\newcommand{\bp}{\boldsymbol{p}}
\newcommand{\bdelta}{\boldsymbol{\delta}}
\newcommand{\bq}{\boldsymbol{q}}
\newcommand{\by}{\boldsymbol{y}}
\newcommand{\bz}{\boldsymbol{z}}
\newcommand{\bc}{\boldsymbol{c}}
\newcommand{\bk}{\boldsymbol{k}}
\newcommand{\bu}{\boldsymbol{u}}
\newcommand{\R}{\mathbb{R}}
\newcommand{\cA}{\mathcal{A}}
\newcommand{\cB}{\mathcal{B}}
\newcommand{\cC}{\mathcal{C}}
\newcommand{\cM}{\mathcal{M}}
\newcommand{\cP}{\mathcal{P}}
\newcommand{\X}{\mathcal{X}}
\newcommand{\cH}{\mathcal{H}}
\newcommand{\cF}{\mathcal{F}}
\newcommand{\eqdef}{\stackrel{\mbox{\tiny \rm def}}{=}}
\newcommand{\defeq}{\eqdef}
\newcommand{\bb}[1]{\bm{#1}}
\newcommand{\cD}{\mathcal{D}}
\newcommand{\Dtest}{\cD_{\test}}
\newcommand{\Dtrain}{\cD_{\train}}
\newcommand{\Dcalib}{\cD_{\calib}}
\newcommand{\C}{\mathcal{C}}
\newcommand{\M}{\mathcal{M}}
\newcommand{\V}{\mathcal{V}}
\renewcommand{\S}{\mathcal{S}}
\newcommand{\K}{\mathcal{K}}
\newcommand{\G}{\mathcal{G}}
\newcommand{\F}{\mathcal{F}}
\newcommand{\projC}{\operatorname{Proj}_{\cC}}
\newcommand{\projCQ}{\operatorname{Proj}_{\C_Q}}
\renewcommand{\epsilon}{\varepsilon}
\newcommand{\E}{\mathbb{E}}
\renewcommand{\P}{\mathbb{P}}
\newcommand{\indic}{\mathbf{1}}
\newcommand{\as}{\qquad \mbox{\rm a.s.}}
\DeclareMathOperator{\test}{\mbox{\tiny \rm test}}
\DeclareMathOperator{\train}{\mbox{\tiny \rm train}}
\DeclareMathOperator{\calib}{\mbox{\tiny \rm calib}}
\DeclareMathOperator{\unif}{\mbox{\tiny \rm unif}}
\DeclareMathOperator{\Leb}{{\mathfrak{L}}\!}
\DeclareMathOperator{\conv}{\operatorname{conv}}
\DeclareMathOperator{\interior}{\operatorname{int}}
\DeclareMathOperator{\cl}{\operatorname{cl}}
\DeclareMathOperator{\dtv}{d_{\operatorname{TV}}}
\DeclareMathOperator{\dK}{d_{\operatorname{K}}}
\DeclareMathOperator*{\argmin}{arg\,min}
\DeclareMathOperator*{\bmax}{\mathrm{max}\vphantom{arg\,min}}
\DeclareMathOperator{\quant}{\operatorname{mscv}}
\newcommand{\mfm}{\mathfrak{m}}
\newcommand{\mfp}{\mathfrak{p}}
\newcommand{\dash}{\dagger}
\newtheorem{fact}[theorem]{Fact}
\newtheorem{property}[theorem]{Property}
\newtheorem{informal}[theorem]{Informal Result}
\newtheorem{assumption}[theorem]{Assumption}
\begin{document}

\title{Adaptive Conformal Inference \\ through the Lens of Blackwell Approachability}

\author{\name Guillaume Principato \email guillaume.principato@universite-paris-saclay.fr \\
       \addr Université Paris-Saclay, CNRS, Inria, Laboratoire de mathématiques d'Orsay, 91405, Orsay, France \\
       EDF R{\&}D, Palaiseau, France
       \AND
       \name Gilles Stoltz \email gilles.stoltz@universite-paris-saclay.fr \\
       \addr Université Paris-Saclay, CNRS, Inria, Laboratoire de mathématiques d'Orsay, 91405, Orsay, France \\
       HEC Paris, Jouy-en-Josas, France}

\editor{My editor}

\maketitle

\begin{abstract}
This article considers an online
version of conformal inference, called adaptive conformal inference [ACI]
and introduced by \citet{gibbs2021adaptive}:
prediction sets are issued sequentially, after observing
features and before the outcomes are revealed.
These sets are evaluated both in
terms of validity (the fraction of rounds where the outcome
was lying in the prediction set) and efficiency
(the average lengths of the prediction sets).
The two criteria point to different directions
(validity favors larger sets).
We also target a wide range of scenarios,
with exchangeable data and arbitrary data
(lack of any stochastic guarantees) as two extremes.
A series of existing strategies for ACI typically guarantee that
empirical coverage converges to the desired level
for arbitrary sequences,
but they generally lack simultaneous efficiency guarantees.
To provide a unified study,
we first formulate ACI as a repeated two-player game
with finite action sets and vector-valued payoffs
encoding validity and efficiency.
Building on this reformulation,
we introduce a strategy based on
Blackwell approachability and on its opportunistic extension
by \citet{bernstein2014opportunistic}
that ensures validity
while adapting the efficiency
of the prediction intervals to the underlying degree of stochasticity
of the opponent player.
The resulting guarantee is ``best of many worlds'':
it recovers the relevant efficiency guarantees in exchangeable and adversarial settings,
and provides guarantees in intermediate settings
that arise in typical applications such as the forecasting of time series.

\medskip

\noindent \textbf{Keywords:} Adaptive conformal inference;
efficient prediction intervals;
Blackwell opportunistic approachability.
\end{abstract}

\section{Introduction}

Conformal inference \citep{vovk2005algorithmic} is a general framework
for constructing prediction sets with finite-sample
coverage guarantees under the assumption of exchangeability.
This assumption, however, is often violated in sequential
or time-dependent settings, such as the forecasting of time series,
where the distribution of observations may evolve over time.

To address this limitation, \citet{gibbs2021adaptive} introduced
a setting (and an algorithm) called adaptive conformal inference [ACI], which extends
conformal inference beyond exchangeability.
The underlying idea is to sequentially update the (levels of the) conformal sets
in response to observed outcomes to ensure that
empirical coverage converges to the desired level
for arbitrary sequences.
The ACI algorithm and its extensions satisfy this property, called \emph{validity},
but they generally lack theoretical guarantees on the
\emph{efficiency} of the prediction sets,
notably for prediction intervals, for which efficiency is measured
in terms of their average lengths.

In this work, we address validity and efficiency
jointly by formulating adaptive conformal inference
as a repeated two-player finite game with
vector-valued payoffs that
simultaneously encode these two objectives.
Building on this perspective, we design a
strategy that ensures validity while
achieving a setting-dependent notion of efficiency.
Our approach relies on
an extension of approachability called
opportunistic approachability
\citep{bernstein2014opportunistic},
and is ``best of many worlds'' in the sense that this notion of efficiency
recovers the relevant guarantees in the extreme settings
formed by exchangeable data and adversarial sequences,
while offering guarantees in intermediate settings
that arise in practical applications such as the forecasting
of time series.

\subsection{Related Works}


\paragraph{ACI and online conformal prediction.}
ACI maintains coverage by updating the conformal sets
via online gradient descent \citep{gibbs2021adaptive}.
Subsequent works largely adopted this online optimization
perspective, with a particular emphasis on the step-size schedule.
\citet{zaffran2022adaptive} resorts to aggregation of experts,
\citet{angelopoulos2024online} study decaying step sizes,
and both \citet{gibbs2024conformal} and \citet{bhatnagar2023improved}
establish connections to adaptive regret minimization.

Within this literature, two parameterizations have emerged.
The original ACI updates the miscoverage levels $\alpha_t$
used to compute empirical quantiles of conformity scores,
whereas online conformal prediction
updates the conformity threshold directly,
bypassing the quantile computation~\citep{bhatnagar2023improved,
angelopoulos2024online}.
Despite this distinction,
both approaches rely on the same underlying online optimization principle,
suggesting that the asymptotic validity achieved depends only weakly on the
specific construction of conformal sets \citep{szabadvary2025beyond}.
Other extensions beyond exchangeability include proportional-integral-derivative [PID] control,
which combines quantile tracking, error integration,
and quantile forecasting to adapt the prediction
sets online \citep{angelopoulos2023conformal},
and reweighting schemes that assign larger
weights to recent observations in the empirical distribution of conformity
scores \citep{barber2023conformal}.
Closer to our approach, \citet{cauchois2024robust} handle distribution shifts
by estimating their magnitude and by adjusting the targeted miscoverage levels accordingly.

\paragraph{Validity and something else.}
Recent work has aimed to extend the guarantees offered by ACI,
typically by offering strategies that guarantee
both validity and something else.
For instance, \citet{angelopoulos2025gradient}
introduce the notion of gradient equilibrium
for which ACI is a special case,
and show that the broader class of
strategies at gradient equilibrium
enjoy coverage properties.
\citet{ramalingam2025relationship}
investigate the relationship between pinball-loss regret
and validity and provide strategies controlling both notions.
Pinball-loss regret is frequently used as a proxy for efficiency,
but lacks interpretability in terms
of the average length of prediction intervals.

Other works focus more directly on efficiency,
measured by the average lengths of the prediction intervals
output.
\citet{zaffran2022adaptive} analyze the efficiency of
the original ACI procedure
in the exchangeable and autoregressive settings,
and show that it is generally suboptimal in terms of efficiency
(see Section~\ref{sec:exch-main} for a detailed discussion on their results).
In the adversarial setting, \citet{srinivas2025online}
establish impossibility results,
proving that no nontrivial efficiency
guarantees are attainable without additional assumptions.
This result motivates the consideration of the
assumption of $Q$--restricted opponent
introduced by \citet{bernstein2014opportunistic},
for which we build a meaningful theory of efficiency
under validity constraint.

The key tool to do so is related to repeated finite games
and is called approachability theory. We review it next.

\paragraph{Approachability theory.}
\citet{blackwell} introduced the concept
of approachability as the possibility,
for a learner, to force the average payoffs to converge
to a given target set $\cC$, irrespective of the opponent.
Approachability is a foundational tool in the analysis
of repeated finite games and characterizes tractable
online learning problems. It was used to show the
existence of
no-regret strategies
(\citealp{hart2000simple}, see also
\citealp{CBL03} and \citealp{abernethy2011blackwell})
and of calibrated forecasters
(see \citealp{foster1999proof}, and see also \citealp{mannor2010geometric}).

In the context of adaptive conformal inference,
\citet{marx2025calibrated} mention that
asymptotic validity can be seen as an approachability problem,
with no further analysis of efficiency.

Approachability is generally studied
through the lens of the primal condition,
which provides an explicit approachability strategy.
Our focus, however, lies on the dual condition instead,
and is a response-based approachability strategy.
Response-based strategies have been studied
by \citet{bernstein2015response} (see also \citealp{mannor2014approachability}),
and we rely especially on the opportunistic approachability theory developed
by~\citet{bernstein2014opportunistic}.
Opportunistic approachability
is a calibration-based approachability algorithm
(see \citealp{perchet2009calibration} for a general study of such algorithms)
allowing the learner to take advantage of potential restrictions in the adversary's play.

\subsection{Contributions and Outline}

Our contributions are twofold.

\paragraph{(i) A game-theoretical formulation of adaptive conformal inference [ACI].}
We first formulate ACI as a repeated game with continuous action sets and then
provide a reformulation thereof as a repeated game with finite action sets
and vector-valued payoffs that encode simultaneously
the validity and efficiency criteria.

\paragraph{(ii) Efficiency guarantees under validity constraints.}
We then state a strategy, which is an instance of the strategy by
\citet{bernstein2014opportunistic}, that achieves
the validity guarantee for any data (be it stochastic or
picked by a malicious opponent) while providing
a ``best of many worlds'' efficiency guarantee.
Indeed, while ignoring the degree of stochasticity of
the online data,
this strategy recovers the relevant guarantees
in exchangeable and adversarial settings,
as well as in intermediate regimes that arise
in typical applications such as the forecasting of time series.
\medskip

The rest of the paper is organized as follows.
Section~\ref{sec:setting}, introduces the setting and
reformulates ACI as a repeated game with finite action
sets and vector-valued payoffs.
Section~\ref{sec:objectives},
presents the validity and efficiency objectives
and provides informal statements of the main theoretical results.
Section~\ref{sec:main-results-OA}
is devoted to presenting and analyzing in
general the opportunistic approachability
strategy that achieves these objectives.
Section~\ref{ex:Q-restr} focuses on more specific examples
of settings encompassed by the framework introduced,
illustrating its practical relevance.
Section~\ref{sec:boaci-epscalibr}
mentions a computationally more tractable variant
of the strategy based on $\epsilon$-calibrated forecasters,
and establishes the corresponding theoretical guarantees.
Finally, the appendices provide the proofs of all
results stated in the main body, as well as some background
on (opportunistic) approachability and calibration.

\paragraph{Notation.}
For two integers $1 \leq n \leq m$, let $\llbracket n,m \rrbracket = \{n, n+1, \ldots, m\}$.
For an integer $n \geq 1$, let $[n] = \llbracket 1,n \rrbracket = \{1, \dots, n\}$.
We denote by $\lceil x \rceil$ the upper integer part of the real number $x$.
    The Lebesgue measure on $\mathbb{R}$ is denoted by $\Leb$. The Dirac mass on a point $z$ is denoted by $\bdelta_z$.
    The indicator function of an event $A$ is denoted by $\indic_A$.
    The distance over $\R^d$ is the Euclidean norm $\Arrowvert \,\cdot\, \Arrowvert$, and the corresponding inner
    product is denoted by~$\langle \,\cdot\, , \,\cdot\, \rangle$.
    The convex hull generated by a set $S$ is denoted by $\conv(S)$. The interior of a set $S$ is denoted by $\interior(S)$ and its closure by $\cl(S)$.
    For $u \in \R^d$ and a closed subset $S \subseteq \R^d$, the distance of $u$ to $S$ is defined by $d(u,S) =
    \smash{\displaystyle{\min_{s \in S} \Arrowvert u-s \Arrowvert}}$. A sequence $(u_t)_{t \geq 1}$ converges to $S$,
    which we denote $u_t \to S$, if $d(u_t,S) \to 0$.

    The set of all probability measures on a set $S$ is denoted by $\Delta(S)$.
    The total variation distance between two probability distributions $\S$ and $\V$ is denoted by $\dtv(\S,\V)$.
    When the underlying distribution is ambiguous, we write $\P_{s \sim \S}$ and $\E_{s \sim \S}$ to denote the probability
    and the expectation with respect to an independent draw $s$ from $\S$.
    We identify the set of signed distributions over
    sets of $d$ elements with the simplex of $\R^d$.

\section{First Contribution: \\
\phantom{2~}Adaptive Conformal Inference [ACI] as a Repeated Game} \label{sec:setting}

We (ultimately) work in a sequential prediction setting.
At each round $t \in [T]$, after observing
vector-valued covariates $x_t \in \X$,
where $\X \subseteq \R^p$,
the learner outputs a prediction set $C(x_t)$
for the unobserved scalar outcome $y_t \in \R$,
which is then revealed before proceeding to the next round.
A historical dataset $(x_t,y_t)_{t \in \cD}$,
indexed by a finite set $\cD$, is available
and used to perform conformal inference and
construct the prediction sets.
Note the different sets of indices, $\cD$
and $[T]$, for historical and test data.

We first recall two settings of conformal inference:
a classic batch setting known as split conformal prediction,
where prediction sets are actually constructed
simultaneously for the batch $[T]$ of test points,
and an online setting
known as adaptive conformal inference,
where prediction sets are produced sequentially as new outcomes are revealed.
Although the batch setting is not the focus of this paper,
we briefly recall it for the sake of a more pedagogical exposition.

\subsection{Batch Setting of Conformal Inference: Split Conformal Prediction [SCP]}
\label{sec:SCP}

    In split conformal prediction~\citep{vovk2005algorithmic,lei2018distribution},
    the historical data is split into a training set, indexed by $\Dtrain$
    and a calibration set, indexed by $\Dcalib$, of respective cardinalities
    $T_{\train}$ and $T_{\calib}$.
    A regression function $\widehat{\mu}$
    is learned on the training set while
    non-conformity scores $\{s_t : t \in \Dcalib \}$,
    measuring how well $y_t$ is predicted by $\widehat{\mu}(x_t)$,
    are computed on the calibration set.
    We focus on the standard choice of absolute residuals
    as non-conformity scores, which leads to conformal sets that are intervals.
    Thus, for all $t \in \Dcalib$,
    the non-conformity score is $s_t = |y_t - \widehat{\mu}(x_t)|$ and,
    assuming that the scores are almost surely distinct, we denote their order statistics by $s_{(1)} < \cdots < s_{(T_{\calib})}$.
    We further set $s_{(0)} = 0$ and $s_{(T_{\calib}+1)} = L$,
    where $L < \infty$ is a known upper bound on the absolute residuals
    such that $s_{(T_{\calib})} < L$; see Remark~\ref{rk:L}.

    We denote the batch of test points by
    $(x_t,y_t)_{t\in\Dtest}$, where $\Dtest=[T]$.
    For each test point, we define the corresponding
    non-conformity score as $s_t = |y_t-\widehat{\mu}(x_t)|$,
    following the same definition as for the calibration scores.

    For each $t \in [T]$ and for a given miscoverage rate $\alpha \in [0,1]$,
    the conformal prediction interval is then
	\[
	C_{1-\alpha}(x_t) = \Big[\widehat{\mu}(x_t) \pm  s_{\big(\lceil(T_{\calib}+1)(1-\alpha)\rceil \big)} \Big]\,.
	\]
	By construction, conformal prediction intervals are nested in $\alpha$, i.e., $1-\alpha \leq 1-\alpha^{\prime}$ implies $C_{1-\alpha}(x_t) \subseteq C_{1-\alpha^{\prime}}(x_t)$, a direct consequence of $s_{\big(\lceil(T_{\calib}+1)(1-\alpha)\rceil \big)} \leq s_{\big(\lceil(T_{\calib}+1)(1-\alpha^{\prime})\rceil \big)}$.

	\begin{fact}
    \label{fact:2}
    For all $x \in \X$,
    the family $\bigl( C_{1-\alpha}(x) \bigr)_{\alpha \in [0,1]}$ of conformal intervals
    has $C_0(x) = \emptyset$ as smallest element, with length $\Leb\big(C_0(x) \big) = 0$,
    and $C_1(x) = \big[\widehat{\mu}(x) \pm  L\big]$ as largest element,
    with length $\Leb\big(C_1(x)\big) = 2L$.
    These two extreme cases
    corresponding to $1-\alpha\in\{0,1\}$
    will play a role in the subsequent sections.
    \end{fact}
	
 We can now state the central result from \cite{vovk2005algorithmic}, that holds under
 a stochastic assumption: if calibration scores $(s_t)_{t \in \Dcalib}$ and test scores $(s_t)_{t \in [T]}$ are exchangeable\footnote{Random variables $U_1,\ldots,U_n$
 are said exchangeable if for all permutations $\sigma : [n] \to [n]$,
 the $n$--tuples $(U_1,\ldots,U_n)$ and $(U_{\sigma(1)},\ldots,U_{\sigma(n)})$ have
 the same distribution.} (in particular, if the corresponding pairs
 of covariates--outcomes are exchangeable), the above procedure guarantees that for all $\alpha \in [0,1]$,
	\begin{equation}
    \label{eq:guar-SCP}
	\forall t \in [T] \, ,\qquad \mathbb{P}\bigl( y_t \in C_{1-\alpha}(x_t) \bigr) \geq 1-\alpha\,.
    \end{equation}
    Actually, up to some minor assumption of calibration scores being all distinct, the
    above probability lies in the interval between $1-\alpha$ and $1-\alpha+1/(T_{\calib}+1)$,
    that is, is approximately equal to $1-\alpha$.

    \begin{remark} \label{rk:L}
       The literature typically sets $s_{(T_{\calib}+1)} = +\infty$, which is fine to study validity
       but complicates efficiency analyses \citep{zaffran2022adaptive}. We rather assume
       that there exists a known constant $L$ such that, for all $t \in \Dcalib \cup [T]$, one has $\mathbb{P}\big( |y_t - \widehat{\mu}(x_t)| < L \big) = 1$.
       In practical applications, \citet{zaffran2022adaptive} suggest to set $L = 2 \max \{s_t : t \in \Dcalib \}$.
    \end{remark}

\subsection{Online Setting of Conformal Inference: Adaptive Conformal Inference [ACI] \hspace{-2cm} \ }

In the online setting, no stochastic assumption is made on the
calibration and test scores,
and the learner can adapt the choice of prediction intervals
sequentially based on past observations.

A natural extension of the setting of conformal inference with these
two characteristics was introduced by \citet{gibbs2021adaptive} under the name
of adaptive conformal inference.
For theoretical purposes, we consider the simplest
online setting of \citet{gibbs2021adaptive},
and assume that the training set and calibration set
are fixed, exactly as in SCP (see Section~\ref{sec:SCP}).

At each round, these fixed sets induce a
family of nested prediction intervals of the form
\begin{equation}
\label{eq:C-int-ACI}
C_{1-\alpha}(\,\cdot\,) = \Big[\widehat{\mu}(\,\cdot\,) \pm  s_{\big(\lceil(T_{\calib}+1)(1-\alpha)\rceil \big)} \Big]\,.
\end{equation}
Then, test data $(x_t, y_t)_{t \in [T]}$ is handled in an online fashion: at round $t \in [T]$,
the covariates $x_t$ are revealed, the learner picks some miscoverage rate $\alpha_t \in [0,1]$
based on past information and outputs the conformal set $C_{1-\alpha_t}(x_t)$.
After that, the outcome $y_t$ is revealed and the learner observes whether $y_t \in C_{1-\alpha_t}(x_t)$
or $y_t \not\in C_{1-\alpha_t}(x_t)$.

\begin{remark}
More complex settings of online conformal prediction exist,
where the training and calibration sets, and thus the form of the conformal intervals,
are updated over time, which is preferred in practice; see \citet{gibbs2024conformal}.
\end{remark}

No assumption is made on the generating mechanism
of test data $(x_t, y_t)_{t \in [T]}$.
It can even be thought of as being picked
by a malicious adversary.
The probabilistic guarantee~\eqref{eq:guar-SCP}
holding for each test point is thus replaced
by an average guarantee over all test points,
as stated next.

The primary objective of the learner is to ensure coverage for a fraction about $1-\alpha$ of the test data points---a criterion called
\emph{validity}:
\begin{equation}
\label{eq:guar-ACI}
\mbox{ensure that} \qquad
\frac{1}{T}\sum_{t=1}^T \indic_{y_t \in C_{1-\alpha_t}(x_t)} \qquad \mbox{is larger than about} \quad 1-\alpha.
\end{equation}
In this article, we will also be interested in a secondary objective consisting of
keeping the intervals $C_{1-\alpha_t}(x_t)$ as small as possible on average---a criterion called \emph{efficiency}.
This will of course call for a validity criteria
of the form ``about $1-\alpha$'' rather than just
``larger than about $1-\alpha$'' in~\eqref{eq:guar-ACI}.

In a nutshell, in the version of online conformal
prediction described above,
the learner picks a sequence of data-dependent miscoverage levels $\alpha_t \in [0,1]$
to ensure approximate coverage on average.
This setting is now commonly referred to as adaptive conformal inference [ACI], although the term originally denotes
the specific algorithm of \citet[restated in Example~\ref{ex:ACI} below]{gibbs2021adaptive}.

\subsection{Adaptive Conformal Inference [ACI] as a Repeated Finite Game}
\label{sec:ACI-repeated-game}

Our first main result is a game-theoretic interpretation of ACI as a repeated finite game.
To do so, we first formulate ACI as a repeated game with continuous action sets
and then construct a reformulation thereof as a repeated game with finite action sets.

\paragraph{Direct formulation of ACI as a game with continuous action sets.}
    We first see ACI as a game where, at stage $t \geq 1$, the learner picks a prediction set $C_{1-\alpha_t}(x_t)$,
    or alternatively, a miscoverage level $\alpha_t \in [0,1]$, while the opponent picks the observation~$y_t \in \R$.
    In terms of validity, the question is whether $y_t \in C_{1-\alpha_t}(x_t)$ or $y_t \not\in C_{1-\alpha_t}(x_t)$.
    In terms of efficiency, we evaluate the prediction set through its length $\Leb\,\bigl(C_{1-\alpha_t}(x_t)\bigr)$.

    \begin{remark}
    Note that by construction, the length is independent of the covariate $x_t \in \X$, which is why we will simply write
    $\Leb\,(C_{1-\alpha_t})$ in the sequel.
    \end{remark}

\paragraph{Two helpful properties to reduce action sets to finite sets.}
    We next state two properties that are helpful to reformulate this game with continuous action sets into a finite game.

    We first show that the discrete nature of the possible lengths of the prediction sets, dictated by the scores in the calibration set
    (fixed throughout),
    entails that the learner actually picks miscoverage levels $a_t$ in the following discrete set $\cA$,
    instead of general miscoverage levels $\alpha_t \in [0,1]$:
    \[
    \cA = \biggl\{ \frac{k}{T_{\calib}+1} : k \in \llbracket 0, \, T_{\calib}+1 \rrbracket \biggr\}\,.
    \]
    Indeed, we define the rounding function $r:[0,1] \to \cA$ of $[0,1]$ onto the finite grid $\cA$ as
    \[
    r : \alpha \in [0,1] \longmapsto 1-\dfrac{\big\lceil(T_{\calib}+1)(1-\alpha)\big\rceil}{T_{\calib}+1}\,,
    \]
    i.e., $r(\alpha)$ is the largest value inferior than or equal to $\alpha$ in $\cA$. The following
    property shows that we can assume, with no loss of generality, namely, up to replacing
    $\alpha_t$ by $a_t = r(\alpha_t)$, that the learner picks miscoverage levels $a_t \in \cA$.
    The property holds by the construction~\eqref{eq:C-int-ACI} of prediction intervals based on order statistics of non-conformity scores.

    \begin{property}\label{prop:rounding}
    For all $\alpha \in [0,1]$, for all $x \in \X$, we have $C_{1-\alpha}(x) = C_{1-r(\alpha)}(x)$.
    \end{property}

    Second, we show that it is equivalent for the opponent to choose an outcome $y_t \in \R$ or some action $b_t$
    in a finite set $\cB$ to be specified.
    Indeed, the question is whether the outcome $y_t$ picked by the opponent belongs, or not, to $C_{1-a_t}(x_t)$.
    The property stated in the next lemma, introduced by \citet{gibbs2024conformal}, and slightly reworked here, shows that given $\widehat{\mu}(x_t)$
    and the historical data,
    the observation $y_t$ can be replaced by an action $b_t \in \cB$,
    where
    \[
    \cB = \biggl\{ \frac{k}{T_{\calib}+1} : k \in \llbracket 1, \, T_{\calib}+1 \rrbracket \biggr\} = \cA \setminus \{ 0 \}\,.
    \]
    Note that $\cB$ does not contain the value $0$, unlike $\cA$.
	
    \begin{lemma}[{from \citealp{gibbs2024conformal},
    with a slight adaptation}]\label{lm:bt} The following equalities define a scalar $b_t \in \cB$,
    thus $b_t > 0$:
    \begin{align*}
        b_t = \sup\bigl\{\beta \in [0,1] : y_t \in C_{1-\beta}(x_t)\bigr\} = \min\bigl\{\beta \in \cB : y_t \notin C_{1-\beta}(x_t)\bigr\}
        \,.
    \end{align*}
    As a consequence, $\indic_{y_t \notin C_{1-a_t}(x_t)} = \indic_{b_t \leq a_t}$.
    \end{lemma}

    \begin{proof}
        Fix $t \geq 1$ and define $b_t$ as the supremum of the set $\{\beta \in [0,1] : y_t \in C_{1-\beta}(x_t)\}$,
        which is non-empty since $y_t \in C_1(x_t)$ by Fact~\ref{fact:2}.
        That conformal prediction sets are nested ensures that
        \begin{equation}
        \label{eq:sup-inf}
        b_t = \sup\bigl\{\beta \in [0,1] : y_t \in C_{1-\beta}(x_t)\bigr\} = \inf\bigl\{\beta \in [0,1] : y_t \notin C_{1-\beta}(x_t)\bigr\} \,,
        \end{equation}
        and we show next that the infimum is a minimum.

        First, if $b_t = 1$, then $b_t \in \cB$; as $y_t \notin C_{1-1}(x_t) = \emptyset$, the infimum in~\eqref{eq:sup-inf} is a minimum,
        at the value $\beta = 1$.

        Second, if $b_t < 1$, then there exists $\beta \in (b_t,1]$ such that $r(\beta) = r(b_t)$.
        On the one hand, by Property~\ref{prop:rounding}, we have $C_{1-\beta}(x_t) = C_{1-b_t}(x_t)$.
        On the other hand, since $\beta > b_t$ and by definition of $b_t$ as a supremum,
        $y_t \notin C_{1-\beta}(x_t)$. Thus, $y_t \notin C_{1-b_t}(x_t)$, which proves
        that the infimum in~\eqref{eq:sup-inf} is a minimum, at the value $\beta = b_t$.

        Finally, we show that $b_t \in \cB$.
        Note that $b_t > 0$ since $y_t \notin C_{1-b_t}(x_t)$ whereas $y_t \in C_{1}(x_t)$.
        On the one hand, the characterization of $b_t > 0$ as an infimum implies that $y_t \in C_{1-\beta}(x_t)$ for all $0 \leq \beta < b_t$.
        On the other hand, the rounding function is such that $0 \leq r(b_t) \leq b_t$ and $C_{1-b_t}(x_t) = C_{1-r(b_t)}(x_t)$.
        Therefore, using again that $y_t \notin C_{1-b_t}(x_t)$, we obtain $b_t = r(b_t)$,
        and thus $b_t \in \cB$, since $\cB = r\bigl([0,1]\bigr) \setminus \{0\}$.

        The consequence stated at the end of the lemma follows from the characterization
        of $b_t$ as a minimum.
    \end{proof}

    The lemma above allows to rewrite the update scheme of the ACI algorithm from \citet{gibbs2021adaptive} in terms of the player's and opponent's actions as follows.

    \begin{example}\label{ex:ACI} The update scheme of the ACI algorithm from \citet{gibbs2021adaptive} can be written as
	\begin{equation}\label{eq:ACI}
		\alpha_{t+1} = \alpha_t - \gamma \big(\indic_{b_t \leq a_t}-\alpha\big)
		\quad \mbox{and} \quad
		a_{t+1} =
		\begin{cases}
			\quad \quad r(\alpha_{t+1}) \,, & \mbox{if $\alpha_{t+1}\in[0,1]$}, \\
			\min\bigl\{1, \max\{\alpha_{t+1},0\}\bigr\} \,, & \mbox{otherwise},
		\end{cases}
	\end{equation}
	where $\alpha_{t+1}$ serves as an intermediate value for choosing $a_{t+1} \in \cA$, and $\gamma > 0$ is a fixed step size.
    \end{example}

    \paragraph{Reformulation of ACI as a repeated finite game.}
    Based on all elements above, we now reformulate ACI as the following repeated finite game.
    Note that the reward obtained is formed by a pair in $\{0,1\} \times [0,2L]$, as we are interested
    in both validity and efficiency guarantees.

    \begin{theorem}[first main result] \label{th:finite-game}
    The setting of adaptive conformal inference [ACI] may be formulated as a repeated finite game,
    where at each round $t \geq 1$, the learner picks $a_t \in \cA$, the opponent picks $b_t \in \cB$, the
    learner receives the reward $m(a_t,b_t)$ given by the vector-valued payoff function $m : \cA \times \cB \to \{0,1\} \times [0,2L]$
    defined by
    \begin{equation}
    \label{eq:finite-game}
    m: (a,b) \in \cA\times \cB\longmapsto m(a,b) = \begin{pmatrix} \indic_{b \leq a } \\ \Leb\,(C_{1-a}) \end{pmatrix}.
    \end{equation}
    \end{theorem}

\subsection{Links Between the Batch and Online Settings (Between SCP and ACI)}

    The above formalism is suited for adversarial sequences of test observations $(x_t, y_t)_{t \in [T]}$ but
    it also encompasses the exchangeable setting:
    the latter corresponds to an opponent player picking actions $b_t$ uniformly at random in $\cB$, as stated next.
    We denote by $\bq_{\unif}$ the uniform distribution on $\cB$:
    \[
    \bq_{\unif} = \frac{1}{T_{\calib}+1} \sum_{b \in \cB} \bdelta_b\,.
    \]

\begin{lemma} \label{lm:qunif}
    If the calibration scores $(s_k)_{k \in \Dcalib}$ and the test score $s_t$ are exchangeable
    and almost-surely distinct,
    then $b_t \sim \bq_{\unif}$.
\end{lemma}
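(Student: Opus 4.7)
The plan is to turn $b_t$ into an explicit, bijective function of the rank of $s_t$ in the combined pool of scores, and then invoke exchangeability.

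First, I would unfold the definition of $C_\beta(x_t)$ to rewrite the event $\{y_t \in C_\beta(x_t)\}$ purely in terms of conformity scores: it is equivalent to $s_t \leq s_{(k^*(\beta))}$, where $k^*(\beta) = \lceil (T_{\calib}+1)(1-\beta) \rceil$. Since $k^*$ is a step function taking the constant value $T_{\calib}+1-k$ on the interval $[k/(T_{\calib}+1),\,(k+1)/(T_{\calib}+1))$ for $k \in \llbracket 0, T_{\calib}+1\rrbracket$, the set of $\beta$'s for which $y_t \in C_\beta(x_t)$ is a half-open interval of the form $[0,(T_{\calib}+2-R)/(T_{\calib}+1))$, where $R$ is defined so that $s_{(R-1)} < s_t < s_{(R)}$ (the strict inequalities being legitimate on the event of a.s.\ distinctness and $s_t < L$). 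Taking the supremum from Lemma~\ref{lm:bt} then yields the key identity
\[
b_t = \frac{T_{\calib}+2-R}{T_{\calib}+1}\,.
\]

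Second, I would note that $R$ is exactly the rank of $s_t$ inside the combined multiset $\{s_k : k\in\Dcalib\} \cup \{s_t\}$ of $T_{\calib}+1$ scores. Under the assumption that calibration scores are a.s.\ distinct, and with $s_t < L$ a.s.\ together with the convention $s_{(0)}=0$, the rank $R$ is well defined and takes values in $\llbracket 1, T_{\calib}+1 \rrbracket$. The map $R \mapsto (T_{\calib}+2-R)/(T_{\calib}+1)$ is a bijection from $\llbracket 1, T_{\calib}+1 \rrbracket$ onto $\cB$.

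Third, I would invoke the exchangeability hypothesis: the joint distribution of $(s_k)_{k\in\Dcalib} \cup \{s_t\}$ is invariant under permutations, and since ties occur with probability zero, the rank $R$ of $s_t$ is uniformly distributed on $\llbracket 1, T_{\calib}+1 \rrbracket$. Pushing this uniform law through the bijection above immediately gives $b_t \sim \bq_{\unif}$.

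The proof is essentially bookkeeping; the main subtlety, which I would take a little care with, is handling the boundary conventions $s_{(0)}=0$ and $s_{(T_{\calib}+1)}=L$ so that $R$ is genuinely defined on the full range $\llbracket 1, T_{\calib}+1 \rrbracket$, and keeping the ceiling arithmetic consistent with the discrete grid $\cB$ (in particular checking that no value of $\beta$ escapes the half-open interval description).
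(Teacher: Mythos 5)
Your proposal is correct and follows essentially the same route as the paper's proof: both identify the event $\{b_t = b\}$ with the event that $s_t$ falls into a specific one of the $T_{\calib}+1$ gaps between consecutive calibration order statistics (equivalently, that its rank in the pooled sample takes a specific value), and then invoke exchangeability plus almost-sure distinctness to conclude that each of these events has probability $1/(T_{\calib}+1)$. Your explicit bijection $R \mapsto (T_{\calib}+2-R)/(T_{\calib}+1)$ and the ceiling bookkeeping check out against the paper's computation.
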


The proof of Lemma~\ref{lm:qunif} relies on the following
result, not requiring any stochastic assumption.

    \begin{lemma} \label{lm:factbt}
    When the calibration scores $(s_k)_{k \in \Dcalib}$ are all distinct,
    the scalar $b_t$ defined in Lemma~\ref{lm:bt} satisfies
    \begin{align*}
    \forall b \in \cB, \qquad \{b_t = b\} = \Big\{s_{\big( (T_{\calib}+1)(1-b)\big)} < s_t \leq s_{\big( (T_{\calib}+1)(1-b) +1\big)}\Big\} \,.
    \end{align*}
    \end{lemma}

    \begin{proof}
    The assumption of distinct values of the calibration scores is used to avoid ties:
    it entails that for all $b \in \cB$,
    \[
    \big\{y_t \notin C_{1-b}(x_t)\big\} = \Big\{s_t > s_{\big( (T_{\calib}+1)(1-b)\big)}\Big\} \,.
    \]
    Thus, by the definition of $b_t$ as some minimum in Lemma~\ref{lm:bt} (and since $0$ does not belong to $\cB$),
    for all $b \in \cB$,
    \begin{align*}
    \bigl\{b_t = b\bigr\} &= \bigl\{ y_t \notin C_{1-b}(x_t) \bigr\} \setminus \bigl\{ y_t \notin C_{1-b+1/(T_{\calib}+1)}(x_t) \bigr\} \\
    &= \Big\{s_{\big( (T_{\calib}+1)(1-b)\big)} < s_t \leq s_{\big( (T_{\calib}+1)(1-b) +1\big)}\Big\} \,. \\[-1.5cm]
    \end{align*}
    \
\end{proof}

We are now able to prove Lemma~\ref{lm:qunif}. \medskip

\begin{proof} \textbf{of Lemma~\ref{lm:qunif}}
    By exchangeability and the assumption of almost-surely distinct scores, $s_t$ is equally likely to fall into any of the $T_{\calib} +1$ intervals defined by the scores computed on the calibration set:
    \begin{align*}
    \forall k \in \llbracket 0 , T_{\calib} \rrbracket, \qquad
    \mathbb{P}\big(s_t \in (s_{(k)}, s_{(k+1)}]\big) =
    \mathbb{P}\big(s_t \in (s_{(k)}, s_{(k+1)})\big) = \frac{1}{T_{\calib}+1} \,.
    \end{align*}
    Now, Lemma~\ref{lm:factbt} exactly ensures that each event $\{b_t = b\}$, where $b \in \cB$, is of the form
    of the events considered above,
    thus these $T_{\calib}+1$ events are equally likely.
\end{proof}

\section{Second Contribution: Efficiency under a Validity Constraint}
\label{sec:objectives}

We now leverage the reduction of ACI to the repeated finite game of Theorem~\ref{th:finite-game}
involving payoffs in terms of coverage and length of conformal intervals.

Our aims are twofold: \eqref{eq:coverage} achieve validity i.e.,
$y_t \in C_{1-a_t}(x_t)$ at least $100(1-\alpha)\%$ of the time
on average and in the limit, as stated earlier in~\eqref{eq:guar-ACI};
and \eqref{eq:length} produce efficient prediction sets,
i.e., prediction sets with small lengths,
on average and in the limit.

\paragraph{Formal statement of the objectives.}
These two objectives are now formally stated
based on a target length $\ell_\alpha$ as
	\begin{align}
	\mbox{\underline{Validity}: \qquad ensure} \qquad & \limsup_{T \to \infty} \frac{1}{T}\sum_{t=1}^T \indic_{y_t \notin C_{1-a_t}(x_t)}  = \limsup_{T \to \infty} \frac{1}{T}\sum_{t=1}^T \indic_{b_t \leq a_t} \leq \alpha \tag{$\star$} \,; \label{eq:coverage} \\
	\mbox{\underline{Efficiency}: \qquad ensure} \qquad & \limsup_{T \to \infty} \frac{1}{T}\sum_{t=1}^T \Leb\big(C_{1-a_t} \big) \leq \ell_\alpha \tag{$\star\star$} \label{eq:length} \, .
	\end{align}
The validity objective~\eqref{eq:coverage} is expressed
in terms of an empirical miscoverage smaller than $\alpha$,
which corresponds to an empirical coverage larger than $1-\alpha$.

We strive for a target length $\ell_\alpha$ as small as possible. This target will be smaller when stronger stochastic assumptions
are made on the feature-outcome pairs (or equivalently, on the scores).

\paragraph{Reformulation of these objectives as an approachability problem.}
With the vector-valued payoff function introduced in Theorem~\ref{th:finite-game}, we may reformulate
the objectives \eqref{eq:coverage} and \eqref{eq:length} as approaching
the closed convex set $[0,\alpha] \times [0,\ell_\alpha]$, i.e.,
\begin{multline}
\label{eq:appr-obj}
\mbox{\underline{Validity and Efficiency}:} \\ \mbox{ensure} \qquad
\frac{1}{T}\sum_{t=1}^T \begin{pmatrix} \indic_{b_t \leq a_t} \\ \Leb\,(C_{1-a_t}) \end{pmatrix}
\longrightarrow [0,\alpha] \times [0,\ell_\alpha] \qquad \mbox{as $T \to \infty$}, ~~~~ \tag{$\star\,\mbox{\&}\star\!\star$}
\end{multline}
in the sense that $d\bigl(\overline{m}_T, [0,\alpha] \times [0,\ell_\alpha] \bigr) \to 0$ as $T \to \infty$, where
\[
\overline{m}_T \eqdef
\frac{1}{T}\sum_{t=1}^T m(a_t,b_t) \eqdef \frac{1}{T}\sum_{t=1}^T \begin{pmatrix} \indic_{b_t \leq a_t} \\ \Leb\,(C_{1-a_t}) \end{pmatrix}\,.
\]
Convergences of the above form are called approachability results
and were first studied by \citet{blackwell}, see Appendix~\ref{sec:Blk}.


\paragraph{The question at hand, and a tool to answer it.}
The question at hand is therefore what target lengths $\ell_\alpha$
may be achieved depending on the data-generation mechanisms
considered.
We first discuss two extreme of such mechanisms: (i) the favorable exchangeable setting
of Definition~\ref{def:exchangeablesetting}, and (ii) the worst-case adversarial setting
of Definition~\ref{def:advsetting}.
We next consider a general framework based on the notion of $Q$\emph{--restricted opponent's play}, introduced by \citet{bernstein2014opportunistic}
and recalled in Definition~\ref{def:Q-restricted},
which encompasses practically relevant intermediate settings.

\citet{bernstein2014opportunistic} provide a theory called opportunistic approachability
to identify target sets that are approachable under the considered $Q$--restrictions; this theory will be the
key tool used to answer the question of setting the target lengths $\ell_\alpha$.

\subsection{Several Degrees of Stochasticity for the Opponent}
\label{sec:sev-degr-stoch}

We first define the two extreme data-generating mechanisms.
The assumption of exchangeability is a typical assumption
in the batch conformal prediction setting of Section~\ref{sec:SCP}.
It entails that the actions $b_t$ of the opponent in the
associated repeated finite game of Theorem~\ref{th:finite-game} are uniformly distributed.

    \begin{definition}[exchangeable setting] \label{def:exchangeablesetting}
    The exchangeable setting corresponds to the fact that at each round $t \geq 1$, calibration scores $(s_k)_{k \in \Dcalib}$ and the test score $s_t$ are exchangeable and almost-surely distinct.
    By Lemma~\ref{lm:qunif}, we have $b_t \sim \bq_{\unif}$.
    \end{definition}

The adversarial setting refers to not assuming anything on the test observations,
which, in turn, corresponds to not assuming anything on the actions $b_t$
in the associated repeated finite game of Theorem~\ref{th:finite-game}.

    \begin{definition}[adversarial setting] \label{def:advsetting}
    The adversarial setting corresponds to the fact that at each round $t \geq 1$, the opponent freely selects the test observations $(x_t,y_t)$, possibly based on the outcomes of past rounds (with the only constraint that the resulting test score $s_t$ lies in $[0,L]$).
    Therefore, nothing can be assumed on the test score $s_t \in [0,L]$
    or on the opponent's action $b_t$.
    \end{definition}

A framework interpolating the previous two settings
was introduced by \citet{bernstein2014opportunistic}, in the form of
$Q$--restricted opponent's play.
In the definition stated next,
$\bq_t$ denotes the distribution of the outcome $b_t$
in the associated repeated finite game of Theorem~\ref{th:finite-game}.
This distribution may depend on the results of the past rounds:
$\bq_t$ is a predictable random variable.

\begin{definition}[setting of $Q$--restricted opponent's play]
    \label{def:Q-restricted}
    Fix a subset $Q \subseteq \Delta(\cB)$.
    Let $\bq_t$ denote the distribution of $b_t$ at round $t \geq 1$,
    given the results of past rounds.
    We say that an opponent's play is $Q$--restricted whenever it satisfies
    \[
    \lim_{T \to \infty} \frac{1}{T}\sum_{t=1}^T d(\bq_t,Q) = 0 \as
    \]
    \end{definition}

Note that $Q$ is generally unknown to the learner, and that the Cesàro-mean convergence of Definition~\ref{def:Q-restricted} is a weaker condition than the convergence of $d(\bq_t,Q)$ to zero (and, in particular, the fact that $\bq_t \in Q$ for all $t \geq 1$).

The exchangeable setting and the adversarial setting
are two extreme cases thereof of
Definition~\ref{def:Q-restricted}, see Examples~\ref{ex:echangeable} and~\ref{ex:adv} below.

    \begin{example} \label{ex:echangeable}
    In the exchangeable setting, the opponent's play is, in particular, $\bigl\{\bq_{\unif}\bigr\}\!$--res\-tric\-ted, where $\bq_{\unif}$ is the uniform distribution on $\cB$ (see Lemma~\ref{lm:qunif}).
    Actually, $\bigl\{\bq_{\unif}\bigr\}\!$--res\-tric\-tion may also be achieved when the exchangeability assumption
    between the calibration scores and the test scores $s_t$ is violated for a sublinear number of rounds $t$.
    \end{example}

    \begin{example} \label{ex:adv}
    The adversarial setting imposes no condition on the opponent, which corresponds to a $\Delta(\cB)$--restricted player.
    \end{example}

Example~\ref{ex:adv} above explains why the assertion that the opponent's play is $Q$--restricted for some unknown $Q \subseteq \Delta(\cB)$
is not a real assumption: this is always the case. Sometimes (e.g., in Example~\ref{ex:echangeable}), this set $Q$ is much smaller than the set $\Delta(\cB)$
of all distributions.
Section~\ref{ex:Q-restr} will provide further examples of realistic small sets~$Q$.

\subsection{Informal Statements of Validity--Efficiency Results Achieved}

We state informally weaker results that the ones achieved in Section~\ref{sec:main-results-OA}:
here, convergences of $\overline{m}_T$, i.e., empirical average miscoverages and lengths,
to sets of the form $[0,\alpha] \times [0,\ell_{\alpha,Q}]$,
instead of the more precise convergences to subsets $\cC_{\alpha,Q}$ thereof as shown in Section~\ref{sec:main-results-OA}.

\paragraph{Achievable average length under a $Q$--restriction.}
Given a distribution $\bq = (q_b)_{b \in \cB}$ in $\Delta(\cB)$,
we denote by $\quant(\alpha,\bq)$ the largest action in $\cA$ satisfying, on average over $\bq$, the validity condition,
i.e., with miscoverage at most $\alpha$:
\begin{align}
\quant(\alpha,\bq) = \max\biggl\{ a \in \cA: \sum_{b \in \cB} q_b \, \indic_{b \leq a} \leq \alpha\biggr\} \label{eq:quant} \,.
\end{align}

\paragraph{Target length: worst-case length under $Q$.}
Conceptually, a natural target would be the worst-case length for a valid prediction set associated with a mixed action in $Q$, namely,
\[
\max_{\bq \in Q} \Leb\,\bigl(C_{1-\quant(\alpha,\bq)}\bigr) \,.
\]
However, potential discontinuities of $m^{\star}$ on the boundary of $Q$ may arise, which is why,
for technical reasons, the target considered is rather
(see Lemma~\ref{lm:targetgeneral})
\[
\ell_{\alpha,Q} = \lim_{\vphantom{d(\bq,Q)}\eta \downarrow 0} \max_{\,\bq\,:\,d(\bq,Q) \leq \eta}
\Leb\,\bigl(C_{1-\quant(\alpha,\bq)}\bigr) \,,
\]
where the maximum is taken over all distributions $\bq \in \Delta(\cB)$
in an $\eta$--neighborhood of $Q$.
When there is no ambiguity,
we suppress the statement $\bq \in \Delta(\cB)$
in the maximum,
as it is understood throughout.

Section~\ref{sec:main-results-OA} will establish formally approachability results
that are stronger than convergence to $[0,\alpha] \times [0, \ell_{\alpha,Q}]$
(the convergence will be to a smaller target denoted by $\cC_{\alpha,Q}$);
but for now, we state informally the former convergence, together
with its implications in various scenarios.

\begin{informal}[second main series of results]
\label{inf:2}
The BO-ACI strategy defined in Section~\ref{sec:main-results-OA},
tuned with $\alpha \in (0,1)$, ensures that
for all convex subsets $Q \subseteq \Delta(\cB)$,
for all $Q$--restricted opponents in the sense of
Definition~\ref{def:Q-restricted},
\[
\overline{m}_T = \frac{1}{T}\sum_{t=1}^T \begin{pmatrix} \indic_{b_t \leq a_t} \\ \Leb\,(C_{1-a_t}) \end{pmatrix}
\longrightarrow [0,\alpha] \times [0,\ell_{\alpha,Q}] \qquad \mbox{a.s., \quad as $T \to \infty$;}
\]
put differently, this strategy satisfies the validity~\eqref{eq:coverage} and efficiency~\eqref{eq:length}
objectives, with a target length $\ell_{\alpha,Q}$ whenever the opponent is $Q$--restricted.

This target length equals, for instance, \smallskip
\begin{itemize}
\item $\ell_{\alpha,\{\bq_{\unif}\}} = \Leb\,\bigl(C_{1-\alpha}\bigr)$
when the opponent is $\bigl\{\bq_{\unif}\bigr\}$--restricted and when $\alpha \not\in \cA$,
which is, in particular, the case when data is exchangeable (see Example~\ref{ex:echangeable}); \smallskip
\item $\Leb\,\bigl(C_{1-\alpha+1/(T_{\calib}+1)}\bigr)$ when data is almost exchangeable
and possibly comes with dependencies,
with a level of non-exchangeability that is sufficiently small; \smallskip
\item $\Leb\big(C_{1-a^{\S,\V}}\big)$ in the single shift distribution setting,
when test scores are i.i.d.\ according to a distribution $\V$
while calibration scores were i.i.d.\ according to a distribution $\S$, and where
$\bq^{\S,\V}$ denotes the distribution induced on the $b_t$ and $a^{\S,\V} = \quant\bigl(\alpha,\bq^{\S,\V}\bigr)$ is
the largest element $a \in \cA$ such that $C_{1-a}$ satisfies the $1-\alpha$ validity condition on average
under $\bq^{\S,\V}$; \smallskip
\item $\ell_{\alpha,\Delta(\cB)} = 2L$ in the adversarial setting
of Definition~\ref{def:advsetting} and Example~\ref{ex:adv}.
\end{itemize}
\end{informal}

\begin{remark}
\label{rk:Q-prim-strat}
We insist again that the considered strategy is unaware of the value of $Q$
and even of the existence of a potential $Q$--restriction,
but turns out to exploit it if it holds. This justifies
the name of ``opportunistic approachability'' for the underlying
theory used (see details in Appendix~\ref{sec:opp-app}).

If instead $Q$ were known to the learner,
one may want to adapt the classic, primal, approachability
strategy to directly approach $\C_Q$, and not only $\C$:
we perform this in Appendix~\ref{app:Qprimal}
for the sake of completeness, and also discuss the strategy thus obtained
in the exchangeable case, where $Q = \{ \bq_{\unif} \}$.
\end{remark}

\paragraph{Outline of the rest of the article.}
Section~\ref{sec:main-results-OA} proves the first part
of Informal Result~\ref{inf:2}, namely, the general convergence
result. Section~\ref{ex:Q-restr} provides concrete and realistic
scenarios of $Q$--restrictions, together with the computation of the associated
target length $\ell_{\alpha,Q}$; put differently,
Section~\ref{ex:Q-restr} goes over the second part
of Informal Result~\ref{inf:2}.

\section{Application of Approachability Theory under $Q$--Restricted Play \\ \phantom{4~}(a.k.a.\ Opportunistic Approachability)}
\label{sec:main-results-OA}

This section is devoted to formally state and prove the first part of
Informal Result~\eqref{inf:2}, consisting of the general convergence result.

We resort to the general theory of approachability under $Q$--restricted play from \citet{bernstein2014opportunistic},
also known as opportunistic approachability, which is recalled in Appendix~\ref{sec:opp-app}.
In this section, we restate specifically for ACI the results that can be achieved
(and provide in appendix short and direct proofs thereof, tailored to the specific
setting considered and bypassing some technical considerations needed for the general
theory by \citealp{bernstein2014opportunistic}).
To do so, we define in Section~\ref{sec:def-cC-alpha-Q} the target set
$\cC_{\alpha,Q}$ to be approached, we
provide in Section~\ref{sec:calibr} some reminder
on calibrated forecasters, which are used as a subroutine
for the main strategy, introduced and studied in
Section~\ref{sec:strat-BOACI}.

\subsection{Definition of Target Sets $\cC_{\alpha,Q}$ Tailored to $Q$--Restricted Opponents}
\label{sec:def-cC-alpha-Q}

\paragraph{The set $\cC_\alpha = [0,\alpha] \times [0,2L]$ is always approachable.}
The starting point is to note that we may always ensure validity~\eqref{eq:coverage}, e.g., by alternating
between $a_t = 0$ a fraction $1-\alpha$ of the time and $a_t = 1$ a fraction $\alpha$ of the time.
Given the boundedness assumptions issued (see Remark~\ref{rk:L}),
setting $\ell_\alpha = 2L$ corresponds to ignoring the efficiency constraint~\eqref{eq:length}.
Thus, the target set $\cC_\alpha = [0,\alpha] \times [0,2L]$ can always be approached
in the sense of~\eqref{eq:appr-obj}.

\paragraph{More ambitious target sets $\cC_{\alpha,Q}$ based on $Q$--restrictions.}
Now, the point of opportunistic approachability
is to design a general strategy of the learner not only approaching a given approachable set,
like $\cC_\alpha$, but actually approaching a smaller target set $\cC_{\alpha,Q} \subseteq \C_{\alpha}$ whenever the opponent is $Q$--restricted, with $Q \subseteq \Delta(\cB)$ being unknown to the learner.

More formally, we consider the bilinear extension of the vector payoff function $m$ defined in Theorem~\ref{th:finite-game}:
for all probability distributions $\bp = (p_a)_{a \in \cA} \in \Delta(\cA)$ and $\bq = (q_b)_{b \in \cB} \in \Delta(\cB)$,
\[
m(\bp,\bq) = \sum_{a\in \cA}\sum_{b\in \cB} p_{a} \, q_{b} \, m(a,b)
= \begin{pmatrix} \displaystyle{\sum_{a \in \cA} \sum_{b \in \cB} p_a \, q_b \, \indic_{b \leq a }} \vspace{.2cm} \\
\displaystyle{\sum_{a \in \cA} p_a \, \Leb\,(C_{1-a})} \end{pmatrix}.
\]
This extended payoff function $m$ and the target set $\cC_\alpha = [0,\alpha] \times [0,2L]$ are such that
\begin{align}
\forall \bq \in \Delta(\cB), \quad \exists \, p^{\star}(\bq) \in \Delta(\cA) \quad \mbox{s.t.} \quad
m\bigl(p^{\star}(\bq),\bq\bigr) \in \cC_\alpha \,; \label{eq:responseandgoalfunctions-m}
\end{align}
indeed, and for instance, a natural choice for $p^{\star}(\bq)$ is
the largest pure action in $\cA$ satisfying, on average over $\bq$, the validity condition with miscoverage [mscv] at most $\alpha$:
\begin{align}
p^\star(\bq) = \bdelta_{\quant(\alpha,\bq)}\,, \qquad \mbox{where} \qquad \quant(\alpha,\bq) = \max\biggl\{ a \in \cA: \sum_{b \in \cB} q_b \, \indic_{b \leq a} \leq \alpha\biggr\} \label{eq:response} \,.
\end{align}
Other choices are possible for $p^{\star}(\bq)$ but we fix the above one.

Equation~\eqref{eq:response} defines a so-called response function $p^\star : \Delta(\cB) \to \Delta(\cA)$,
only returning Dirac masses, from which, based on~\eqref{eq:responseandgoalfunctions-m}, we may in turn construct the goal function
\begin{equation}
m^{\star} : \bq \in \Delta(\cB) \longmapsto
m^{\star}(\bq) \eqdef m\bigl(p^{\star}(\bq),\bq\bigr)
=    \begin{pmatrix}
        \displaystyle{\sum_{b \in \cB} q_b \, \indic_{b \leq \quant(\alpha,\bq)}} \vspace{.2cm} \\
        \displaystyle{\Leb\,\bigl(C_{1-\quant(\alpha,\bq)}\bigr)}
    \end{pmatrix} \in \cC_\alpha = [0,\alpha] \times [0,2L] \,. \label{eq:goal}
\end{equation}

We exploit this choice of $m^\star$ to define smaller target sets $\cC_{\alpha,Q}$ for $Q$--restricted opponents. Namely,
$\cC_{\alpha,Q}$ is based on the image of $Q$ under the goal function $m^{\star}$,
with some additional twists required for some technical reasons: taking closures of convex hulls and considering the non-increasing limit of
images of $\eta$--neighborhoods of~$Q$.

\begin{definition}[closed convex image under $m^\star$]
    The closed convex image of a convex subset $Q \subseteq \Delta(\cB)$ under the goal function $m^{\star}$ is defined as
    \[
        \C_{\alpha,Q} = \bigcap_{\eta>0}\cl\biggl(\conv\Bigl(\bigl\{m^{\star}(\bq) : d(\bq,Q) \leq \eta\bigr\}\Bigr)\biggr) \,,
    \]
    and satisfies $\C_{\alpha,Q} \subseteq \cC_{\alpha}$.
\end{definition}

The approachability result of~\eqref{eq:appr-obj} will actually be in terms
of a convergence to $\C_{\alpha,Q}$. This will be sharper than a mere
convergence to the target set $[0,\alpha] \times [0,\ell_{\alpha,Q}]$ initially
considered in~\eqref{eq:appr-obj}, as follows from the lemma stated next.

\begin{lemma}\label{lm:targetgeneral}
    The closed convex image of a convex subset $Q \subseteq \Delta(\cB)$ under the goal function $m^{\star}$ defined in~\eqref{eq:goal} satisfies
    \[
    \C_{\alpha,Q} \subseteq [0,\alpha] \times [0,\ell_{\alpha,Q}]\,, \qquad \mbox{where} \qquad
    \ell_{\alpha,Q} = \lim_{\vphantom{d(\bq,Q)}\eta \downarrow 0} \max_{\,\bq\,:\,d(\bq,Q) \leq \eta}
\Leb\,\bigl(C_{1-\quant(\alpha,\bq)}\bigr)\,.
    \]
\end{lemma}

\begin{proof}
We argue component by component: we denote by $\Pi_j : (u_1,u_2) \mapsto u_j$ the
extraction of the $j$--th component of a vector $(u_1,u_2) \in \R^2$, where $j \in \{1,2\}$.

For the first components, Equation~\eqref{eq:goal} states that $\Pi_1\bigl( m^\star(\bq) \bigr) \in [0,\alpha]$
for all $\bq \in \Delta(\cB)$; since $[0,\alpha]$ is a closed convex set, the definition of $\cC_{\alpha,Q}$
entails that $\Pi_1(\cC_{\alpha,Q}) \subseteq [0,\alpha]$.

For the second components, fix first $\eta > 0$.
Recall that $\Pi_2\bigl(m^{\star}(\bq)\bigr) = \Leb\,\bigl(C_{1-\quant(\alpha,\bq)}\bigr)$ for all $\bq \in \Delta(\cB)$,
where $\Leb\,\bigl(C_{1-\quant(\alpha,\,\cdot\,)}\bigr)$ takes finitely many values.
We thus have, for all $\bq \in \Delta(\cB)$ such that $d(\bq,Q) \leq \eta$,
\[
\Pi_2\bigl(m^{\star}(\bq)\bigr) \in \left[0, \,\, \max_{\bq' \,:\,d(\bq',Q) \leq \eta} \Leb\,\bigl(C_{1-\quant(\alpha,\bq')}\bigr) \right].
\]
The latter interval is a closed convex set, and
$\smash{\C_{\alpha,Q} \subseteq \cl\biggl(\conv\Bigl(\bigl\{m^{\star}(\bq) : d(\bq,Q) \leq \eta\bigr\}\Bigr)\biggr)}$,
so that
\[
\Pi_2(\cC_{\alpha,Q}) \subseteq \left[0, \,\, \max_{\bq' \,:\,d(\bq',Q) \leq \eta} \Leb\,\bigl(C_{1-\quant(\alpha,\bq')}\bigr) \right]\,.
\]
The proof is concluded by taking the non-increasing limit as $\eta \downarrow 0$.
\end{proof}

\subsection{Reminder on Calibration}
\label{sec:calibr}

The main strategy, described in Section~\ref{sec:strat-BOACI},
relies on a subroutine consisting in a calibrated forecaster.
This subsection therefore provides a reminder on calibration.
The setting and learning protocol for calibration are described next,
for the finite set $\cB$
of Section~\ref{sec:ACI-repeated-game}.

\paragraph{Setting of calibration and related objectives.}
At each round $t \geq 1$, the forecaster and the opponent simultaneously pick a mixed action $\bz_t \in \Delta(\cB)$
and a pure action $b_t \in \cB$, respectively, where $b_t$ is picked possibly independently at random based on a mixed action denoted by $\bq_t \in \Delta(\cB)$.
The mixed actions $\bz_t$ and $\bq_t$ may depend on the information gathered in the previous rounds.
At the end of the round, the forecaster observes $b_t$.

The goal of the forecaster is to produce calibrated forecasts, in the following sense.
For interpretations of these definitions and constructive proofs of existence, we refer the reader, e.g.,
to \citet{mannor2010geometric}, whose approach relies on Blackwell's approachability theorem,
and to the references cited therein (above all, \citealp{FoVo98}, \citealp{FuLe99},
\citealp{foster1999proof}).

\begin{definition}[calibrated forecaster]\label{def:calibrated}
A forecaster is said to be calibrated if, for every strategy of the opponent, the forecasts $(\bz_t)_{t \geq 1}$ output satisfy:
		\[
		\mbox{for all Borel--measurable sets} \ G \subseteq \Delta(\cB) \,, \qquad \lim_{T \to \infty} \left\lVert \frac{1}{T}\sum_{t=1}^T \indic_{\bz_t \in G} \, \bigl(\bz_t-\bdelta_{b_t}\bigr) \right\rVert = 0 \as
		\]
\end{definition}

There exist calibrated forecasters. They are typically constructed based on a sequence of $\epsilon$--calibrated forecasters, which we define
in Section~\ref{sec:boaci-epscalibr}.

The following lemma, stated by \citet[Lemma 6]{bernstein2014opportunistic} in the
case of a calibrated forecaster, guarantees that the $Q$--restriction property of the mixed actions
$\bq_t$ taken by the opponent transfers to calibrated forecasts $\bz_t$ thereof.

\begin{lemma} \label{lm:restricted-calibr}
Consider any convex subset $Q \subseteq \Delta(\cB)$, and any $Q$--restricted opponent. If a forecaster is calibrated, the forecasts $(\bz_t)_{t \geq 1}$ output satisfy, almost surely:
    \[
        \lim_{T \to \infty} \frac{1}{T}\sum_{t=1}^{T} d(\bz_t,Q) = 0\,.
    \]
\end{lemma}

For the sake of completeness, a proof of the lemma above (and of a similar result for $\epsilon$--calibrated forecasters)
may be found in Appendix~\ref{sec:proof-calibr-Q-restr}.

\subsection{Strategy BO-ACI: Blackwell Opportunistic Adaptive Conformal Inference}
\label{sec:strat-BOACI}

In this section, we present our algorithm, referred to as Blackwell Opportunistic Adaptive Conformal Inference [BO-ACI].
The strategy relies on an auxiliary forecaster $\mathcal{F}$, which at each round $t$, outputs, possibly at random, a probabilistic forecast $\bz_t \in \Delta(\cB)$ of the result $b_t \in \cB$ of the opponent's pick for $(x_t,y_t)$.
Based on this forecast, the learner then deterministically selects the pure action $a_t \in \cA$ such that $p^\star(\bz_t) = \bdelta_{a_t}$,
i.e., $a_t = \quant(\alpha,\bz_t)$, and finally outputs the prediction set $C_{1-a_t}(x_t)$.

The description above is summarized in Algorithm~\ref{algo:blackwell}. \medskip

It is a standard trick in sequential learning to use a calibrated auxiliary forecaster $\mathcal{F}$ and optimally respond to
the probabilistic forecasts output (through $p^\star$ in our strategy).
This approach was first introduced by \citet{perchet2009calibration} in the context of Blackwell approachability, and was later incorporated as a central component of the general construction
of opportunistically approaching strategies by \citet{bernstein2014opportunistic}.

The BO-ACI strategy of Algorithm~\ref{algo:blackwell} is a simple instance of these more general strategies.
In Appendix~\ref{sec:proofs-main} we propose a simple, direct,
and self-contained proof of its performance, as stated next in
Theorem~\ref{th:main}, exploiting the specificities of adaptive conformal inference.

\begin{algorithm}[t]
	\caption{\label{algo:blackwell} Blackwell Opportunistic Adaptive Conformal Inference [BO-ACI]}
	\begin{algorithmic}
		\STATE\textbf{Inputs:} miscoverage level $\alpha$; auxiliary sequential forecaster $\mathcal{F}$ with outputs in $\Delta(\cB)$
        \STATE\textbf{Initialization:} play $a_1 = \quant(\alpha,\bq_{\unif})$ \\
		\STATE\textbf{At time steps $t=2, 3, \, \ldots\,$:} \\
		\quad 1. Get from $\mathcal{F}$ a probabilistic forecast $\bz_t \in \Delta(\cB)$ of the opponent's pure action \\			
		\quad 2. Observe $x_t$, play $a_t = \quant(\alpha,\bz_t)$, and thus output the prediction set $C_{1-a_t}(x_t)$ \\
		\quad 3. Observe $y_t$, deduce the corresponding $b_t$ from Lemma~\ref{lm:bt}, and feed $\mathcal{F}$ with $b_t$
	\end{algorithmic}
\end{algorithm}

\begin{restatable}[Opportunistic approachability for adaptive conformal inference]{theorem}{thmain}\label{th:main}
    The BO-ACI strategy of Algorithm~\ref{algo:blackwell}, when run with an auxiliary calibrated forecaster~$\cF$
    and tuned with $\alpha \in (0,1)$, ensures that for all convex subsets $Q \subseteq \Delta(\cB)$, for all
    $Q$--restricted opponents in the sense of Definition~\ref{def:Q-restricted},
    \[
    \overline{m}_T = \frac{1}{T}\sum_{t=1}^T \begin{pmatrix} \indic_{b_t \leq a_t} \\ \Leb\,(C_{1-a_t}) \end{pmatrix}
    \longrightarrow \cC_{\alpha,Q} \qquad \mbox{a.s., \quad as $T \to \infty$.}
    \]
\end{restatable}

Theorem~\ref{th:main}, together with Lemma~\ref{lm:targetgeneral},
proves the first part of Informal Result~\ref{inf:2}.
Theorem~\ref{th:main} is proved in Appendix~\ref{sec:proof-th-main}:
we provide therein a concise, carefully
simplified and revisited, self-contained, proof
(by adapting the proof by \citealp{bernstein2014opportunistic}
to the specific application considered).

\section{Examples of Target Lengths $\ell_{\alpha,Q}$ Achieved Depending on the Degree of Stochasticity $Q$}
\label{ex:Q-restr}

This section is devoted to formally define the cases stated in the second part of
Informal Result~\eqref{inf:2} and review which target lengths $\ell_{\alpha,Q}$ may
be achieved in each case. We will not consider the same order as stated therein
and will rather proceed in increasing order of complexity of the modeling or of
numbers of concepts needed.

The various corollaries stated in this section are obtained by considering particular choices of $Q$ in Theorem~\ref{th:main},
each corresponding to one of the informal results presented earlier in the article.
These specializations illustrate how the general theorem recovers the different regimes of interest within a unified framework.

\paragraph{A useful tool.}
We first state a corollary of Theorem~\ref{th:main}
in the case of $Q = \{\bq\}$ being a singleton:
the average payoff vectors $\overline{m}_T$ obtained by the BO-ACI strategy then converge to a point whenever $m^\star$ is continuous at $\bq$.

\begin{corollary}
\label{cor:single}
    Con\-si\-der $\alpha \in (0,1)$ and $\bq \in \Delta(\cB)$ such that $m^{\star}$ is continuous at $\bq$.
    The BO-ACI strategy of Algorithm~\ref{algo:blackwell}, when run with an auxiliary calibrated forecaster~$\cF$, is such that, in particular,
    if the opponent is $\{\bq\}$--restricted, i.e.,
    \[
    \lim_{T \to \infty} \frac{1}{T} \sum_{t=1}^T \Arrowvert \bq_t - \bq \Arrowvert = 0\,, \qquad \mbox{then} \qquad \lim_{T \to \infty} \overline{m}_T =
    \lim_{T \to \infty} \frac{1}{T}\sum_{t=1}^T \begin{pmatrix} \indic_{b_t \leq a_t} \\ \Leb\,(C_{1-a_t}) \end{pmatrix} = m^{\star}(\bq) \as
    \]
\end{corollary}

\begin{proof}
    We apply Theorem~\ref{th:main} with $Q = \{\bq\}$; it suffices to note that
    \[
    \C_{\alpha,\{\bq\}} = \bigcap_{\eta > 0} \cl\biggl(\conv\Bigl(\bigl\{m^{\star}(\bq') : d(\bq',\bq) \leq \eta\bigr\}\Bigr)\biggr) = \bigl\{m^{\star}(\bq)\bigr\}
    \]
    by continuity of $m^{\star}$ at $\bq$.
\end{proof}

\subsection{Exchangeable Case}
\label{sec:exch-main}

The case of exchangeable data (see Definition~\ref{def:exchangeablesetting})
is encompassed in the case of a $\{ \bq_{\unif} \}$--restricted opponent, see
Example~\ref{ex:echangeable}. We deal with the latter, more general, case
and show in the course of the proof of Corollary~\ref{cor:main} that the average vector payoffs $\overline{m}_T$ converge to
the set $\C_{\alpha,\{\bq_{\unif}\}}$, and that the latter admits a closed-form expression
as a singleton:
\begin{equation}
\label{eq:Calphaunif}
\C_{\alpha,\{\bq_{\unif}\}} = \Bigl\{ \bigl( r(\alpha),\, \Leb\,(C_{1-\alpha}) \bigr)^{\!\transp} \Bigr\}\,.
\end{equation}
The miscoverage level is only $r(\alpha) \leq \alpha$, for technical reasons due to rounding.

Before we do so, and for the sake of comparison to the literature,
we introduce the concept of general strategies:
strategies that guarantee validity in all settings.
It of course suffices to ask this requirement in the adversarial setting.

    \begin{definition}[general strategies]
    \label{def:general}
    A strategy for outputting conformal prediction sets is call general at miscoverage level $\alpha$ if it ensures validity at this level in the adversarial setting:
    for all sequences of data $(x_t,y_t)_{t \geq 1}$ picked by an opponent,
    \[
    \limsup_{T \to \infty} \frac{1}{T}\sum_{t=1}^T \indic_{y_t \notin C_{1-a_t}(x_t)} \leq \alpha\,.
    \]
    \end{definition}

By Theorem~\ref{th:main}, together with Lemma~\ref{lm:targetgeneral}
and the consideration of $Q = \Delta(\cB)$,
the BO-ACI strategy of Algorithm~\ref{algo:blackwell} is a general strategy,
which turns out to also guarantee
the desirable objective in the exchangeable setting
given by controlling~\eqref{eq:length} by the length of~$C_{1-\alpha}$.
This is stated next in Corollary~\ref{cor:main}.

To the best of our knowledge,
the BO-ACI strategy of Algorithm~\ref{algo:blackwell}
is the first strategy to simultaneously achieve
these two properties.
In particular, no general strategy considered in the literature is known to
also satisfy the efficiency guarantee $\Leb\,\bigl(C_{1-\alpha}\bigr)$
in the exchangeable setting.
In fact, the existing analysis of the ACI algorithm (of \citealp{gibbs2021adaptive},
restated in Example~\ref{ex:ACI})
illustrates the opposite phenomenon:
\citet{zaffran2022adaptive} showed that it
yields a strictly larger empirical average length
than $\Leb\,(C_{1-\alpha})$ in the exchangeable setting.
The additional length depends linearly on
the step size $\gamma$ used in the ACI algorithm,
on the nominal miscoverage level $\alpha$,
and on the underlying distribution of the calibration scores.
They interpret the result as indicating that the ACI algorithm
of \citet{gibbs2021adaptive} adapts to a potentially adversarial opponent
but at the cost of a larger
average length in the exchangeable setting.
(We note however that the proof of this result relies on a simplifying assumption
introduced to bypass technical difficulties arising
from the finiteness of the calibration set,
and therefore considers a slightly different setting from ours.)

\begin{restatable}[exchangeable setting]{corollarry}{cormain}\label{cor:main}
    Con\-si\-der ${\alpha \notin \cA}$.
    The BO-ACI strategy of Algorithm~\ref{algo:blackwell}, when run with an auxiliary calibrated forecaster~$\cF$
    and tuned with $\alpha$,
    is a general strategy at miscoverage level $\alpha$ (see Definition~\ref{def:general}) and it also
    satisfies, against a $\{ \bq_{\unif} \}$--restricted opponent, and thus, in the exchangeable setting,
    the following efficiency guarantee:
    \begin{equation}
    \label{eq:corollaryexch}
    \lim_{T \to \infty} \frac{1}{T}\sum_{t=1}^T \Leb\big(C_{1-a_t} \big) = \Leb\big(C_{1-\alpha}\big) \as
    \end{equation}
\end{restatable}

The proof of Corollary~\ref{cor:main}
is located in Appendix~\ref{sec:proof-cor:main};
it consists of showing that
the continuity required in Corollary~\ref{cor:single} is always satisfied for $\alpha \notin \cA$,
and of establishing the closed-form expression~\eqref{eq:Calphaunif}.

    \begin{remark}\label{rk:informalexchangeable}
        If $\alpha \in \cA$, then the asymptotic efficiency guarantee in~\eqref{eq:corollaryexch} deteriorates to $\Leb\,\bigl(C_{1-\alpha+1/(T_{\calib}+1)}\bigr)$
        as an upper bound, due to a potential discontinuity jump.
        We obtain this result as a special case of Corollary~\ref{cor:mainepsilon}
        stated later in this article; see the comments after its statement.
    \end{remark}

\subsection{Some Intermediate Scenarios}
\label{sec:CSA}

The exchangeability considered in the previous section is a relaxation of i.i.d.\
data in the direction of lack of independence.
We now study a different relaxation, in the direction of lack of identical distribution,
under independence assumptions. More precisely, for all three scenarios
considered at the beginning of each subsection, we issue the following stochastic assumptions.

\paragraph{Common stochastic assumptions.}
We assume that the calibration scores are drawn i.i.d.\ from some (unknown) diffuse distribution
    $\S$ with support contained in $[0,L]$. At time $t \geq 1$, the test score $s_t$ is drawn independently at random according to some (unknown) distribution
    $\S_t$ that may depend on the previous observations, and with support contained in $[0,L]$.
    To emphasize the dependence on the distributions, we introduce the notation $\bq_t = \bq^{\S,\S_t}$ for the distribution of the opponent's action $b_t$ at
    time $t \geq 1$. In particular, given that $\S$ is diffuse and that scores are therefore almost-surely distinct,
    $\bq^{\S,\S} = \bq_{\unif}$ is the uniform distribution on $\cB$.

\paragraph{Relaxation of these stochastic assumptions.}
The stochastic assumptions above are only needed to motivate the
associated $Q$--restrictions of the opponent's play that will be
stated in Assumptions~\ref{ass:single} and~\ref{ass:Qeps}.
The latter will cover more general cases,
not requiring strict independence assumptions.

\subsubsection{Single Distributional Shift}

This section studies, in particular, a setting of single distributional shift.
We assume, in this setting, that $\S_t = \V$ for all $t \geq 1$
    for some constant distribution $\V$, i.e., conditionally on the training and calibration data, the test scores are i.i.d.\ according to $\V$.
    This setting covers both the fully i.i.d.\ case where $\V = \S$ and situations where the test distribution $\V$ differs from the calibration
    distribution $\S$ but is stationary. It also includes cases where calibration data may have been inadvertently incorporated into the
    training process, thereby altering the effective distribution underlying subsequent observations.

    We may generalize this setting through the following assumption.

    \begin{assumption}\label{ass:single}
    The opponent's play is $\bigl\{\bq^{\S,\V}\bigr\}$--restricted.
    \end{assumption}

    Under this assumption, the desirable efficiency bound is $\Leb\big(C_{1-a^{\S,\V}}\big)$,
    the length of the interval associated with $a^{\S,\V} = \quant\bigl(\alpha,\bq^{\S,\V}\bigr)$,
    the largest element $a \in \cA$ such that $C_{1-a}$ satisfies the $1-\alpha$ validity condition on average
    under $\bq^{\S,\V}$.
    The counterpart to (and extension of) Corollary~\ref{cor:main}
    under this assumption is the following.

    \begin{corollary}[single distributional shift]\label{cor:lshift}
    Con\-si\-der $\alpha \in (0,1)$.
    The BO-ACI strategy of Algorithm~\ref{algo:blackwell}, when run with an auxiliary calibrated forecaster~$\cF$
    and tuned with $\alpha$,
    is a general strategy at miscoverage level $\alpha$ (see Definition~\ref{def:general}) and it also
    satisfies, against a $\bigl\{\bq^{\S,\V}\bigr\}$--restricted opponent, and thus, in single distributional shift setting
    described above, the following efficiency guarantee: if $m^\star$ is continuous at $\bq^{\S,\V}$,
    \begin{equation}
		\lim_{T \to \infty} \frac{1}{T}\sum_{t=1}^T \Leb\big(C_{1-a_t} \big) = \Leb\big(C_{1-a^{\S,\V}}\big) \as \label{eq:corollarysingle}
	\end{equation}
	\end{corollary}

    The first part of Corollary~\ref{cor:lshift} (that BO-ACI is a general strategy) was already stated
    right after Definition~\ref{def:general}.
    The second part is a mere application of Corollary~\ref{cor:single}, thanks to the explicit
    continuity assumption (which was unnecessary in Corollary~\ref{cor:main}).

\subsubsection{Almost-Exchangeable Setting}
\label{sec:almost-exch-1}

Our motivating example, taking place in the stochastic setting
described at the beginning of Section~\ref{sec:CSA}
corresponds to the case where the distributions $\S_t$ may differ slightly from~$\S$
and change over time, but with controlled variations, in some sense to be made precise.
To describe it with the required detail, we first establish a useful connection in terms of distances between mixed actions in $\Delta(\cB)$ and distributions of scores.

    \begin{lemma}\label{lm:distances}
    In the stochastic setting
described at the beginning of Section~\ref{sec:CSA},
        for any $t \geq 1$, we have that
        \[
        \Bigl\lVert \bq_{\mathrm{unif}} - \bq^{\S,\S_t}\Bigr\rVert_1 \leq 2 \, \dtv(\S,\S_t) \as
        \]
    \end{lemma}

    \begin{proof}
        The test score $s_t$ is an independent draw from $\S_t$ conditionally on the training, calibration, and past test observations.
        Because $\S$ is diffuse, Lemma~\ref{lm:factbt} is applicable and ensures that for all $b \in \cB$,
        \[
        q^{\S,\S_t}_b = \P_{s_t \sim \S_t}(b_t=b) = \P_{s_t \sim \S_t}\Bigl(s_{\bigl((T_{\calib}+1)(1-b)\bigr)} < s_t \leq s_{\bigl((T_{\calib}+1)(1-b)+1\bigr)}\Bigr) \as
        \]
        The same argument applied to $s \sim \S$ implies that
        \[
            \Bigl\lVert \bq^{\S,\S} - \bq^{\S,\S_t}\Bigr\rVert_1 = \sum_{i = 0}^{T_{\calib}} \biggl| \P_{s \sim \S}\Bigl(s \in \bigl(s_{(i)}, s_{(i+1)}\bigr]\Bigr) - \P_{s_t \sim \S_t}\Bigl(s_t \in \bigl(s_{(i)}, s_{(i+1)}\bigr]\Bigr) \biggr| \as
        \]
        Therefore, the definition of the total variation distance, together with classic properties of histograms applied to the (almost-surely distinct) order statistics $s_{(i)}$, gives that
        \[
            \Bigl\lVert \bq^{\S,\S} - \bq^{\S,\S_t}\Bigr\rVert_1 \leq 2 \, \dtv(\S,\S_t) \as
        \]
        The proof is concluded by recalling that $\bq^{\S,\S} = \bq_{\mathrm{unif}}$.
    \end{proof}

    We can now define the $\epsilon$--exchangeable setting (or, by abuse, the almost-exchangeable setting)
    in terms of a convergence condition on the total variation distance
    between the distribution $\S$ of the calibration scores and the distributions $(\S_t)_{t \geq 1}$ of the test scores.
    This setting notably encompasses the case of independent but non-identically distributed test scores, provided that the total variation distances $\dtv(\S,\S_t)$ are small (inferior to some threshold $\epsilon/2$) except for a sublinear number of rounds $t$.

    \begin{example} \label{ex:eps-exch}
    The $\epsilon$--exchangeable setting, where $\epsilon \in [0,2]$, corresponds to the case where
    \begin{equation} \label{eq:def-eps-exch-setting}
     \lim_{T \to \infty} \frac{1}{T}\sum_{t=1}^T \indic_{\bigl\{\dtv(\S,\S_t) > \epsilon / 2\bigr\}} = 0 \as
    \end{equation}
    In this case, the opponent's play is $Q_\epsilon$--restricted, where $Q_\epsilon$ is the
    $\ell_1$--ball in $\Delta(\cB)$ centered at $\bq_{\mathrm{unif}}$
    and with radius $\epsilon$, namely,
    \begin{equation} \label{eq:defQeps-inex}
        Q_\epsilon = \Bigl\{ \bq \in \Delta(\cB) : \bigl\lVert\bq - \bq_{\mathrm{unif}} \bigr\rVert_1 \leq \epsilon \Bigr\}\,.
    \end{equation}
    \end{example}

    \begin{proof}
    Indeed, by definition of $Q_\epsilon$, we have that
    \begin{align*}
        \frac{1}{T} \sum_{t=1}^T d(\bq_t,Q_\epsilon) &= \frac{1}{T} \sum_{t=1}^T \indic_{\bigl\{\lVert\bq_t - \bq_{\mathrm{unif}} \rVert_1 > \epsilon \bigr\}}\bigl(\lVert\bq_t - \bq_{\mathrm{unif}} \rVert_1 - \epsilon \bigr) \\
         &\leq \frac{2}{T} \sum_{t=1}^T \indic_{\bigl\{\dtv(\S,\S_t) > \epsilon / 2\bigr\}}  \as,
    \end{align*}
    where the inequality follows from the fact that $\ell_1$--distances between distributions
    are smaller than~2, together with Lemma~\ref{lm:distances} (to bound the indicator function by another indicator function).
    The desired convergence to~0 follows from the assumption~\eqref{eq:def-eps-exch-setting}.
    \end{proof}

    Based on the previous example,
    we will more generally consider the following assumption in the rest of this section.

    \begin{assumption} \label{ass:Qeps}
    The opponent's play is $Q_\epsilon$--restricted for some $\epsilon \in [0,2]$, where
    $Q_\epsilon$ is defined in~\eqref{eq:defQeps-inex}.
    \end{assumption}

We state next the associated efficiency guarantee, in Corollary~\ref{cor:mainepsilon}.
Its proof is also located in Appendix~\ref{sec:proof-cor:main}.

\begin{restatable}[$\epsilon$--exchangeable setting]{corollarry}{cormainepsilon}\label{cor:mainepsilon}
    Con\-si\-der ${\alpha \in \bigl[1/(T_{\calib}+1),\,1\bigr)}$.
    The BO-ACI strategy of Algorithm~\ref{algo:blackwell}, when run with an auxiliary calibrated forecaster~$\cF$
    and tuned with $\alpha$,
    is a general strategy at miscoverage level $\alpha$ (see Definition~\ref{def:general}) and it also
    satisfies, against a $Q_{\epsilon}$--restricted opponent,
    where $\epsilon \in \bigl[0,\,1/(T_{\calib}+1)\bigr)$,
    and thus, in the $\epsilon$--exchangeable setting
    of Example~\ref{ex:eps-exch}, the following efficiency guarantee:
    \[
    \limsup_{T \to \infty} \frac{1}{T}\sum_{t=1}^T \Leb\big(C_{1-a_t} \big) \leq \Leb\,\bigl(C_{1-\alpha+1/(T_{\calib}+1)}\bigr) \as
    \]
\end{restatable}

\paragraph{Comparison to the exchangeable case and discussion.}
A comparison of Corollaries~\ref{cor:main} and~\ref{cor:mainepsilon}
reveals that the mild price to pay for not facing fully
exchangeable scores is a replacement of the ideal length
$\Leb\,\bigl(C_{1-\alpha}\bigr)$ by
the smallest possible strictly larger value
$\Leb\,\bigl(C_{1-\alpha+1/(T_{\calib}+1)}\bigr)$,
providing that the departure from exchangeability is small enough.

Note that Corollary~\ref{cor:mainepsilon}
also encompasses the result of Remark~\ref{rk:informalexchangeable}, i.e., $\alpha \in \cA$ in the exchangeable setting.
Indeed, the opponent's play is in particular $Q_{\epsilon}$--restricted for any $\epsilon > 0$ in the exchangeable setting.

Also, the specific case of $\alpha < 1/(T_{\calib}+1)$ is not covered by Corollary~\ref{cor:mainepsilon}, but is actually pathological and coincides with the result stated in Corollary~\ref{cor:main} for $\alpha = 0$.

\subsubsection{Almost-exchangeable Setting, Continued: with Explicit Dependencies}

    In this section, we show that similar results as in the previous motivating
    example of Section~\ref{sec:almost-exch-1}
    can be obtained under a typical dependence structure,
    which notably encompasses some cases of time series.
    In particular, we show that Assumption~\ref{ass:Qeps} is also satisfied.

    We consider again the stochastic setting described at the beginning of Section~\ref{sec:CSA}
    and recall that the distributions $\S_t$ may depend on previous observations.
    Before formalizing this final, more complex, motivating example,
    we need to establish a similar relationship as in Lemma~\ref{lm:distances} but with respect to a distance between distributions that
    is more adapted to the dependent setting, namely, the Kolmogorov distance $\dK$.
    The latter is defined as the supremum norm between cumulative distribution functions.

    \begin{lemma}\label{lm:distancesbis}
    In the stochastic setting described at the beginning of Section~\ref{sec:CSA},
    for any $t \geq 1$, we have that
        \[
        \Bigl\lVert \bq_{\mathrm{unif}} - \bq^{\S,\S_t}\Bigr\rVert_1 \leq 2 (T_{\calib}+1) \, \dK(\S,\S_t) \as,
        \]
        where $\dK$ denotes the Kolmogorov distance between distributions.
    \end{lemma}

    \begin{proof}
        Recall that, under the same assumptions, we showed in the proof of Lemma~\ref{lm:distances} that
        \[
            \Bigl\lVert \bq_{\mathrm{unif}} - \bq^{\S,\S_t}\Bigr\rVert_1 = \sum_{i = 0}^{T_{\calib}} \biggl| \P_{s \sim \S}\Bigl(s \in \bigl(s_{(i)}, s_{(i+1)}\bigr]\Bigr) - \P_{s_t \sim \S_t}\Bigl(s_t \in \bigl(s_{(i)}, s_{(i+1)}\bigr]\Bigr) \biggr| \as
        \]
        Therefore, several triangular inequalities, the fact that the supports of $\S$ and $\S_t$ are both included in $[0,L] = \smash{\bigl[s_{(0)}, s_{(T_{\calib}+1)}\bigr]}$,
        and the definition of the Kolmogorov distance as the supremum norm between the cumulative distribution functions, entail that
        \[
            \Bigl\lVert \bq_{\mathrm{unif}} - \bq^{\S,\S_t}\Bigr\rVert_1 \leq 2 \sum_{i = 0}^{T_{\calib}} \Bigl| \P_{s \sim \S}\bigl(s \leq s_{(i)}\bigr) - \P_{s_t \sim \S_t}\bigl(s_t \leq s_{(i)}\bigr) \Bigr| \leq 2 (T_{\calib}+1) \, \dK(\S,\S_t) \as,
        \]
        which is the claimed bound.
    \end{proof}

    By the exact same arguments as for the $\epsilon$--exchangeable setting of Section~\ref{sec:almost-exch-1},
    we get that the opponent's play is $Q_{\epsilon}$--restricted provided that the following condition holds:
    \begin{equation} \label{eq:def-eps-exch-dep-setting}
        \lim_{T \to \infty} \frac{1}{T}\sum_{t=1}^T \indic_{\bigl\{\dK(\S,\S_t) > \epsilon / (2T_{\calib}+2)\bigr\}} = 0 \as
    \end{equation}

    We now exhibit a natural dependence structure under which such a control on the Kolmogorov distances arises.
    Following \citet{zaffran2022adaptive}, we consider the case of test scores being absolute values of a clipped auto-regressive process of order~1.
    This modeling is classic in time series forecasting, where forecast errors $r_t = y_t - \widehat{\mu}(x_t)$, also known as residuals,
    are often temporally correlated. Such a dependence may stem either from non-stationarity in the data-generating process or from model misspecification,
    when the predictor $\widehat{\mu}$ fails to capture all relevant dynamics.
    Although not formally equivalent, it is natural to assume that the calibration scores, which are i.i.d.\ according to $\S$,
    are the absolute value of underlying calibration residuals that are i.i.d.\ with distribution $\Xi$, supported on $[-L,L]$.
    In line with this perspective, we model the series of test residuals as an auto-regressive process (e.g., of order~1) with
    i.i.d.\ innovations drawn from $\Xi$.

    \begin{example}\label{ex:dependent}
    Let $(\xi_t)_{t \geq 1}$ be an i.i.d.\ innovation process with distribution $\Xi$ (the one of the calibration residuals), and
    let $\phi > 0$ be a (typically small) auto-regressive parameter.
    We consider the setting where the test scores are absolute values of the clipped $\operatorname{AR}(1)$ process $(r_t)_{t \geq 1}$
    initialized at $r_1 = \xi_1$ and recursively defined by
        \[
        \forall t \geq 2\,, \qquad r_t = \max\bigl\{ \min \{\phi \, r_{t-1} + \xi_t , \, L \}, \, -L \bigr\}\,,
        \qquad \mbox{i.e.,} \qquad s_t = |r_t|\,.
        \]
        We denote by $\S_t$ the distribution of $s_t$ conditional to past observations
        (training data and previous residual $r_{t-1}$).
        Note that the support of $\S_t$  is included in $[0,L]$.

        If we further assume that the density of $\S$ is upper bounded by some constant $\lVert f_{\S} \rVert_{\infty}$, then
        we get the following bound on the Kolmogorov distances:
        \[
        \forall t \geq 1\,, \qquad \dK(\S,\S_t) \leq L \phi \, \lVert f_{\S} \rVert_{\infty} \as
        \]
    \end{example}

    \begin{proof}
    First, the construction of the scores, a triangular inequality, and the fact that the support of $\S_{t-1}$ is included in $[0,L]$ entail that
    \[
    s_t \leq | \phi \, r_{t-1} + \xi_t| \leq |\xi_t| + L \phi  \as
    \]
    Second, either $| \phi \, r_{t-1} + \xi_t| \leq L $ or $| \phi \, r_{t-1} + \xi_t| > L$; by a reverse triangular inequality in the first case,
    and the fact that $s_t = L$ while $|\xi_t| \in [0,L]$ in the second case, we get that
    \[
    |\xi_t| - L \phi \leq s_t \as
    \]
    Therefore, we have that $|\xi_t| - L \phi \leq s_t \leq |\xi_t| + L \phi$ almost surely;
    hence, since $|\xi_t| \sim \S$, we obtain
        \[
          \forall u \in [0,L]\,, \qquad \P_{s \sim \S}(s \leq u - L \phi)  \leq \P_{s_t \sim \S_t}(s_t \leq u) \leq \P_{s \sim \S}(s \leq u + L \phi) \as
        \]
        Based on the above inequalities, we can now control the Kolmogorov distance between $\S$ and $\S_t$ by substituting the cumulative distribution
        function of $\S_t$ by the cumulative distribution function of $\S$ evaluated at two close points, namely,
        \begin{align*}
            \dK(\S, \S_t) &\eqdef \sup_{u \in [0,L]} \max\Bigl\{\P_{s_t \sim \S_t}(s_t \leq u) - \P_{s \sim \S}(s \leq u), \, \P_{s \sim \S}(s \leq u) -\P_{s_t \sim \S_t}(s_t \leq u) \Bigr\} \\
            &\leq \sup_{u \in [0,L]} \max\Bigl\{\P_{s \sim \S}\bigl(s \in (u,u + L \phi] \bigr), \,  \P_{s \sim \S}\bigl(s \in (u - L \phi , u ]\bigr) \Bigr\} \as
        \end{align*}
        Finally, the assumption on the density of $\S$, together with the above inequality, entails that
        \[
            \dK(\S, \S_t) \leq L \phi \, \lVert f_{\S} \rVert_{\infty} \as \vspace{-.95cm}
        \]

    \end{proof}

        In the dependent setting depicted in Example~\ref{ex:dependent}, we showed that the Kolmogorov distances are controlled in the sense of~\eqref{eq:def-eps-exch-dep-setting},
        with $\epsilon = L\phi \, \lVert f_{\S} \rVert_{\infty}$. This quantity is small under weak dependence (small $\phi$) and for a non-concentrated distribution $\S$ (small $\lVert f_{\S} \rVert_{\infty}$).
        Consequently, the opponent's play is $Q_{\epsilon}$--restricted by Lemma~\ref{lm:distancesbis}, so that Assumption~\ref{ass:Qeps}
        and Corollary~\ref{cor:mainepsilon} are applicable.

\subsection{On Efficiency Objectives in the Adversarial Setting}

This section is devoted to the adversarial setting of Definition~\ref{def:advsetting}
and Example~\ref{ex:adv}, which corresponds to $Q = \Delta(\cB)$,
i.e., no restriction on the opponent's play.

\paragraph{Point of this section.}
We set $p^\star$ and $m^\star$
in a simple way in~\eqref{eq:responseandgoalfunctions-m}
and~\eqref{eq:response} and picked, in particular,
a non-randomized response function $p^\star$ outputting Dirac masses.

These definitions are such that there exist distributions $\bq \in \Delta(\cB)$
with $\quant(\alpha,\bq) = 0$ and thus $m^\star(\bq) = (0,\,2L)^\top$.

There also exist other distributions $\bq \in \Delta(\cB)$
with $\quant(\alpha,\bq) = 1/(T_{\calib}+1)$ and $m^\star(\bq) = \bigl( \alpha,\,2 s_{(T_{\calib})} \bigr)^\top$.
The latter quantity could be close to $(\alpha,2L)^\top$
so that, with the simple definitions considered,
Theorem~\ref{th:main} does not seem to guarantee anything more
specific than a convergence of $\overline{m}_T$
to $[0,\alpha] \times [0,2L]$ in the adversarial setting.

Yet, we have to acknowledge that convergence
to $[0,\alpha] \times \bigl[0,(1-\alpha)2L\bigr]$ may be achieved,
with a randomized $p^\star$, and that this convergence is the
best achievable one. Both results are formally stated below
in Lemma~\ref{lm:adversarial}.
    This lemma highlights a fundamental limitation of the fully adversarial setting: while validity can always be satisfied,
    efficiency cannot simultaneously be controlled against any opponent in a non-trivial way, i.e.,
    by a bound strictly smaller than $(1-\alpha)2L$.

However, for the sake of a simpler exposition, we rather considered in Section~\ref{sec:def-cC-alpha-Q}
the target set $\C_\alpha = [0, \alpha] \times [0, 2L]$,
i.e., we neglected the $1-\alpha$ term in the efficiency component. The set $\C_\alpha$ is also approachable
by Lemma~\ref{lm:adversarial} (and with the same simple strategy).
Remark~\ref{rk:pstar-rando} hints at the complications that would arise from considering the smaller
target set $[0, \alpha] \times \bigl[0, (1-\alpha)2L\bigr]$.
We therefore preferred to obtain slightly suboptimal efficiency results in the
adversarial case rather than picking a more complex $p^\star$ than in~\eqref{eq:response},
at the cost of additional technicalities in the analysis.

    \begin{lemma}\label{lm:adversarial}
        The target set
        $[0,\alpha] \times \bigl\{ (1-\alpha)2L \bigr\}$ and thus the target set
        $[0, \alpha] \times \bigl[0, (1-\alpha)2L\bigr]$ are approachable by the learner;
        put differently, there exists a (simple) strategy of the learner such that,
        the following inequalities hold against any opponent, almost surely:
        \[
        \limsup_{T \to \infty} \frac{1}{T}\sum_{t=1}^T \indic_{b_t \leq a_t} \leq \alpha
        \qquad \mbox{and} \qquad
	    \limsup_{T \to \infty} \frac{1}{T}\sum_{t=1}^T \Leb\big(C_{1-a_t} \big) \leq (1-\alpha)2L \,.
        \]
        Conversely, for all $\epsilon >0$, the target set $[0, \alpha] \times \bigl[0, (1-\alpha)2L - \epsilon\bigr]$ is not approachable by the learner;
        put differently, the inequalities above cannot hold almost surely against any opponent
        when the second condition is replaced by $\leq (1-\alpha)2L - \epsilon$.
	\end{lemma}

    Lemma~\ref{lm:adversarial} basically indicates that in a fully adversarial setting,
    only trivial results may be achieved when it comes to efficiency.
    This is why we restricted the behavior of the opponent, in a way already considered
    by the literature of online learning, that
    encompasses both exchangeable and adversarial settings as extreme cases; namely,
    through $Q$--restrictions, see Section~\ref{sec:sev-degr-stoch}. \medskip

    \begin{proof}
        For the first statement of the lemma, consider the following strategy\footnote{The associated conformal sets correspond to the typical example of a poor but valid sequence of prediction sets that is used in the literature of conformal inference to emphasize that validity alone is not a sufficiently interesting objective. } for the learner:
        at each round $t \geq 1$, pick $a_t = 0$ with probability $1-\alpha$ and $a_t = 1$ with probability $\alpha$, i.e., play $\bp_t = (1-\alpha)\bdelta_{0} + \alpha\bdelta_1$.
        By the boundedness assumption on the scores, $y_t \in C_{1}(x_t)$ with probability~$1$. Also, by design, the sets $C_{0}(x_t)$ and $C_{1}(x_t)$
        have respective lengths~$0$ and~$2L$.
        Thus, $m(\bp_t, b_t) \in [0,\alpha] \times \bigl\{ (1-\alpha)2L \bigr\}$ for all round $t \geq 1$.
        By martingale convergence (e.g., the Hoeffding-Azuma inequality together with the Borel-Cantelli lemma),
        \[
        \limsup_{T \to \infty} \Biggl| \frac{1}{T}\sum_{t=1}^T m(a_t, b_t) - \underbrace{\frac{1}{T}\sum_{t=1}^T m(\bp_t, b_t)}_{\in [0,\alpha] \times \{ (1-\alpha)2L \}} \Biggr| = 0 \as
        \]
        which concludes the proof of the first statement.

        For the second statement of the lemma, we fix $\epsilon > 0$ and show that Blackwell's dual condition is not satisfied for the target set
        $\C^{(\epsilon)} = [0, \alpha] \times \bigl[0, (1-\alpha)2L - \epsilon\bigr]$, which yields the desired lack of
        approachability, by Theorem~\ref{th:blackwell} of Appendix~\ref{sec:Blk}.
        We consider $\bq = \bdelta_{1/(T_{\calib}+1)}$ and proceed by contradiction:
        we assume that there exists $\bp = (p_a)_{a\in\cA} \in \Delta(\cA)$ such that $m(\bp,\bq) \in \C^{(\epsilon)}$.
        Since $s_{(T_{\calib})} < L$ by the choice of $L$ (see Remark~\ref{rk:L}),
        we can consider $(x,y)$ such that $s_{(T_{\calib})} < | y - \widehat{\mu}(x) | < L$,
        e.g., by taking any $x \in \X$ and letting $y = \widehat{\mu}(x) + \bigl(s_{(T_{\calib})}+L\bigr)/2$.
        This choice of $(x,y)$ indeed corresponds to $\bq = \bdelta_{1/(T_{\calib}+1)}$ and is such that the only conformal set that contains $y$ is $C_1(x)$.
        Therefore $m(\bp,\bq) \in \C^{(\epsilon)}$ implies in particular that the prediction set $C_1(x)$ is selected with probability at least $1-\alpha$, i.e., $p_0 \geq 1-\alpha$.
        In that case, the second component of $m(\bp,\bq)$ is at least $(1-\alpha)\Leb\,(C_1) = (1-\alpha)2L$, thus contradicting the fact that $m(\bp,\bq) \in \C^{(\epsilon)}$.
    \end{proof}

    \begin{remark}
    \label{rk:pstar-rando}
    With $[0, \alpha] \times \bigl[0, (1-\alpha)2L\bigr]$ as a target set,
    a suitable response function should satisfy, for all $\bq \in \Delta(\cB)$,
    \[
    p_{\mbox{\tiny \rm rand}}^{\star}(\bq) \in \argmin \! \left\{ \sum_{a \in \cA} p_a \Leb\,\bigl(C_{1-a}\bigr)
    \ \ \mbox{over} \ \ \bp \in \Delta(\cA)
    \ \ \mbox{subject to} \ \ \sum_{a \in \cA}\sum_{b \in \cB} p_a \, q_{b} \, \indic_{b \leq \alpha} \leq \alpha \right\} .
    \]
    Such a response function $p_{\mbox{\tiny \rm rand}}^{\star}$ would lead to randomized actions,
    as opposed to the Dirac-mass-valued function $p^\star$ considered
    with the target set $\C = [0, \alpha] \times [0, 2L]$.
    Our analysis could be adapted to handle this slightly more complex case
    (the original version of \citealp{bernstein2014opportunistic} does so)
    but this would not, in general,
    bring any substantial improvement in the efficiency guarantees achieved,
    except perhaps in the adversarial setting, while leading to less elementary proofs
    in Appendix~\ref{sec:proofs-main}.
    \end{remark}

\section{BO-ACI Strategy Based on $\epsilon$--Calibrated Forecasters}
\label{sec:boaci-epscalibr}

In this section, we extend the theoretical guarantees of the BO-ACI strategy of Algorithm~\ref{algo:blackwell} to the case where the auxiliary forecaster is only $\epsilon$--calibrated as per the following definitions,
starting with the notion of disjoint $\epsilon$--covering.

\begin{definition}[disjoint $\epsilon$--covering]\label{def:disjointcovering}
Given $\epsilon>0$, a finite partition $\cP = \bigl\{P_k : k \in [N]\bigr\}$ of $\Delta(\cB)$ into measurable subsets is a disjoint $\epsilon$--covering if, for every $k \in [N]$, there exists $\bc_k \in P_k$ such that
\[
P_k \subseteq B(\bc_k,\epsilon) \eqdef \bigl\{ \bq \in \Delta(\cB) : \lVert \bc_k - \bq \rVert_1 \leq \epsilon\bigr\} \,.
\]
We refer to $\bigl\{\bc_k : k \in [N] \bigr\}$ as centers associated with the disjoint $\epsilon$--covering $\cP$.
\end{definition}

\begin{definition}[$\epsilon$--calibrated forecaster]\label{def:epsiloncalibrated}
Given $\epsilon>0$, and a disjoint $\epsilon$--covering $\cP$ with associated centers $\bigl\{\bc_k : k \in [N] \bigr\}$, a forecaster is said to be $\epsilon$--calibrated if, for every strategy of the opponent, its forecasts $(\bz_t)_{t \geq 1}$ take values in the set of centers,
and if, denoting for each $t\ge1$ by $k_t\in[N]$ the index such that $\bz_t = \bc_{k_t}$, the following holds:

\[
\limsup_{T \to \infty} \sum_{k=1}^{N} \left\lVert \frac{1}{T} \sum_{t=1}^{T} \indic_{k_t = k} \, \bigl(\bc_k - \bdelta_{b_t}\bigr) \right\rVert \leq \epsilon \as \]
\end{definition}

For technical reasons and with no loss of generality, we assume that the underlying
disjoint $\epsilon$--covering $\cP = \bigl\{P_k : k \in [N]\bigr\}$ of $\Delta(\cB)$
used by the auxiliary $\epsilon$--calibrated forecaster $\cF$
is compatible with $p^\star$, or equivalently, with $\quant(\alpha,\,\cdot\,)$, in the following sense:
\begin{equation}
\label{eq:def-compat}
\forall k \in [N], \ \ \exists a \in \cA \quad \mbox{s.t.} \quad
P_k \subseteq R_a \eqdef \bigl\{\bq \in \Delta(\cB) : \quant(\alpha,\bq) = a \bigr\}\,,
\end{equation}
i.e., $p^\star$ and $\quant(\alpha,\,\cdot\,)$ take constant values on each $P_k$.

This compatibility indeed comes with no loss of generality as we may transform any
disjoint $\epsilon$--covering $\cP' = \bigl\{P'_k : k \in [N']\bigr\}$
into
\[
\cP = \bigl\{P_k' \cap R_a : k \in [N'] \ \mbox{and} \ a \in \cA \ \mbox{s.t.} \ P_k' \cap R_a \ne \emptyset \bigr\}
\]
to make it compatible with $p^\star$ and $\quant(\alpha,\,\cdot\,)$. We critically use here that $\cA$ is finite.

We can now state the adaptation of Theorem~\ref{th:main} to an $\epsilon$--calibrated forecaster with respect to a disjoint $\epsilon$--covering
compatible with $p^\star$.
We strengthen the assumption of $Q$--restriction of the opponent
used in Definition~\ref{def:opp-app} and Theorem~\ref{th:main}:
we do not only assume that $d(\bq_t,Q)$ vanishes on Ces{\'a}ro average
but instead that $d(\bq_t,Q) = 0$, i.e., $\bq_t \in \cl(Q)$, for all $t \geq 1$.

\begin{restatable}[Opportunistic $\epsilon$--approachability for adaptive conformal inference]{theorem}{thmainepsilon}\label{th:mainepsilon}
    Fix $\epsilon \in (0,1)$.
    The BO-ACI strategy of Algorithm~\ref{algo:blackwell}, when tuned with $\alpha \in (0,1)$
    and run with an auxiliary $\epsilon$--calibrated forecaster $\cF$ relying on
    a disjoint $\epsilon$--covering compatible with $p^\star$ in the sense of~\eqref{eq:def-compat}, is an $\epsilon$--opportunistically approaching
    strategy in the following sense: for any opponent, for any convex subset $Q \subseteq \Delta(\cB)$,
    \begin{align*}
    & \mbox{for all } t \geq 1\,, \quad d(\bq_t,Q) = 0 \as \\
    \mbox{entails} \qquad\quad & \limsup_{T \to \infty} d\Bigl(\overline{m}_T,\C_{\alpha,Q}^{\sqrt{\epsilon}}\Bigr) \leq (2L+1) \bigl(\epsilon + 2\sqrt{\epsilon} \bigr) \as,
    \end{align*}
    where $\smash{\displaystyle{\C_{\alpha,Q}^{\sqrt{\epsilon}} = \cl\biggl(\conv\Bigl(\bigl\{m^{\star}(\bq) : d(\bq,Q) \leq \sqrt{\epsilon} \bigr\}\Bigr)\biggr)}}$.
\end{restatable}

\section{Discussion and Future Work}

In this work, we formulate adaptive
conformal inference [ACI] as a repeated
two-player game with finite action sets
and vector-valued payoffs.
This opens a toolbox containing
Blackwell approachability
and its opportunistic extension, which we leverage
to construct a strategy that
guarantees validity in all cases at the desired level
and adapts the efficiency
guarantees to the
underlying, unknown, degree of stochasticity of the
opponent.
This ``best of many worlds'' property
provides a unified perspective on
exchangeable, adversarial,
and intermediate settings for adaptive conformal inference.

\paragraph{Future work and research perspectives.}
A first natural extension of this work includes
establishing finite-time convergence rates
for the validity and efficiency guarantees.
The validity criterion, when considered in isolation,
can be satisfied at a rate of $1/T$,
as proved directly by \citet{gibbs2021adaptive}
and as can be obtained also by the characterization
of slow versus fast rates for approachability by
\citet{mannor2013approachability}.
What would be the (optimal) rates
when simultaneously considering both criteria, validity and efficiency?
The analysis of the BO-ACI strategy relies, on the one hand, on the use of a calibrated auxiliary
forecaster: calibration may be achieved at a rate of $T^{-1/(1+|\cB|)}$.
However, \citet{marinov2026efficient} recently showed that
this calibration subroutine could be bypassed, which improves
the associated rates to $T^{-1/3}$ or $T^{-1/4}$,
depending on the (lack of) computational efficiency of the
approach.
The second important element is the convergence
stated in Definition~\ref{def:Q-restricted} of $Q$--restricted
opponents: it should come with a specific rate.
Both rates may then be combined to obtain
rates on the $m$--approachability of the sets $\cC_{\alpha,Q}$,
but it is highly unclear whether one would obtain
the best achievable rates doing so, in our specific problem
of validity and efficiency in ACI.

More generally,
the results presented in this article should be put
in perspective with the indicated
computationally efficient
variant of opportunistic approachability
developed by \citet{marinov2026efficient}\footnote{This work
was led in parallel to ours; the first
version of the present submission
was posted in October 2025 (see \url{https://arxiv.org/abs/2510.15824})
before any version of \citet{marinov2026efficient},
only publicly posted in February 2026.},
which builds on recent developments
in polynomial-time $\epsilon$--calibration
by \citet{peng2025high} and \citet{fishelson2026high}.

Third and finally, it would be interesting to extend
our framework to
handle sequential updates of the training and calibration sets;
this could be performed by considering time-dependent payoff
function~$m_t$.


\acks{This work was conducted as part of the Inria--EDF challenge.}

\clearpage
\appendix

\section*{Outline of the Appendices}

The appendices provide first the proofs of all results
stated in the main body:
\begin{itemize}
\item Appendix~\ref{sec:proofs-main} provides elementary proofs
of the general convergence result of Section~\ref{sec:main-results-OA},
of its instantiation to specific scenarios performed in
Section~\ref{ex:Q-restr}, and of the extension to $\epsilon$--calibration
proposed in Section~\ref{sec:boaci-epscalibr}.
\end{itemize}

They then provide reminders on (and occasional extensions of)
approachability theory and opportunistic approachability,
together with the rederivations of some facts.
These sections may of course be skipped by a reader familiar
with approachability theory.
More specifically,
\begin{itemize}
\item Appendix~\ref{app:reminder} offers a general exposition of and reminder on
approachability theory and its extension by \citet{bernstein2014opportunistic}
consisting of opportunistic approachability, with mere statements of
results and proofs omitted;
\item Appendix~\ref{app:Qprimal}
goes over Remark~\ref{rk:Q-prim-strat}
and shows that when the set $Q$ of restricted play is known to the learner,
opportunistic approachability is not needed and that
the classic primal approachability strategy can be adapted to guarantee convergence to $\C_Q$ instead of $\C$,
with a focus on the exchangeable case $Q = \{ \bq_{\unif} \}$;
\item Appendix~\ref{app:go-over-Bernstein} revisits the theory of
opportunistic approachability established by \citet{bernstein2014opportunistic}
for so-called D-sets: it performs two technical twists
(one technical correction, consisting of taking closures,
while showing the necessity thereof; and one relaxation,
consisting of not requiring piecewise continuity of
the response function) and provides a simplified
and less technical analysis than in the original reference
    (while keeping its spirit; simplifications take place at the margins).
\end{itemize}

\section{Proofs of the Main Results, Located in Sections~\ref{sec:main-results-OA}--\ref{ex:Q-restr}--\ref{sec:boaci-epscalibr}}
\label{sec:proofs-main}

This appendix proves all the results (lemmas, theorems, corollaries)
located in Sections~\ref{sec:main-results-OA}--\ref{ex:Q-restr}--\ref{sec:boaci-epscalibr},
and does so through elementary and self-contained
proofs.

\subsection{Proof of Lemma~\ref{lm:restricted-calibr} of Section~\ref{sec:calibr}}
\label{sec:proof-calibr-Q-restr}

The following lemma encompasses and generalizes Lemma~\ref{lm:restricted-calibr} of Section~\ref{sec:calibr}: it indeed covers
both the calibrated case of Lemma~\ref{lm:restricted-calibr}
and an $\epsilon$--calibrated result that will be useful when proving
Theorem~\ref{th:mainepsilon} of Section~\ref{sec:boaci-epscalibr}.

\begin{lemma}[Calibration under $Q$--restricted opponent]
\label{lm:restricted}
Consider any convex subset $Q \subseteq \Delta(\cB)$, and any $Q$--restricted opponent. If a forecaster is calibrated, respectively, $\epsilon$--calibrated, the forecasts $(\bz_t)_{t \geq 1}$ output satisfy, almost surely:
    \[
        \lim_{T \to \infty} \frac{1}{T}\sum_{t=1}^{T} d(\bz_t,Q) = 0\,, \qquad \mbox{respectively,} \qquad \limsup_{T \to \infty} \frac{1}{T}\sum_{t=1}^{T} d(\bz_t,Q) \leq \epsilon \,.
    \]
\end{lemma}

Lemma~\ref{lm:restricted} was stated and proved, in the calibrated case, by
\citet[Lemma 6]{bernstein2014opportunistic}.
We provide below an alternative, elementary proof of this result,
also covering the case of $\epsilon$--calibration. \medskip

\begin{proof}
    Fix a convex subset $Q \subseteq \Delta(\cB)$ and a sequence of mixed actions $(\bq_t)_{t \geq 1}$ output by a $Q$--restricted opponent.
    We decompose the proof in three steps: we show first a general result; second, we instantiate it for an $\epsilon$--calibrated forecaster; third, we use it repeatedly to get the result for a calibrated forecaster. \bigskip

    \emph{Step 1: a general result.} We fix $\epsilon >0$. We show that, for any finite set $\bigl\{\bu_k \in \Delta(\cB) : k \in [N]\bigr\}$, and for all sequences of indices $(k_t)_{t \geq 1}$ picked by the forecaster,
\begin{equation}
\begin{split}
    \limsup_{T \to \infty} \sum_{k=1}^{N} \left\lVert \frac{1}{T} \sum_{t=1}^{T} \indic_{k_t = k} \, \bigl(\bu_k - \bdelta_{b_t}\bigr) \right\rVert & \leq \epsilon \as \\ \mbox{entails} \hspace{5cm} \limsup_{T \to \infty} \frac{1}{T}\sum_{t=1}^{T} d(\bu_{k_t},Q) & \leq \epsilon \as \label{eq:generalcalibration}
\end{split}
\end{equation}
By martingale convergence (e.g., repeated applications of the Hoeffding-Azuma inequality together with the Borel-Cantelli lemma), the left--hand side of \eqref{eq:generalcalibration} is equivalent to
\begin{align}
    \limsup_{T \to \infty} \sum_{k=1}^{N} \left\lVert \frac{1}{T} \sum_{t=1}^{T} \indic_{k_t = k} \, (\bu_k - \bq_t) \right\rVert \leq \epsilon \as \label{eq:calibrationmixed}
\end{align}
We now introduce the empirical averages of the opponent's mixed actions over the rounds when $k$ is picked:
\begin{align*}
        \overline{\bq}_k(T) =
        \begin{cases}
        \displaystyle\frac{1}{N_k(T)}\sum_{t=1}^{T} \indic_{k_t = k} \, \bq_t \qquad &\mbox{if } N_k(T) \eqdef \displaystyle\sum_{t=1}^{T} \indic_{k_t = k} \geq 1 \,,\\
        \bb{0} &\mbox{if } N_k(T) = 0\,.
        \end{cases}
\end{align*}
By grouping according to the values of $k_t$ for the first equality, by a triangular inequality for the inequality,
and by definition of the $\overline{\bq}_k(T)$ for the second equality,
the target quantity in~\eqref{eq:generalcalibration} can be rewritten as
\begin{align*}
    \frac{1}{T}\sum_{t=1}^{T} d(\bu_{k_t},Q) &= \sum_{k=1}^{N} \frac{N_k(T)}{T} \, d(\bu_k,Q) \\
    & \leq \sum_{k=1}^{N} \frac{N_k(T)}{T} \, \bigl\lVert \bu_k - \overline{\bq}_k(T) \bigr\rVert + \sum_{k=1}^{N} \frac{N_k(T)}{T} \, d\bigl(\overline{\bq}_k(T),Q\bigr) \\
    &= \underbrace{\sum_{k=1}^{N} \left\lVert \frac{1}{T}\sum_{t=1}^T \indic_{k_t = k} \, (\bu_{k} - \bq_t) \right\rVert }_{\limsup ... \leq \epsilon \, \scriptsize\mbox{ by \eqref{eq:calibrationmixed}}} + \underbrace{\sum_{k=1}^{N} \frac{N_k(T)}{T} \, d\bigl(\overline{\bq}_k(T),Q\bigr)}_{\longrightarrow \,0 \,\scriptsize\mbox{ by $Q$--restriction}} \,.
\end{align*}
We substituted the limit behavior~\eqref{eq:calibrationmixed} for the first sum, while the second sum can be seen to vanish,
as follows. Indeed,
by convexity of $Q$, the function $d(\,\cdot\,,Q)$ is also convex (\citealp[Example 2.25]{rockafellar1998variational}), and thus,
for $k$ such that $N_k(T) \geq 1$,
\begin{align*}
    d\bigl(\overline{\bq}_k(T),Q \bigr) = d\Biggl(\frac{1}{N_k(T)}\sum_{t=1}^{T}\indic_{k_t=k} \, \bq_t \, , Q \Biggr) \leq \frac{1}{N_k(T)} \sum_{t=1}^{T}\indic_{k_t=k} \, d\bigl(\bq_t , Q \bigr) \,.
\end{align*}
Therefore,
\[
    \sum_{k=1}^{N} \frac{N_k(T)}{T} \, d\big(\overline{\bq}_k(T),Q\big) \leq \frac{1}{T}\sum_{t=1}^{T} d(\bq_t, Q) \,,
\]
where the right-hand side vanishes by the assumption of $Q$--restricted opponent. \bigskip

\emph{Step 2.} The stated claim for an $\epsilon$--calibrated forecaster follows immediately by applying~\eqref{eq:generalcalibration} to the set of centers from Definition~\ref{def:epsiloncalibrated} (i.e., $\bu_k = \bc_k$). \bigskip

\emph{Step 3.} The application to a calibrated forecaster is a slightly more complex; we actually show that its forecasts satisfy:
\begin{align}
\label{eq:step3-Qrestr}
   \forall \, \epsilon >0\,, \qquad \limsup_{T \to \infty} \frac{1}{T}\sum_{t=1}^{T} d(\bz_t, Q) \leq 2 \epsilon\,.
\end{align}
To that end, fix $\epsilon >0$. We consider a disjoint $\epsilon$--covering $\cP_{\epsilon} =\bigl\{P_k : k \in [N_\epsilon] \bigr\}$ as per Definition~\ref{def:disjointcovering}, with associated centers $\bigl\{ \bc_k : k \in [N_\epsilon]\bigr\}$. We can now define the mapping $\bz \mapsto \kappa_{\epsilon}(\bz) \in [N_\epsilon]$ given by $\bz \in P_{\kappa_{\epsilon}(\bz)}$. (Note that the mapping is well defined because $\cP_{\epsilon}$ is a partition.)
By construction of $\kappa_{\epsilon}$ and the definition of $\cP_{\epsilon}$,
\begin{align}
    \forall \, t\geq 1 \,, \qquad \bigl\{\kappa_{\epsilon}(\bz_t) = k \bigr\} = \bigl\{\bz_t \in P_k \bigr\}  \subseteq \bigl\{ \lVert \bz_t -\bc_k \rVert_1 \leq \epsilon\bigr\} \,. \label{eq:coveringproperties}
\end{align}
The calibration property of Definition~\ref{def:calibrated}, used with $G = P_k$ for a given $k\in[N_\epsilon]$, ensures that
\[
    \lim_{T \to \infty} \left\lVert \frac{1}{T} \sum_{t=1}^{T} \indic_{\kappa_{\epsilon}(\bz_t) = k} \, \bigl(\bz_t - \bdelta_{b_t}\bigr) \right\rVert =
    \lim_{T \to \infty} \left\lVert \frac{1}{T} \sum_{t=1}^{T} \indic_{\bz_t \in P_k} \, \bigl(\bz_t - \bdelta_{b_t}\bigr) \right\rVert
    = 0 \as
\]
Summing the above limit over the finitely many $k\in[N_\epsilon]$, we get
\[
\lim_{T \to \infty} \,\, \sum_{k = 1}^{N_\epsilon} \left\lVert \frac{1}{T} \sum_{t=1}^{T} \indic_{\kappa_{\epsilon}(\bz_t) = k} \, \bigl(\bz_t - \bdelta_{b_t}\bigr) \right\rVert = 0 \as
\]
Repeated applications of the triangular inequality based on~\eqref{eq:coveringproperties},
and the fact that the $\ell_1$--norm is larger than the Euclidean norm, then ensure that
\[
    \limsup_{T \to \infty} \sum_{k = 1}^{N_\epsilon} \left\lVert \frac{1}{T} \sum_{t=1}^{T} \indic_{\kappa_{\epsilon}(\bz_t) = k} \, \bigl(\bc_k - \bdelta_{b_t}\bigr) \right\rVert \leq \epsilon \as
\]
The limit above corresponds to the premise of the implication~\eqref{eq:generalcalibration}; the latter therefore yields that
\[
\frac{1}{T}\sum_{t=1}^T d(\bc_{\kappa_{\epsilon}(\bz_t)}, Q) \leq \epsilon \as
\]
Now, repeated applications of the triangular inequality, again based on~\eqref{eq:coveringproperties},
guarantee that
\[
\frac{1}{T}\sum_{t=1}^T d(\bz_t, Q) \leq \frac{1}{T}\sum_{t=1}^T d(\bc_{\kappa_{\epsilon}(\bz_t)}, Q) + \epsilon \,,
\]
so that property~\eqref{eq:step3-Qrestr} follows by collecting all elements together.
\end{proof}

\subsection{Proof of Theorem~\ref{th:main} of Section~\ref{sec:strat-BOACI}}
\label{sec:proof-th-main}

\thmain*

This theorem could be obtained as an instance of the general theory
of opportunistic approachability by
\citet{bernstein2014opportunistic}, which we recall in
Appendix~\ref{app:reminder}. However, when focusing on the specific
payoff function $m$, response function $p^\star$ (which is piecewise constant),
and goal function $m^\star$, we may bypass many technicalities required
to build a general theory. We provide below a concise, carefully
simplified and revisited, self-contained, proof. \medskip

\begin{proof}
    Fix a convex subset $Q \subseteq \Delta(\cB)$ and a sequence of mixed actions $(\bq_t)_{t \geq 1}$ output by a $Q$--restricted opponent; we show that $d\bigl(\overline{m}_T,\C_{\alpha,Q}\bigr) \to 0$ a.s.

    To do so, we consider the following upper bound, based on a triangular inequality:
    \[
        d\bigl(\overline{m}_T,\C_{\alpha,Q}\bigr) \leq \left\lVert \overline{m}_T-\frac{1}{T}\sum_{t=1}^T m(a_t,\bz_t)\right\rVert + d\!\left(\frac{1}{T}\sum_{t=1}^T m(a_t,\bz_t), \C_{\alpha,Q}\right) \,.
    \]
    Each of the two steps of the proof consists of proving that each of the two summands vanishes almost surely in the limit. \bigskip

    \emph{Step 1.}
    First, we recall that the BO-ACI strategy of Algorithm~\ref{algo:blackwell}
    only picks pure actions $a_t \in \cA$, given by $a_t = \quant(\alpha,\bz_t)$.
    Hence, we can define a partition $(R_a)_{a \in \cA}$ of $\Delta(\cB)$ based on the inverse
    images of $\quant(\alpha,\,\cdot\,)$, namely, $R_a = \bigl\{\bq \in \Delta(\cB) : \quant(\alpha,\bq) = a \bigr\}$.
    The action $a_t$ picked is such that $\bz_t \in R_{a_t}$, and we may group
    rounds according to the $(R_a)_{a \in \cA}$:
    \begin{align*}
	\frac{1}{T}\sum_{t=1}^T m(a_t,\bz_t)  =
    \sum_{a \in \cA } \frac{1}{T}\sum_{t=1}^T \indic_{\bz_t \in R_a} \, m(a,\bz_t)
    &= \sum_{a \in \cA} M_a \left(\frac{1}{T}\sum_{t=1}^T \indic_{\bz_t \in R_a} \, \bz_t\right) \\
    \mbox{and} \hspace{6cm} \overline{m}_T  = \frac{1}{T}\sum_{t=1}^T m(a_t,b_t)
    &= \sum_{a \in \cA} M_a \left(\frac{1}{T}\sum_{t=1}^T \indic_{\bz_t \in R_a} \, \bdelta_{b_t} \right),
    \end{align*}
    where the existence of a matrix $M_a$ such that $M_a\,\bz = m(a,\bz)$ for all $\bz \in \Delta(\cB)$ follows from the linear extension of $m$.
    Second, the calibration property of Definition~\ref{def:calibrated}, used with $G = R_a$,
    exactly guarantees that for each $a \in \cA$,
    \[
    \lim_{T \to \infty} \left\lVert \frac{1}{T}\sum_{t=1}^T \indic_{\bz_t \in R_a} \, \bigl(\bz_t-\bdelta_{b_t}\bigr) \right\rVert = 0 \as
    \]
    Collecting all elements together, resorting to a triangular inequality,
    and using the upper bound $\lVert m(a,b) \rVert \leq 2L+1$ for all $(a,b) \in \cA \times \cB$,
    we therefore have
    \[
    \left\lVert \overline{m}_T-\frac{1}{T}\sum_{t=1}^T m(a_t,\bz_t)\right\rVert
    \leq \sum_{a \in \cA} (2L+1) \,\, \left\lVert \frac{1}{T}\sum_{t=1}^T \indic_{\bz_t \in R_a} \, \bigl(\bz_t-\bdelta_{b_t}\bigr) \right\rVert \longrightarrow 0 \as,
    \]
    as announced. \bigskip

    \emph{Step 2.} We use the definition of $a_t$ as $p^\star(\bz_t) = \bdelta_{a_t}$ for the rewriting $m(a_t,\bz_t) = m^{\star}(\bz_t)$; we show that
    \[
    d\!\left(\frac{1}{T}\sum_{t=1}^T m(a_t,\bz_t), \C_{\alpha,Q}\right) = d\!\left(\frac{1}{T}\sum_{t=1}^T m^{\star}(\bz_t), \C_{\alpha,Q}\right) \longrightarrow 0 \as
    \]
    \noindent Fix $\eta >0$ for now and introduce the sets of rounds where the forecasts $\bz_t$ belong to the $\eta$--neighborhood of $Q$:
    \[
    \forall T \geq 1, \qquad \V_T(\eta) = \bigl\{t \in [T] : d(\bz_t,Q) \leq \eta\bigr\}\,.
    \]
    Their cardinalities $\bigl| \V_T(\eta) \bigr|$ satisfy $T - \bigl| \V_T(\eta) \bigr| = o(T)$; indeed,
    by Markov's inequality, and then by the $Q$--restriction property of the calibrated forecaster established in Lemma~\ref{lm:restricted},
    \begin{align}
        \frac{T-\bigl| \V_T(\eta) \bigr|}{T} = \frac{1}{T}\sum_{t=1}^{T} \indic_{d(\bz_t,Q) > \eta} \leq \frac{1}{\eta} \,\left(\frac{1}{T}\sum_{t=1}^{T} d(\bz_t,Q) \right) \xrightarrow[T \to \infty]{} 0 \as \label{eq:markov}
    \end{align}
    A triangular inequality entails that
    \begin{multline*}
    d\!\left(\frac{1}{T}\sum_{t=1}^T m^{\star}(\bz_t), \C_{\alpha,Q}\right) \\
    \leq \underbrace{\Biggl\lVert \frac{1}{T}\sum_{t=1}^{T}m^{\star}(\bz_t)-\frac{1}{\bigl| \V_T(\eta) \bigr|}\sum_{t \in \V_T(\eta)\!\!\!\!}m^{\star}(\bz_t) \Biggr\rVert}_{\leq
    2(2L+1)(T - |\V_T(\eta)|)/T}
    \,\,+\,\, d\Biggl(\frac{1}{\bigl| \V_T(\eta) \bigr|}\sum_{t \in \V_T(\eta)\!\!\!\!} m^{\star}(\bz_t) , \,\, \C_{\alpha,Q}\Biggr) \,,
    \end{multline*}
    where the upper bound indicated for the first term is obtained by the $(2L+1)$--boundedness of $m$ thus of $m^\star$ (see Step~1)
    and the triangle inequality
    \begin{align*}
    \lefteqn{\Biggl\lVert \frac{1}{T}\sum_{t=1}^{T} m^{\star}(\bz_t)-\frac{1}{\bigl| \V_T(\eta) \bigr|}\sum_{t \in \V_T(\eta)\!\!\!\!}m^{\star}(\bz_t) \Biggr\rVert} \\
    & \leq \frac{1}{T} \sum_{\,t \notin \V_T(\eta)\!}\bigl\lVert m^{\star}(\bz_t) \bigr\rVert + \Biggl(\frac{1}{\bigl| \V_T(\eta) \bigr|}-\frac{1}{T}\Biggr)\sum_{t \in \V_T(\eta)\!\!\!\!} \bigl\lVert m^{\star}(\bz_t) \bigr\rVert \\
    & \leq (2L+1) \left( \frac{T - \bigl| \V_T(\eta) \bigr|}{T} + \left( \frac{1}{\bigl| \V_T(\eta) \bigr|}-\frac{1}{T} \right) \bigl| \V_T(\eta) \bigr| \right)
    = 2 \, (2L+1) \frac{T - \bigl| \V_T(\eta) \bigr|}{T}\,.
    \end{align*}
    We also note that the set $\C_{\alpha,Q}$ is convex as the limit of convex sets.
    Thus, the function $d(\,\cdot\,,\C_{\alpha,Q})$ is also convex (\citealp[Example 2.25]{rockafellar1998variational}), so that,
    for all $T \geq 1$,
    \[
    d\Biggl(\frac{1}{\bigl| \V_T(\eta) \bigr|}\sum_{t \in \V_T(\eta)\!\!\!\!} m^{\star}(\bz_t) , \,\, \C_{\alpha,Q}\Biggr)
    \leq M_\eta \eqdef \sup \Bigl\{ d\bigl(m^\star(\bz), \, \C_{\alpha,Q}\bigr) : \bz \in \Delta(\cB) \ \ \mbox{s.t.} \ \ d(\bz,Q) \leq \eta \Bigr\}\,.
    \]
    Collecting all elements, we proved so far that for all $\eta > 0$,
    \begin{equation}
    \label{eq:ccl-pr-thmain-step2}
    \limsup_{T \to \infty} \ d\!\left(\frac{1}{T}\sum_{t=1}^T m^{\star}(\bz_t), \C_{\alpha,Q}\right) \leq M_\eta \as\,,
    \end{equation}
    and it now suffices to show that $M_\eta \to 0$ as $\eta \downarrow 0$.
    The proof for the aforementioned convergence relies on topological arguments,
    detailed in Appendix~\ref{app:geometric} below, which concludes the proof.
    \end{proof}

    \subsection{An Auxiliary Topological Result} \label{app:geometric}

    For each $\eta > 0$, we introduce
    \[
    \C_{\alpha,Q}^\eta = \cl\biggl(\conv\Bigl(\bigl\{m^{\star}(\bq) : d(\bq,Q) \leq \eta\bigr\}\Bigr)\biggr)
    \]
    and note that, by definitions and nestedness of the sets $\C_{\alpha,Q}^\eta$,
    \[
    M_\eta \leq \sup_{\bc \in \C_{\alpha,Q}^\eta} d(\bc,\C_{\alpha,Q}) \qquad \mbox{and} \qquad
    \C_{\alpha,Q} = \bigcap_{n \geq 1} \C_{\alpha,Q}^{1/n}\,.
    \]
    In addition, the sets $\C_{\alpha,Q}^{1/n}$ are bounded (since $m^\star$ is bounded) and closed thus compact.
    The proof of $M_{1/n} \to 0$, which corresponds to $M_\eta \to 0$ by nestedness, is therefore provided by
    the lemma below.

    \begin{lemma}[{see, e.g., \citealp[page~30]{henrot2005variation}}]\label{lm:Hausdorff}
    A nested sequence of non-empty compact subsets $\bigl(K_n\bigr)_{n \geq 1}$ of $\R^d$
    converges to their intersection $\smash{\displaystyle{K = \bigcap_{n \geq 1} K_n}}$ in the Hausdorff distance:
    \[
    d_H(K_n,K) = \max\!\left\{ \sup_{\bk \in K_n} d(\bk,K), \,\, \sup_{\bk \in K} d(\bk,K_n) \right\} \underset{n \to \infty}{\longrightarrow} 0 \,.
    \]
    \end{lemma}

    \begin{proof}
    As $K \subseteq K_n$ for all $n \geq 1$, we have $d(\bk, K_n) = 0$ for all $\bk \in K$, thus $\smash{\displaystyle{\sup_{\bk \in K} d_H(\bk,K_n)} = 0}$;
    we are anyway interested in the other supremum.

    By continuity of the point-to-set distance function $d(\,\cdot\,, K)$ and by compactness of the non-empty sets $K_n$, there exists a sequence of points $(\bk_n)_{n \geq 1}$ such that $\bk_n \in K_n$ and $\sup_{\bk \in K_n} d(\bk, K) = d(\bk_n, K)$ for all $n \geq 1$. By the Bolzano-Weierstrass theorem, we can extract a subsequence $\bk_{\varphi(n)}$ converging to some $\bk_\infty \in \R^d$, where $\varphi$ is a non-decreasing application
    from positive integers to positive integers.
    The sequence of sets is nested, therefore, first, $\bigl(d(\bk_n, K)\bigr)_{n \geq 1}$ is non-increasing thus converging,
    and second, that limit can only be the one of $d(\bk_{\varphi(n)}, K)$, which, by continuity, is $d(\bk_\infty,K)$.
    Now, again by nestedness, for each $n \geq 1$, the points $\bk_{\varphi(N)}$, with $N \geq n$, belong to $K_{\varphi(n)}$,
    which is closed; therefore, their limit $\bk_\infty$
    also belongs to $K_{\varphi(n)}$. This being true for all $n \geq 1$, and the sets being nested, we have that $\bk_\infty \in K$.
    Collecting all elements, we thus proved
    \[
    d(\bk_n, K) \underset{n \to \infty}{\longrightarrow} d(\bk_\infty,K) = 0\,,
    \]
    which concludes the proof.
    \end{proof}

    \subsection{Proof of Corollaries~\ref{cor:main} and~\ref{cor:mainepsilon} of Sections~\ref{sec:exch-main} and~\ref{sec:CSA}}
    \label{sec:proof-cor:main}

    \cormain*

    \begin{proof}
    We establish below the continuity of $m^{\star}$ at $\bq_{\unif}$ and also provide its value
    at that point.
    Both facts, together with Corollary~\ref{cor:single}, ensure the convergence
    \[
    \lim_{T \to \infty} \overline{m}_T = m^{\star}(\bq_{\unif}) = \begin{pmatrix}
    r(\alpha) \\
    \Leb\,\bigl(C_{1-\alpha}\bigr)
    \end{pmatrix} \,.
    \]

    For the continuity,
    recall that the response function is defined as $p^\star : \bq \longmapsto \bdelta_{\quant(\alpha, \bq)}$, where the function $\quant$ is defined as
    \[
        \quant(\alpha,\,\cdot\,) : \bq \in \Delta(\cB) \longmapsto \max \biggl\{a \in \cA : \sum_{b \in \cB} q_b \, \indic_{b \leq a}\leq \alpha\biggr\} \,.
    \]
    Since $\cB = \cA \setminus \{0\}$,
    \begin{equation}
    \label{eq:sumralpha}
    \forall a \in \cA, \qquad \sum_{b \in \cB} \frac{1}{|\cB|} \, \indic_{b \leq a} = a\,, \qquad \mbox{so that} \qquad
    \quant(\alpha, \bq_{\unif}) = r(\alpha)\,.
    \end{equation}
    Take $\epsilon = \min\bigl\{\alpha - r(\alpha), \, r(\alpha) - \alpha + 1/|\cB| \bigr\}$: since $\alpha \notin \cA$, we have $\epsilon > 0$.
    We now show that for all $\bq \in \Delta(\cB)$ with $\lVert \bq - \bq_{\unif} \rVert_1 < \epsilon$, we have $\quant(\alpha, \bq) = r(\alpha)
    =\quant(\alpha, \bq_{\unif}) $, which will ensure the continuity of $p^{\star}$, thus of $m^{\star}$ given the definition of the latter.
    For all $\bq$ such that $\lVert \bq - \bq_{\unif} \rVert_1 < \epsilon$, by a first triangular inequality,
    \[
    \sum_{b \in \cB} q_b \, \indic_{b \leq r(\alpha)} < \frac{1}{|\cB|} \sum_{b \in \cB} \indic_{b \leq r(\alpha)} + \epsilon = r(\alpha) + \epsilon \leq \alpha \,.
    \]
    Similarly, we have by another triangular inequality that
    \[
    \sum_{b \in \cB} q_b \, \indic_{b \leq r(\alpha) + 1/|\cB|} > \frac{1}{|\cB|}  \sum_{b \in \cB} \indic_{b \leq r(\alpha) + 1/|\cB|} - \epsilon = r(\alpha) + 1/|\cB| - \epsilon \geq \alpha \,,
    \]
    so that $\quant(\alpha, \bq) = r(\alpha)$, as announced.

    The value of $\bigl\{m^{\star}(\bq_{\unif}) \bigr\}$ follows: its first component equals $r(\alpha)$ given
    the left statement of~\eqref{eq:sumralpha}, while its second component equals $\Leb\,\bigl(C_{1-r(\alpha)}\bigr)
    = \Leb\,\bigl(C_{1-\alpha}\bigr)$.
    \end{proof}

    \cormainepsilon*

    \begin{proof}
        We recall that $|\cB| = T_{\calib} + 1$ and will rather write $|\cB|$ in the proof.
        Fix $\alpha,\,\epsilon$ as in the statement, i.e., $\alpha \in \bigl[1/|\cB|,\,1\bigr)$ and $\epsilon \in \bigl[0,\,1/|\cB|\bigr)$.
        We apply Theorem~\ref{th:main} with $Q_{\epsilon} = \bigl\{\bq \in \Delta(\cB) : \lVert \bq - \bq_{\unif} \rVert_1 \leq \epsilon \bigr\}$ and get the convergence
        \[
        \overline{m}_T \longrightarrow \C_{Q_{\epsilon}} \as,
        \]
        where, by Lemma~\ref{lm:targetgeneral}, we have $\displaystyle{\C_{Q_{\epsilon}} \subseteq [0, \alpha] \times \biggl[ 0, \,\, \lim_{\vphantom{d(\bq,Q_{\epsilon})}\eta \downarrow 0} \max_{\bq : d(\bq, Q_{\epsilon}) \leq \eta} \Leb\big(C_{1-\quant(\alpha, \bq)}\big) \biggr]}$.

        \smallskip

        We now show that the limit involved in the definition of the above set is smaller than $\Leb\,\bigl(C_{1-\alpha+1/|\cB|}\bigr)$, which will conclude the proof.
        By a triangular inequality together with a standard $\ell_1$--Euclidean norm
        inequality, we get that, for all $\eta >0$,
        \[
            \Bigl\{ \bq \in \Delta(\cB) : d\bigl(\bq, Q_{\epsilon}\bigr) \leq \eta \Bigr\} \subseteq \Bigl\{\bq \in \Delta(\cB) : \bigl\lVert \bq - \bq_{\unif} \bigr\rVert_1 \leq \epsilon + \sqrt{|\cB|}\,\eta \Bigr\} \,.
        \]
        Recall that $\epsilon < 1/|\cB|$; therefore $\epsilon + \sqrt{|\cB|}\,\eta < 1/|\cB|$ for $\eta >0$ small enough, and thus
        \[
        \lim_{\vphantom{d(\bq, Q_{\epsilon})}\eta \downarrow 0} \max_{\bq : d(\bq, Q_{\epsilon}) < \eta} \Leb\,\bigl(C_{1-\quant(\alpha, \bq)}\bigr) \leq \max_{\bq : \lVert \bq - \bq_{\unif} \rVert_1 < 1/|\cB|} \Leb\,\bigl(C_{1-\quant(\alpha, \bq)}\bigr) \,.
        \]
        We fix $\bq$ such that $\lVert \bq - \bq_{\unif} \rVert_1 < 1/|\cB|$ and resort to an argument similar as the final calculations of the proof of Corollary~\ref{cor:main}:
        \[
        \sum_{b \in \cB} q_b \, \indic_{b \leq r(\alpha) - 1/|\cB|} < \frac{1}{|\cB|} +
        \sum_{b \in \cB} \frac{1}{|\cB|} \, \indic_{b \leq r(\alpha) - 1/|\cB|} = \frac{1}{|\cB|} + r(\alpha) - \frac{1}{|\cB|} \leq \alpha \,.
        \]
        Therefore, $\quant(\alpha, \bq) \geq r(\alpha) - 1/|\cB| \geq 0$, so that
        \[
            \Leb\,\bigl(C_{1-\quant(\alpha, \bq)}\bigr) \leq \Leb\,\bigl(C_{1-r(\alpha)+1/|\cB|}\bigr) =  \Leb\,\bigl(C_{1-\alpha+1/|\cB|}\bigr) \,.
        \]
        The property above holds for all $\bq$ such that $\lVert \bq - \bq_{\unif} \rVert_1 < 1/|\cB|$, which concludes the proof.

    \end{proof}

    \subsection{Proof of Theorem~\ref{th:mainepsilon} of Section~\ref{sec:boaci-epscalibr}}
    \label{sec:eps-cal}

The proof is obtained by adapting the proof provided for Theorem~\ref{th:main} in Appendix~\ref{sec:proof-th-main},
and is thus ultimately also an adaptation of the proof techniques by
\citet{bernstein2014opportunistic}.

\thmainepsilon*

\begin{proof}
The proof follows the same structure as the one of Theorem~\ref{th:main},
whose notation we continue to use (see Appendix~\ref{sec:proof-th-main}).
In particular, we start with the triangular inequality
\begin{equation}
\label{eq:decomp-epsilon-proofmaineps}
    d\bigl(\overline{m}_T,\C_{\alpha,Q}^{\sqrt{\epsilon}}\bigr) \leq
    \underbrace{\left\lVert \overline{m}_T-\frac{1}{T}\sum_{t=1}^T m(a_t,\bz_t)\right\rVert}_{\limsup ... \ \leq (2L+1)\epsilon} +
    \underbrace{d\!\left(\frac{1}{T}\sum_{t=1}^T m(a_t,\bz_t), \C_{\alpha,Q}^{\sqrt{\epsilon}}\right)}_{\limsup ... \ \leq 2(2L+1)\sqrt{\epsilon}} \,;
\end{equation}
then, each step term is almost surely bounded by the limit quantities indicated above. \medskip

\emph{Step 1.}
The following two rewritings only depended on the BO-ACI strategy of Algorithm~\ref{algo:blackwell},
not on the auxiliary forecasting strategy, and they thus still hold:
\[
\overline{m}_T = \sum_{a \in \cA} m\!\!\left(a, \,\, \frac{1}{T}\sum_{t=1}^T \indic_{\bz_t \in R_a} \, \bdelta_{b_t} \right)
\quad \mbox{and} \quad
\frac{1}{T}\sum_{t=1}^T m(a_t,\bz_t) = \sum_{a \in \cA} m\!\!\left(a, \,\, \frac{1}{T}\sum_{t=1}^T \indic_{\bz_t \in R_a} \, \bz_t\right),
\]
where $\Arrowvert m(a,b) \Arrowvert \leq 2L+1$ for all $(a,b)$, so that
\[
\left\lVert \overline{m}_T-\frac{1}{T}\sum_{t=1}^T m(a_t,\bz_t)\right\rVert
\leq (2L+1) \sum_{a \in \cA} \left\lVert \frac{1}{T}\sum_{t=1}^T \indic_{\bz_t \in R_a} \, \bigl(\bz_t-\bdelta_{b_t}\bigr) \right\rVert .
\]
We now use the properties of the auxiliary forecasting strategy:
the compatibility property~\eqref{eq:def-compat} of the disjoint $\epsilon$--covering $\cP = \bigl\{P_k : k \in [N]\bigr\}$
implies that
\[
\forall a \in \cA, \qquad R_a = \bigcup_{k \,:\, P_k \subseteq R_a}  P_k \,,
\]
so that, by triangular inequalities for the first inequality,
by Definition~\ref{def:epsiloncalibrated} of $\epsilon$--calibration
for the equality and the final inequality,
    \begin{align*}
    \limsup_{T \to \infty} \sum_{a \in \cA} \left\lVert \frac{1}{T}\sum_{t=1}^T \indic_{\bz_t \in R_a} \, \bigl(\bz_t-\bdelta_{b_t}\bigr) \right\rVert
    & \leq \limsup_{T \to \infty} \smash{\sum_{k=1}^N \overbrace{\sum_{a \in \cA} \indic_{P_k \subseteq R_a}}^{=1}} \left\lVert \frac{1}{T}\sum_{t=1}^T \indic_{\bz_t \in P_k} \, \bigl(\bz_t-\bdelta_{b_t}\bigr) \right\rVert \\
    & = \limsup_{T \to \infty} \sum_{k=1}^N \left\lVert \frac{1}{T}\sum_{t=1}^T \indic_{k_t = k} \, \bigl(\bc_k-\bdelta_{b_t}\bigr) \right\rVert
    \leq \epsilon \as
    \end{align*}
Collecting all elements, we proved the first limsup in~\eqref{eq:decomp-epsilon-proofmaineps}.

\emph{Step 2.}
For the second terms in~\eqref{eq:decomp-epsilon-proofmaineps},
we first note that by definition of action $a_t$ played
and by $\epsilon$--calibration,
\[
\forall t \geq 1, \qquad m(a_t,\bz_t) = m^{\star}(\bz_t) = m(a_t,\bc_{k_t})\,.
\]
We introduce the set of indices for which $\bc_k$ belongs to the $\sqrt{\epsilon}$--neighborhood of $Q$,
\[
\K_\epsilon = \bigl\{k \in [N] : d(\bc_k, Q) \leq \sqrt{\epsilon}\bigr\}\,,
\]
and show later that
\begin{equation}
\label{eq:frequency}
\limsup_{T \to \infty} \frac{T - N_{\K_\epsilon}(T)}{T} \leq \sqrt{\epsilon} \as,
\qquad \mbox{where} \qquad
N_{\K_\epsilon}(T) = \sum_{k \in \K_\epsilon} N_k(T)\,.
\end{equation}
Given that $\epsilon \in (0,1)$ by assumption, we have
$\limsup N_{\K_\epsilon}(T) \geq 1$,
and for $T$ such that $N_{\K_\epsilon}(T) \geq 1$,
\begin{multline*}
d\!\left(\frac{1}{T}\sum_{t=1}^T m(a_t,\bz_t), \C_{\alpha,Q}^{\sqrt{\epsilon}}\right)
= d\!\left(\frac{1}{T}\sum_{k \in [N]} N_k(T)\, m^\star(\bc_k), \, \C_{\alpha,Q}^{\sqrt{\epsilon}}\right) \\
\leq \underbrace{\left\lVert \frac{1}{T} \!\!\!\sum_{k \in [N]} \!\! N_k(T) m^\star(\bc_k) \!- \!\frac{1}{N_{\K_\epsilon}\!(T)} \! \sum_{k \in \K_\epsilon} \!\! N_k(T)  m^{\star}(\bc_k) \right\rVert}_{\limsup \, \ldots \ \leq 2(2L+1)\sqrt{\epsilon}}
\!+ \underbrace{d\!\left(\!\frac{1}{N_{\K_\epsilon}\!(T)} \! \sum_{k \in \K_\epsilon} \!\! N_k(T) \, m^{\star}(\bc_k), \, \C_{\alpha,Q}^{\sqrt{\epsilon}}\!\right)}_{= \,0 \, \scriptsize\mbox{by definition of $\C_{\alpha,Q}^{\sqrt{\epsilon}}$ and $\K_\epsilon$}}\!.
\end{multline*}
The first term above is bounded as in the proof of Theorem~\ref{th:main}, with
$N_{\K_\epsilon}(T)$ playing the role of $\bigl| \V_T(\eta) \bigr|$,
by using the limit~\eqref{eq:frequency}, which we prove next.

Introduce (as in the proof of Lemma~\ref{lm:restricted}) the notation,
for each $k \in [N]$,
    \[
    \overline{\bq}_k(T) = \begin{cases}
    \displaystyle\frac{1}{N_k(T)}\sum_{t=1}^{T} \indic_{k_t=k} \, \bq_t \,, \qquad &
    \mbox{if } N_k(T) \eqdef \displaystyle{\sum_{t=1}^{T} \indic_{k_t=k} \geq 1} \,, \\
    \bb{0} \,, \qquad &\mbox{otherwise} \,.
    \end{cases}
    \]
    For all $k,T$ such that $N_k(T) \geq 1$, we have $d(\overline{\bq}_k(T), Q) = 0$ almost surely.
    Indeed, the set $Q$, thus the function $d(\,\cdot\,,Q)$, are convex, and by assumption $d(\bq_\tau,Q) = 0$ almost surely for all $\tau$.
    Hence, by a triangular inequality, for all $k,T$ such that $N_k(T) \geq 1$,
    \[
    \sqrt{\epsilon} \, \indic_{d(\bc_k, Q) > \sqrt{\epsilon}}
    \leq d(\bc_k, Q) \leq \lVert\bc_k - \overline{\bq}_k(T)\rVert + d\bigl(\overline{\bq}_k(T), Q\bigr)
    = \lVert\bc_k - \overline{\bq}_k(T)\rVert \as
    \]
    We actually get for all pairs $k,T$, even when $N_k(T) = 0$, that
    \[
    \sqrt{\epsilon} \, N_k(T) \, \indic_{d(\bc_k, Q) > \sqrt{\epsilon}}
    \leq N_k(T) \, \lVert\bc_k - \overline{\bq}_k(T)\rVert \as
    \]
    Therefore,
    \begin{align*}
    \sqrt{\epsilon} \,\, \frac{T - N_{\K_\epsilon}(T)}{T} & =
    \sqrt{\epsilon} \sum_{k=1}^{N} \frac{N_k(T)}{T} \indic_{d(\bc_k, Q) > \sqrt{\epsilon}} \\
    & \leq \sum_{k=1}^{N} \frac{N_k(T)}{T} \, \lVert\bc_k - \overline{\bq}_k(T)\rVert
    = \sum_{k=1}^{N} \left\lVert \frac{1}{T} \sum_{t=1}^{T} \indic_{k_t = k} \, \bigl(\bc_k - \bq_t \bigr) \right\rVert \,.
    \end{align*}
    Now, by the $\epsilon$--calibration property of Definition~\ref{def:epsiloncalibrated},
    \[
    \limsup_{T \to \infty} \sum_{k=1}^{N} \left\lVert \frac{1}{T} \sum_{t=1}^{T} \indic_{k_t = k} \, \bigl(\bc_k - \bdelta_{b_t} \bigr) \right\rVert \leq \epsilon \as,
    \]
    which by martingale convergence (e.g., repeated applications of the Hoeffding-Azuma inequality together with the Borel-Cantelli lemma) is equivalent to
    \[
    \limsup_{T \to \infty} \sum_{k=1}^{N} \left\lVert \frac{1}{T} \sum_{t=1}^{T} \indic_{k_t = k} \, \bigl(\bc_k - \bq_t \bigr) \right\rVert \leq \epsilon \as
    \]
    Collecting all elements, we proved the limit~\eqref{eq:frequency}, which concludes the proof.
\end{proof}

\section{Reminder of Classic Results in Online Adversarial Learning}
\label{app:reminder}

This appendix restates classic results in online adversarial learning,
i.e., in the settings where a learner faces an opponent reacting to
the actions picked.
More precisely,
\begin{itemize}
\item Appendix~\ref{sec:Blk} is devoted to Blackwell approachability
in its original form (\citealp{blackwell});
\item Appendix~\ref{sec:opp-app} presents the
extension by \citet{bernstein2014opportunistic} consisting of opportunistic approachability
and which forms the key tool used in designing and analyzing the
BO-ACI strategy of Algorithm~\ref{algo:blackwell}.
\end{itemize}
We provide here mere statements of the results
but Appendices~\ref{app:Qprimal} and
Appendices~\ref{app:go-over-Bernstein} offer
detailed proofs.
The considered vector-valued payoff function will be denoted by
$\mfm$, to distinguish it from the specific function $m$
considered in the main body and in Appendix~\ref{sec:proofs-main}.

\subsection{Blackwell Approachability in Its Original Form}
\label{sec:Blk}

We recall Blackwell's approachability theory
for general finite action sets $\cA$ and $\cB$ and for general vector-valued payoff functions $\mfm : \cA \times \cB \to \R^d$.
In the main body, this theory is applied to the specific payoff function~\eqref{eq:finite-game}
but this appendix considers a general bounded vector-valued payoff function $\mfm$.

At each round $t \geq 1$, the learner and the opponent simultaneously pick actions $a_t \in \cA$
and $b_t \in \cA$. The learner obtains the payoff $\mfm(a_t,b_t)$ and observes it;
unless explicitly stated (e.g., in the main body and in Appendix~\ref{app:go-over-Bernstein}),
the learner does not have access to the action $b_t$ of the opponent.
The learner wants the average payoff to converge to a closed convex set $\cC \subset \R^d$,
referred to as the target set.
Formally, the aim of the learner is to ensure that
\begin{equation}
\label{eq:def-appr}
\frac{1}{T} \sum_{t=1}^T \mfm(a_t,b_t) \eqdef
\overline{\mfm}_T \longrightarrow \cC \quad \mbox{a.s.} \qquad \mbox{in the sense} \qquad
\lim_{T\to\infty} d\bigl(\overline{\mfm}_T, \cC\bigr) = 0\,,
\end{equation}
where, for $u \in \R^d$, the distance of $u$ to the closed convex set $\cC$ equals $d(u,\cC) =
\displaystyle{\min_{c \in \cC} \Arrowvert u-c \Arrowvert}$.

The aim of the opponent is to prevent this convergence. Actually,
the proof of Blackwell's theorem (Theorem~\ref{th:blackwell} below) shows that
either the learner can approach $\cC$ or the opponent can approach the complement of a
neighborhood of $\cC$, i.e., make sure that
\[
\liminf_{T\to\infty} d\bigl(\overline{\mfm}_T, \cC\bigr) \geq \gamma > 0 \qquad \mbox{a.s.,}
\]
for some $\gamma > 0$, irrespective of the strategy of the learner.

Blackwell's theorem provides a necessary and sufficient condition for the approachability
of a closed convex set $\cC$ in a finite game, based on the bilinear extension of $\mfm$:
for all probability distributions $\bp = (p_a)_{a \in \cA} \in \Delta(\cA)$ and $\bq = (q_b)_{b \in \cB} \in \Delta(\cB)$,
\[
\mfm(\bp,\bq) = \sum_{a\in \cA}\sum_{b\in \cB} p_{a} \, q_{b} \, \mfm(a,b) \, .
\]

    \begin{restatable}[Approachability theorem, {\citealp{blackwell}}]{theorem}{thblackwell}
    \label{th:blackwell}
		A closed convex subset $\cC$ of~$\mathbb{R}^d$ is approachable by the learner, i.e., \eqref{eq:def-appr} may hold, if and only if
        \begin{align}
            \forall \bq \in \Delta(\cB), \ \exists \bp \in \Delta(\cA) \quad \text{s.t.} \quad \mfm(\bp, \bq) \in \cC \,. \label{eq:dual}
        \end{align}
	\end{restatable}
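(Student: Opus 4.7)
The plan is to prove the two directions separately: the necessity direction $(\Rightarrow)$ by a separation-plus-law-of-large-numbers argument, and the sufficiency direction $(\Leftarrow)$ by first upgrading the dual condition to a primal halfspace condition via a minimax argument, and then analyzing Blackwell's classical projection-based strategy. Throughout, I would use that $\Delta(\cA)$ and $\Delta(\cB)$ are compact convex, that $m$ is bilinear in $(\bp,\bq)$ and bounded, and that $\cC$ is closed convex so that the Euclidean projection $\projC$ is well defined and nonexpansive.

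For necessity, I would argue by contradiction. Suppose $\cC$ is approachable but there exists $\bq_0 \in \Delta(\cB)$ such that $m(\bp, \bq_0) \notin \cC$ for every $\bp \in \Delta(\cA)$. The set $K = \{m(\bp,\bq_0): \bp \in \Delta(\cA)\}$ is a compact convex subset of $\R^d$ disjoint from $\cC$, so by strict hyperplane separation there is a unit vector $v$ and $\delta>0$ with $\langle v,k\rangle \geq \sup_{c\in\cC}\langle v,c\rangle + \delta$ for every $k\in K$ (taking $\cC$ bounded first via its intersection with a large ball, or truncating the separation to the relevant compact region). If the opponent plays $b_t \sim \bq_0$ i.i.d., then whatever the learner's mixed actions $\bp_t$, the conditional expectations $\E[m(a_t,b_t)\mid \mathcal F_{t-1}] = m(\bp_t,\bq_0) \in K$, and a Hoeffding–Azuma martingale bound applied to the scalar sequence $\langle v, m(a_t,b_t)\rangle$ yields $\liminf_T \langle v, \overline m_T\rangle \geq \sup_{c\in\cC}\langle v,c\rangle + \delta$ almost surely, contradicting $d(\overline m_T,\cC)\to 0$.

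For sufficiency, the first step is to derive the primal halfspace condition: for every $x\notin\cC$, with $c=\projC(x)$ and $v=x-c\neq 0$, there exists $\bp^\star\in\Delta(\cA)$ such that $\langle v, m(\bp^\star,\bq)\rangle \leq \langle v,c\rangle$ for all $\bq\in\Delta(\cB)$. Applying von Neumann's minimax theorem to the bilinear function $(\bp,\bq)\mapsto \langle v, m(\bp,\bq)-c\rangle$ on the product of compact convex simplices, one has $\min_\bp \max_\bq = \max_\bq \min_\bp$. The dual hypothesis ensures that for every $\bq$ one can find $\bp$ with $m(\bp,\bq)\in\cC$, which by the supporting hyperplane characterization of $\projC(x)=c$ satisfies $\langle v,m(\bp,\bq)-c\rangle\leq 0$; hence $\max_\bq\min_\bp\leq 0$ and the desired $\bp^\star$ exists. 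The second step is the strategy itself: at round $t$, compute $\overline m_{t-1}$, set $c_{t-1}=\projC(\overline m_{t-1})$ and $v_{t-1}=\overline m_{t-1}-c_{t-1}$, and play the $\bp^\star_t$ provided by the primal condition (an arbitrary $\bp$ if $\overline m_{t-1}\in\cC$). Expanding $\|\overline m_t - c_{t-1}\|^2$ using $\overline m_t = (1-1/t)\overline m_{t-1} + (1/t)\,m(a_t,b_t)$ and using $d(\overline m_t,\cC)^2\leq \|\overline m_t-c_{t-1}\|^2$ yields the recursion
\[
d(\overline m_t,\cC)^2 \leq \Big(1-\tfrac{1}{t}\Big)^{\!2} d(\overline m_{t-1},\cC)^2 + \tfrac{2}{t}\Big(1-\tfrac{1}{t}\Big)\langle v_{t-1},\, m(a_t,b_t)-c_{t-1}\rangle + \tfrac{M^2}{t^2},
\]
where $M$ bounds $\|m\|$. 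By construction of $\bp^\star_t$, the conditional expectation of the cross term is nonpositive, and a standard Robbins–Siegmund or direct martingale argument then gives $d(\overline m_T,\cC)\to 0$ almost surely.

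The main obstacle I expect is the minimax step that converts the dual condition into the primal one: convexity of $\cC$ is essential here, since without it the sets $\{m(\bp,\bq):\bp\in\Delta(\cA)\}$ need not lie in a common supporting halfspace of $\cC$, and the equivalence breaks down. Once the primal form is obtained, the recursion above and the almost-sure convergence are routine, and the necessity direction is a direct separation argument.
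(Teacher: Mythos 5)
Your proof is correct and, for the sufficiency direction, follows essentially the same route as the paper: the paper's Appendix~\ref{app:primal} proves a generalized ($Q$-restricted) version of the sufficient condition using exactly your ingredients --- the variational characterization of $\projC$, von Neumann's minimax theorem to pass from the dual condition to the primal halfspace condition, the recursion on $\|\overline{m}_t - \projC(\overline{m}_{t-1})\|^2$, and conditional expectations to kill the cross term. Two points where you go beyond what the paper writes down. First, the paper does not prove the necessity direction at all (it defers to Blackwell's original argument); your separation-plus-Azuma contradiction argument is a correct and complete account of it, with the minor remark that strict separation of the compact convex set $K=\{m(\bp,\bq_0):\bp\in\Delta(\cA)\}$ from the disjoint closed convex set $\cC$ holds without any truncation, and automatically yields $\sup_{c\in\cC}\langle v,c\rangle<\infty$, so your parenthetical hedge is unnecessary. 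Second, the paper's appendix only establishes convergence of $\E[d(\overline{m}_T,\cC)^2]$ to zero (hence convergence in probability), whereas you sketch the almost-sure strengthening via a Robbins--Siegmund/martingale argument; that strengthening is standard and correct, and is in fact what the statement \eqref{eq:def-appr} formally requires.
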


The condition~\eqref{eq:dual} is referred to as Blackwell's dual condition.
The proof of Theorem~\ref{th:blackwell} is constructive: the explicit algorithm introduced therein
is based on a geometric argument and is referred to as the primal Blackwell's approachability strategy.
It consists, at round~$t$, of drawing $a_t$ independently at random according to a distribution $\bp_t$ satisfying
\begin{align}
\bp_t \in \argmin_{\bp \in \Delta(\cA)} \bmax_{\bq \in \Delta(\cB)} \bigl\langle \overline{\mfm}_{t-1} - \projC(\overline{\mfm}_{t-1}), \mfm( \bb{p},\bb{q}) \bigr\rangle \, , 	\label{eq:primal}
\end{align}
where $\projC$ is the projection in Euclidean norm onto $\cC$.

\begin{remark}
Theorem~\ref{th:blackwell} mostly indicates whether a sequential-learning problem evaluated through average payoffs
is feasible or not, but in general, the so-called primal approachability strategy~\eqref{eq:primal} cannot be computed efficiently
(often due to the projection onto $\cC$) nor is the strategy achieving the optimal rates of convergence.
\end{remark}

\paragraph{Elements of proofs for Theorem~\ref{th:blackwell}.}
The sufficient part follows from arguments located in Appendix~\ref{app:Qprimal}:
the strategy~\eqref{eq:primal} is studied therein, for the case $Q = \Delta(\cB)$.

The necessity part works by contradiction: when there exists $\bq^\dash \in \Delta(\cB)$
such that $\mfm(\bp, \bq^\dash) \not\in \cC$ for all $\bp \in \Delta(\cA)$,
then, since $\cC$ is closed, there exists $\gamma > 0$ such that
\[
\bigl\{ \mfm(\bp, \bq^\dash) : \bp \in \Delta(\cA) \bigr\} \subseteq \overline{\cC^\gamma}\,,
\]
where $\cC^\gamma$ is the $\gamma$--neighborhood of $\cC$ in Euclidean norm
and $\overline{\cC^\gamma}$ is the complement of $\cC^\gamma$.
Assume that the opponent draws the actions $b_t$ i.i.d.\ according to $\bq^\dash$.
For all strategies of the learner,
by martingale convergence (e.g., repeated applications of Hoeffding's inequality together with the Borel-Cantelli lemma),
\[
\lim_{T \to \infty} \left\lVert \mfm\bigl(\overline{\bp}_T,\bq^\dash\bigr) - \frac{1}{T} \sum_{t=1} \mfm(a_t,b_t) \right\rVert = 0 \as,
\]
where $\overline{\bp}_T$ denotes the empirical distribution of the actions $a_t$ played between rounds $t=1$ and $t=T$.
Thus,
\[
\frac{1}{T} \sum_{t=1} \mfm(a_t,b_t) \longrightarrow \overline{\cC^\gamma}  \as, \qquad \mbox{as $T \to \infty$.}
\]
In particular, the convergence to $\cC$ stated in \eqref{eq:def-appr} cannot hold.

\subsection{Opportunistic Approachability by \citet{bernstein2014opportunistic}}
\label{sec:opp-app}

This section summarizes the setting and main result of \citet{bernstein2014opportunistic}.
The aim is to design a general strategy of the learner not only approaching a given approachable set $\C \subset \R^d$, but actually approaching a smaller target set $\C_Q \subseteq \C$ whenever the opponent is $Q$--restricted, with $Q \subseteq \Delta(\cB)$.
We recall that the concept of $Q$--restricted opponent's play was introduced in Definition~\ref{def:Q-restricted}. \medskip

More formally, we consider the general framework of Section~\ref{sec:Blk} for a closed convex approachable set~$\cC$.
The dual condition~\eqref{eq:dual} in Blackwell's approachability theorem (see Theorem~\ref{th:blackwell}) ensures the existence of a response function $\mfp^\star : \Delta(\cB) \to \Delta(\cA)$ and of an associated goal function $\mfm^{\star} : \Delta(\cB) \to \C$ such that
\begin{align}
    \forall \bq \in \Delta(\cB), \quad \mfm^{\star}(\bq) \eqdef \mfm\bigl(\mfp^\star(\bq),\bq\bigr) \in \C \,. \label{eq:responseandgoalfunctions}
\end{align}
The response function $\mfp^\star$ is not uniquely defined;
the learner should pick a response function~$\mfp^\star$ such that the resulting payoff $\mfm^{\star}(\bq)$ is desirable for any mixed action $\bq \in \Delta(\cB)$
of the opponent.

We exploit this choice of $\mfm^\star$ to define smaller target sets $\cC_Q$ for $Q$--restricted opponents:
namely, $\C_Q$ is based on the image of $Q$ under the goal function $\mfm^{\star}$,
with some additional twists required for some technical reasons (taking closures\footnote{\citet{bernstein2014opportunistic}
do not consider closures, which is not an issue in many cases but can be in some other pathological
cases; we go over these issues in Appendix~\ref{app:go-over-Bernstein}.}
of convex hulls and considering the non-increasing limit of
images of $\eta$--neighborhoods of~$Q$).

\begin{definition}[closed convex image under $\mfm^\star$] \label{def:image}
    The closed convex image of the convex subset $Q \subseteq \Delta(\cB)$ under the goal function $\mfm^{\star}$ is defined as
    \[
        \C_Q = \bigcap_{\eta>0}\cl\biggl(\conv\Bigl(\bigl\{\mfm^{\star}(\bq) : d(\bq,Q) \leq \eta\bigr\}\Bigr)\biggr) \,.
    \]
\end{definition}

\noindent
We now state formally the property that strategies are expected to satisfy in this framework.

\begin{definition}[opportunistic approachability]
    \label{def:opp-app}
    A strategy is said to be opportunistically approaching a closed convex set $\cC$ given
    a goal function $\mfm^{\star}$ if, for any convex subset $Q \subseteq \Delta(\cB)$,
    it approaches the closed convex image $\C_Q$ whenever the opponent is $Q$--restricted:
    \[
    \lim_{T \to \infty} \frac{1}{T}\sum_{t=1}^{T} d(\bq_t,Q) = 0 \as \qquad \mbox{entails} \qquad \lim_{T \to \infty} d\bigl(\overline{\mfm}_T,\C_Q\bigr) = 0  \as \smallskip
    \]
\end{definition}

Such strategies exist, and are based on auxiliary calibrated forecasters (see Section~\ref{sec:calibr} for a reminder on the latter concept).
\citet{bernstein2014opportunistic} provide a general construction and study of opportunistically approaching strategies
for general closed convex approachable sets $\cC$, that comes with some necessary technicalities and is therefore rather long and complex. (They actually even cover the more general case of so-called D--sets $\cC$.)

For the sake of completeness, we provide in Appendix~\ref{app:go-over-Bernstein}
simplified proofs for this general construction.

\section{On $Q$--Primal Strategies to Approach $\C_Q$ when $Q$ is Known}
\label{app:Qprimal}

This section goes over Remark~\ref{rk:Q-prim-strat}
and does so both in some generality (with a general vector payoff function $\mfm$)
and for the specific case of adaptive conformal inference (with the specific
vector payoff function $m$ introduced therein). \medskip

The notion of opportunistic approachability is designed to handle unknown convex subsets $Q \subseteq \Delta(\cB)$
and relies on a so-called dual strategy, exploiting a response function $\mfp^\star$, as introduced in
Section~\ref{sec:opp-app}.
If instead $Q$ were known to the learner, one may want to adapt the primal approachability strategy~\eqref{eq:primal} to
directly approach $\C_Q$, and not only $\C$.
In this section, we first define such a strategy,
show (in Appendix~\ref{sec:Qprim-mfm}) that it indeed approaches $\C_Q$ whenever the opponent is $Q$--restricted,
and characterize it in the context of adaptive conformal inference in the case of the exchangeable setting
(in Appendix~\ref{sec:Qprim-m}).

\paragraph{General statement.}
Under the Blackwell condition~\eqref{eq:dual},
the $Q$--primal strategy is defined as follows: at round $t \geq 1$, draw $a_t$ independently at random according to a distribution $\bp_t$ satisfying
\begin{align}
    \bp_t \in \argmin_{\bp \in \Delta(\cA)} \bmax_{\bq \in \cl(Q)} \,\, \langle \overline{\mfm}_{t-1} - \projCQ(\overline{\mfm}_{t-1}) , \mfm(\bp,\bq) \rangle\,. \label{eq:Qprimal}
\end{align}
This strategy only differs from the original primal Blackwell's strategy~\eqref{eq:primal}
in that the maximum is taken over $\cl(Q)$ instead of $\Delta(\cB)$
and in that the projection is taken onto $\C_Q$ instead of $\C$.
Note that the closure $\cl(Q)$ is used in~\eqref{eq:Qprimal},
mostly for the sake of elegance of the formula (to have a maximum instead of a mere supremum
over $Q$), without changing the strategy in any way.

We state next that the $Q$--primal strategy approaches $\C_Q$ whenever the opponent is $Q$--restricted.
This result generalizes the sufficient condition of Blackwell's approachability theorem (restated as Theorem~\ref{th:blackwell});
the latter indeed corresponds to the case $Q = \Delta(\cB)$.

\begin{restatable}{theorem}{thQprimal}\label{th:Qprimal}
     If the opponent's play is $Q$--restricted, then the $Q$--primal strategy defined recursively by~\eqref{eq:Qprimal}
     approaches the target set $\C_Q$, i.e.,
     \[
      \lim_{T \to \infty} \frac{1}{T}\sum_{t=1}^{T} d(\bq_t,Q) = 0 \as \qquad \mbox{entails} \qquad \overline{\mfm}_T \longrightarrow \C_Q \as
     \]
\end{restatable}

The theorem is proved below, in Appendix~\ref{sec:Qprim-mfm}.
Its proof is an adaptation of the classic proof
by~\cite{blackwell} (see also the exposition by
\citealp[Chapter 2]{mertens2015repeated}),
with some immediate adaptations
to handle
the $Q$--restriction on the opponent's play.

\subsection{Specific statement in the exchangeable setting}
\label{sec:Qprim-m}

We now instantiate the $Q$--primal strategy for adaptive conformal inference
in the case of the exchangeable setting, where $Q = \{\bq_{\unif}\}$.
We thus consider the specific vector payoff function $m$ of Theorem~\ref{th:finite-game}. \medskip

When exchangeability is known to hold, the natural strategy is the classic conformal procedure,
which plays $a_t = r(\alpha)$ at every round $t$.
We show that the $\{\bq_{\unif}\}$--primal strategy does not reduce to this procedure:
see Example~\ref{ex:qunif-primal-notreduc} below.

In contrast, the opportunistic approachability strategy of Algorithm~\ref{algo:blackwell}
adapts to this favorable regime: whenever the calibrated forecaster outputs
a forecast $\bz_t$ sufficiently close to $\bq_{\unif}$
it picks $a_t = r(\alpha)$,
exactly as in classic conformal prediction.
Indeed, the final part of the proof
of Corollary~\ref{cor:main}, located in Appendix~\ref{sec:proof-cor:main},
shows that $\quant(\alpha, \bq) = r(\alpha)$ for distributions $\bq$
in a $\min\bigl\{\alpha - r(\alpha), \, r(\alpha) - \alpha + 1/|\cB| \bigr\}$--neighborhood
of $\bq_{\unif}$ in $\ell_1$--norm.

\begin{example}[$\{\bq_{\unif}\}$--primal strategy]
\label{ex:qunif-primal-notreduc}
    Consider $\alpha \notin \cA$. In the exchangeable setting, the $\{\bq_{\unif}\}$--primal strategy
    may be written in closed form as follows (where quadrants are as in Figure~\ref{fig:decomposition}):
    \begin{itemize}
    \item If \, $\overline{m}_{t-1} = \bigl(r(\alpha), \, \Leb\,(C_{1-r(\alpha)})\bigr)^{\top}\!,$ then $\bp_t$ can be any distribution in $\Delta(\cA)$;
    \item Otherwise,
    \[
    \bp_t \begin{cases}
        = \bdelta_{0} \,, \quad & \text{if} \quad \overline{m}_{t-1} \mbox{ is in the bottom right quadrant,} \\
        = \bdelta_{1} \,, \quad & \text{if} \quad \overline{m}_{t-1} \mbox{ is in the top left quadrant,} \\
        \mbox{is any } \bp \in \Delta(\cA_t) \,, \quad & \text{if} \quad \overline{m}_{t-1} \mbox{ is in the bottom left or top right quadrant,}
    \end{cases}
    \]
    where
    \[
    \cA_t = \argmin_{a \in \cA} \Bigl\{ \bigl(\Pi_1(\overline{m}_{t-1})-r(\alpha)\bigr) a +
    \bigl(\Pi_2(\overline{m}_{t-1})-\Leb\,(C_{1-r(\alpha)})\bigr) \Leb\,(C_{1-a}) \Bigr\}
    \]
    and $\Pi_1(\,\cdot\,)$ and $\Pi_2(\,\cdot\,)$ denote respectively the
    extraction of the first and second coordinates of a vector in $\R^2$.
    \end{itemize}
\end{example}

\begin{proof}
    Fix $\alpha \notin \cA$.
    In the exchangeable setting, we have $\C_{\alpha,Q} = \{r(\alpha)\} \times \bigl\{\Leb\,(C_{1-r(\alpha)})\bigr\}$
    since $m^{\star}$ is continuous at $\bq_{\unif}$ as $\alpha \notin \cA$;
    see the proof of Corollary~\ref{cor:main}.
    Thus, we get that
    \begin{align*}
    \overline{m}_{t-1} - \projCQ(\overline{m}_{t-1}) =
    \begin{pmatrix}
        \Pi_1\bigl(\overline{m}_{t-1}\bigr) - r(\alpha) \\
        \Pi_2\bigl(\overline{m}_{t-1}\bigr) - \Leb\,\bigl(C_{1-r(\alpha)}\bigr)
    \end{pmatrix} \,.
    \end{align*}
    Resorting to the definition of the payoff function together with the elementary fact that
    \[
    \forall a \in \cA, \qquad
    \sum_{b \in \cB} \, \indic_{b \leq a} = (T_{\calib}+1) \, a\,,
    \]
    we get $m(a,\bq_{\unif}) = \bigl(a, \, \Leb\,(C_{1-a})\bigr)^{\top}\!$ for all $a \in \cA$.
    Therefore, we obtain the following rewriting, for all $\bp \in \Delta(\cA)$,
    \begin{multline*}
        \langle \overline{m}_{t-1} - \projCQ(\overline{m}_{t-1}) , m(\bp,\bq_{\unif}) \rangle \\
        = \sum_{a \in \cA} p_a \, \Bigl(\bigl(\Pi_1(\overline{m}_{t-1})-r(\alpha)\bigr) a +
        \bigl(\Pi_2(\overline{m}_{t-1})-\Leb\,(C_{1-r(\alpha)})\bigr) \Leb\,(C_{1-a}) \Bigr) \,.
    \end{multline*}
    The claimed case decomposition comes from the fact that the above expression is linear in $\bp$,
    so that $\bp_t$ can be any distribution with support on the set $\cA_t$
    of minimizers of the following function:
    \[
        a \in \cA \longmapsto \bigl(\Pi_1(\overline{m}_{t-1})-r(\alpha)\bigr) a +
        \bigl(\Pi_2(\overline{m}_{t-1})-\Leb\,(C_{1-r(\alpha)})\bigr) \Leb\,(C_{1-a}) \,.
    \]
    \begin{figure}[t]
    \usetikzlibrary{patterns}
\centering
\begin{tikzpicture}[scale=4.0, x = 2cm, y = 0.5cm]
	\def\L{1}
	\def\alphaCoord{0.58}
	\def\rCoord{0.22}

	\fill[green!15,pattern=horizontal lines,pattern color=green!55]
		(0,0) rectangle (\rCoord,2*\L*\alphaCoord);
	\fill[red!15,pattern=dots,pattern color=red!60]
		(\rCoord,0) rectangle (1,2*\L*\alphaCoord);
	\fill[blue!15,pattern=north west lines,pattern color=blue!60]
		(0,2*\L*\alphaCoord) rectangle (\rCoord,2*\L);
	\fill[green!15,pattern=horizontal lines,pattern color=green!55]
		(\rCoord,2*\L*\alphaCoord) rectangle (1,2*\L);

	\draw[thick, color=blue!60] (0,2*\L) -- (\rCoord,2*\L);
	\draw[thick, color=blue!60] (0,2*\L*\alphaCoord) -- (0,2*\L);
	\draw[thick, color=green!55] (\rCoord,2*\L) -- (1,2*\L);
	\draw[thick, color=green!55] (1,2*\L*\alphaCoord) -- (1,2*\L);
	\draw[thick, color=green!55] (0,0) -- (0,2*\L*\alphaCoord);
	\draw[thick, color=green!55] (0,0) -- (\rCoord,0);
	\draw[thick, color=red!60] (\rCoord,0) -- (1,0);
	\draw[thick, color=red!60] (1,2*\L*\alphaCoord) -- (1,0);

	\draw[thick,dashed,color=blue!60] (\rCoord,2*\L*\alphaCoord) -- (\rCoord,2*\L);
	\draw[thick,dashed,color=blue!60] (0,2*\L*\alphaCoord) -- (\rCoord,2*\L*\alphaCoord);

	\draw[thick,dashed,color=red!60] (\rCoord,2*\L*\alphaCoord) -- (1,2*\L*\alphaCoord);
	\draw[thick,dashed,color=red!60] (\rCoord,0) -- (\rCoord,2*\L*\alphaCoord);

	\fill (\rCoord,2*\L*\alphaCoord) circle (0.8pt)
	node[below right]{};

	\node[below left] at (0,0) {$0$};
	\node[below] at (1,0) {$1$};
	\node[left] at (0,2*\L) {$2L$};
	\node[left] at (0,2*\L*\alphaCoord) {$\Leb\,(C_{1-\alpha})$};
	\node[below] at (\rCoord,0) {$r(\alpha)$};

	\node[align=center] at (0.115,1.55) {top \\ left};
	\node at (0.6,0.55) {bottom right};
	\node[align=center] at (0.115,0.55) {bottom \\ left};
    \node at (0.6,1.55) {top right};
\end{tikzpicture}
    \caption{\label{fig:decomposition}
    Decomposition of $[0,1] \times [0,2L]$ into quadrants and a central point;
    the blue and red dotted lines belong to their quadrants
    while the black dot is the singleton set $\C_{\alpha,Q}$
    and does not belong to any of the four quadrants.
    The $\{\bq_{\unif}\}$--primal strategy depends on the position of
    $\overline{m}_{t-1}$ in this space. }
    \end{figure}

    In the top left and bottom right quadrants (blue and red regions in Figure~\ref{fig:decomposition}),
    the above function is strictly monotone in $a$
    (we use here also that calibration scores are all different)
    and is thus minimized by either $a = 0$ or $a = 1$.
    For instance, if the first term is negative and the second term is positive
    (i.e., $\overline{m}_{t-1}$ is in interior of the top left quadrant),
    then the above function is decreasing in $a$ and therefore minimized by $a=1$;
    this argument extends to the boundary of that quadrant (when one of the
    terms is null, which corresponds to the dotted frontier lines in
    Figure~\ref{fig:decomposition}).
    A symmetric argument holds for the bottom right quadrant and returns $a=0$.

    In the remaining cases, i.e., the green region in Figure~\ref{fig:decomposition},
    the above function is not necessarily monotone in $a$
    and may even have several minimizers, so that $\bp_t$ can be any distribution
    supported on the set of minimizers.
\end{proof}

\subsection{Proof of Theorem~\ref{th:Qprimal}}
\label{sec:Qprim-mfm}

\thQprimal*

The proof is an adaptation of the classic proof
by~\cite{blackwell} (see also the exposition by
\citealp[Chapter 2]{mertens2015repeated}),
with some immediate adaptations
to handle
the $Q$--restriction on the opponent's play. \medskip

\begin{proof}
    The proof is an immediate adaptation of the classic proof
    by~\cite{blackwell} to handle
    the $Q$--restriction on the opponent's play.

    We decompose the proof in four steps:
    we first use the dual condition together with
    a minimax argument to
    get a supporting-hyperplane inequality;
    second, we use this inequality together
    with an induction argument to
    derive an upper bound
    on the distance to the target set $\C_Q$;
    third, we invoke the $Q$--restriction property
    and get the convergence in probability (the adaptations
    are mainly located here);
    fourth, we conclude by studying the convergence of a non-negative supermartingale,
    (as in~\citealp[Chapter 2]{mertens2015repeated}), to get the
    desired almost-sure convergence. \medskip

    \emph{Step 1.} We show that,
    for all $t \geq 1$,
    the mixed action
    $\bp_{t+1} \in \Delta(\cA)$
    defined in~\eqref{eq:Qprimal} satisfies
        \begin{equation}
			\forall \bq \in \cl(Q)\,, \qquad
            \langle \overline{\mfm}_{t} - \projCQ(\overline{\mfm}_{t}), \mfm(\bp_{t+1},\bq) - \projCQ(\overline{\mfm}_{t}) \rangle \leq 0 \,. \label{eq:vanneumann}
		\end{equation}
    Fix $t \geq 1$ and $\bq \in \cl(Q)$.
    The construction of the response function
    $\mfp^{\star}$ in~\eqref{eq:responseandgoalfunctions}
    and of the target set $\C_Q$
    in~Definition~\eqref{def:image} ensure that
    $\mfm\bigl(\mfp^{\star}(\bq),\bq\bigr) \in \C_Q$.
    Hence, by the characterization of the Euclidean
    projection onto a closed convex set
    (Hilbert projection theorem applied to
    the non-empty closed convex set $\C_Q$),
    we get that,
    \[
		\langle \overline{\mfm}_{t} - \projCQ(\overline{\mfm}_{t}), \mfm\bigl( \mfp^{\star}(\bq),\bq \bigr) - \projCQ(\overline{\mfm}_{t}) \rangle \leq 0 \, .
	\]
	In particular, it implies that the
    minimum over $\bp \in \Delta(\cA)$
    of the left-hand side of the above inequality
    is also non-positive.
    The latter holds for all $\bq \in \cl(Q)$,
    which by the Von Neumann minimax theorem,
    applied to the convex and compact sets
    $\Delta(\cA)$ and $\cl(Q)$,
    entails that
    \[
		\min_{\bp \in \Delta(\cA)\vphantom{\cl(Q)}} \max_{\bq \in \cl(Q)} \,
        \langle \overline{\mfm}_{t} - \projCQ(\overline{\mfm}_{t}),
        \mfm( \bp,\bq ) - \projCQ(\overline{\mfm}_{t}) \rangle \leq 0 \, .
	\]
    The definition of $\bp_{t+1}$ in~\eqref{eq:Qprimal} concludes the first step of the proof.
\medskip

        \emph{Step 2.}
        We denote
        $d_t = d\bigl(\overline{\mfm}_{t}, \C_Q\bigr)$,
        for all $t \geq 1$,
        and show that, for all $T \geq 1$,
        \begin{align}
            \E\bigl[d_{T+1}^2 \bigr] &\leq
            \frac{2}{T+1} \lVert \mfm \rVert_{\infty}^2 +
            \frac{2}{T(T+1)}\sum_{t=1}^{T} t \, \E\bigl[\nu_t(Q) \bigr] \,. \label{eq:restprimalrecu}
        \end{align}
        where $\lVert \mfm \rVert_{\infty} = \max\bigl\{\lVert \mfm(a,b) \rVert : (a,b) \in \cA \times \cB\bigr\}$
        is the uniform bound on the payoff function
        and $\nu_t(Q) = \langle \mfm(\bp_{t+1},\bq_{t+1}) - \mfm\bigl(\bp_{t+1},\bq_{t+1}'\bigr) , \overline{\mfm}_{t}  - \projCQ(\overline{\mfm}_{t}) \rangle$,
        with $\bq_{t+1}'$ being the orthogonal projection
        of $\bq_{t+1} \in \Delta(\cB)$ onto $\cl(Q)$.
        \bigskip

        Fix $t\geq 1$.
		We first use that $d_{t+1} \leq \lVert \overline{\mfm}_{t+1} - \projCQ(\overline{\mfm}_{t})\rVert$,
        then that $\lVert \overline{\mfm}_{t+1} - \overline{\mfm}_{t} \rVert
        = \frac{1}{t+1}\lVert\mfm(a_{t+1},b_{t+1})-\overline{\mfm}_{t}\rVert \leq \frac{2}{t+1}\lVert \mfm \rVert_{\infty}$,
        and get the following recurrent inequality:
		\begin{align}
			d_{t+1}^2 \leq d_t^2 + \frac{2}{t+1} \lVert \mfm \rVert_{\infty}^2 +
            \frac{2}{t+1} \langle \mfm(a_{t+1},b_{t+1})-\overline{\mfm}_{t} , \overline{\mfm}_{t}  - \projCQ(\overline{\mfm}_{t}) \rangle \,.
            \label{eq:recurrentinequality}
	    \end{align}
        Let $c_t = \overline{\mfm}_{t}  - \projCQ(\overline{\mfm}_{t})$.
        By a simple algebraic manipulation, we get that
        \begin{multline*}
            \langle \mfm(a_{t+1},b_{t+1}) - \overline{\mfm}_{t} , c_t \rangle
            = \langle \mfm(a_{t+1},b_{t+1}) - \mfm(\bp_{t+1},\bq_{t+1}) , c_t \rangle  \\
            + \underbrace{\langle \mfm(\bp_{t+1},\bq_{t+1}) - \mfm\bigl(\bp_{t+1},\bq_{t+1}'\bigr) , c_t \rangle}_{
                \leq 0 \, \footnotesize\mbox{ by inequality~\eqref{eq:vanneumann}, obtained in Step~1}}
                            + \langle \mfm\bigl(\bp_{t+1},\bq_{t+1}'\bigr) - \overline{\mfm}_{t} , c_t \rangle - d_t^2\,.
        \end{multline*}
        Let $\F_t = \sigma\bigl((a_{\tau},b_{\tau})_{\tau \leq t}\bigr)$,
        be the natural filtration
        so that $\bp_{t+1}$, $d_t$ and $\overline{\mfm}_t$ are $\F_t$--measurable.
        Since $\mathbb{E}\bigl[\langle \mfm(a_{t+1},b_{t+1}) - \mfm(\bp_{t+1},\bq_{t+1}) , c_t \rangle \, \big| \mathcal{F}_t \bigr] = 0$
        in the above decomposition,
        we consider the conditional expectation with
        respect to $\F_t$ in~\eqref{eq:recurrentinequality},
        and get that
        \begin{align}
            \E \bigl[d_{t+1}^2 \big| \F_t \bigr] \leq \Bigl(1-\frac{2}{t+1} \Bigr)d_t^2 + \frac{2}{(t+1)^2} \lVert \mfm \rVert_{\infty}^2
            + \frac{2}{t+1} \E\bigl[ \nu_t(Q) \big| \F_t \bigr] \,. \label{eq:generalineq}
        \end{align}
        We can now unfold this induction inequality to get
        to the desired bound~\eqref{eq:restprimalrecu}.
        Let $u_t = \E\bigl[d_{t}^2 \bigr]$ and
        $v_t =  \frac{2}{(t+1)^2} \lVert \mfm \rVert_{\infty}^2 + \frac{2}{t+1}\E\bigl[ \nu_t(Q) \bigr]$.
        We take the expectation in~\eqref{eq:generalineq}, and get that
        \begin{align}
            u_{t+1} \leq \Bigl(1-\frac{2}{t+1} \Bigr) \, u_t + v_t = \frac{t-1}{t+1} \, u_t + v_t \,. \label{eq:to_unfold}
        \end{align}
        By induction, we show that the following inequality holds, for all $T \geq 1$:
        \begin{align}
    		u_{T+1} \leq \sum_{t=1}^T \frac{t(t+1)}{T(T+1)} \, v_t \label{eq:unfolded} \,.
        \end{align}
        Indeed, \eqref{eq:unfolded} holds for $T=1$
        and assume it holds at round $T \geq 1$,
        then by~\eqref{eq:to_unfold}
        \begin{align*}
            u_{T+2} &\leq \frac{T}{T+2}\sum_{t=1}^T v_t\frac{t(t+1)}{T(T+1)} + v_{T+1} \leq \sum_{t=1}^{T+1} v_t\frac{t(t+1)}{(T+1)(T+2)} \,,
        \end{align*}
        which concludes the proof by induction and also
        directly implies the desired bound~\eqref{eq:restprimalrecu}.
        \medskip

        \emph{Step 3.} We invoke the $Q$--restriction property
        on the opponent's play, and show that
        \[
            \lim_{T \to \infty} \sum_{t=1}^{T} \frac{2}{t+1} \nu_t(Q) =  0 \as\,,
            \qquad \mbox{and thus} \qquad
            \lim_{T \to \infty} \E\big[d_T^2\big] = 0 \as
        \]
        By standard norm inequalities, we have that, for all $\bq$, $\bq' \in \Delta(\cB)$,
        and all $t \geq 1$,
        \[
            \lVert \mfm(\bp_{t+1},\bq) - \mfm\bigl(\bp_{t+1}, \bq'\bigr) \rVert \leq \lVert \mfm \rVert_{\infty} \lVert \bq - \bq' \rVert_1 \leq \lVert \mfm \rVert_{\infty} \sqrt{|\cB|} \, d(\bq, \bq') \,.
        \]
        Therefore, by Cauchy-Schwarz inequality, we obtain that
        \begin{align*}
            \nu_t(Q) &= \langle \mfm(\bp_{t+1},\bq_{t+1}) - \mfm\bigl(\bp_{t+1},\bq_{t+1}'\bigr) , \overline{\mfm}_{t}  - \projCQ(\overline{\mfm}_{t}) \rangle \\
            &\leq \bigl(2 \lVert \mfm \rVert_{\infty}^2 \sqrt{|\cB|} \bigr) \, d\bigl(\bq_{t+1}, Q\bigr) \,
        \end{align*}
        As a consequence, we get that, for all $T \geq 1$,
        \[
            \sum_{t=1}^{T} \frac{2}{t+1} \nu_t(Q) \leq
            \bigl(4 \lVert \mfm \rVert_{\infty}^2 \sqrt{|\cB|} \bigr) \, \frac{1}{T}\sum_{t=1}^{T}  d\bigl(\bq_{t+1}, Q\bigr)  \,,
        \]
        which converges to $0$ almost surely by
        the $Q$--restriction property on the opponent's play.
        Thus, by the upper bound~\eqref{eq:restprimalrecu}
        obtained in Step~2, we also get that $d_T^2$
        converges to $0$ in probability, i.e.,
        $\lim_{T \to \infty} \E\bigl[d_T^2\bigr] \to 0$.
        \medskip

        \emph{Step 4.} Finally, we show that $d_t$ converges to $0$ almost surely.
        To do so, we consider the following non-negative
        supermartingale $\bigl(w_t\bigr)_{t \geq 1}$,
        that uniformly bounds $d_t^2$.
        \begin{align}
            w_t = d_{t}^2 + \sum_{\tau = t}^{\infty} \frac{2}{(\tau+1)^2} \lVert \mfm \rVert_{\infty}^2  +
            \sum_{\tau = t}^{\infty} \frac{2}{\tau+1}\E\bigl[ \nu_\tau(Q) \big| \F_t \bigr] \,. \label{eq:supermartingale}
        \end{align}
        We first check that $\bigl(w_t\bigr)_{t \geq 1}$
        is indeed a non-negative supermartingale:
        we take the conditional expectation
        in~\eqref{eq:supermartingale}, and
        use the tower rule
        $\E\bigl[ \E[\cdots | \F_{t+1}] | \F_{t} \bigr] = \E[\cdots | \F_{t}] $,
        together with the recurrent
        inequality~\eqref{eq:generalineq} obtained in Step 2
        to get that
        \begin{align*}
            \E\bigl[w_{t+1} \big| \F_t \bigr] &= \E\bigl[d_{t+1}^2 \big| \F_t \bigr]
            + \sum_{\tau = t+1}^{\infty} \frac{2}{(\tau+1)^2} \lVert \mfm \rVert_{\infty}^2
            + \sum_{\tau = t+1}^{\infty} \frac{2}{\tau+1}\E\bigl[ \nu_\tau(Q) \big| \F_t \bigr] \\
            &\leq d_t^2
            + \sum_{\tau = t}^{\infty} \frac{2}{(\tau+1)^2} \lVert \mfm \rVert_{\infty}^2
            + \sum_{\tau = t}^{\infty} \frac{2}{\tau+1}\E\bigl[ \nu_\tau(Q) \big| \F_t \bigr]
            = w_t \,.
        \end{align*}
        The supermartingale is defined for all $t \geq 1$
        as the sum of $d_t^2$,
        which converges to $0$ in probability by Step~3,
        and two remainder terms of convergent series
        (the convergence of the second series
        is also granted by Step~3),
        so that $w_t$ converges to $0$ in probability.

        Finally, by the martingale convergence theorem,
        $w_t$ converges almost surely to a finite limit,
        that is necessarily the same as the limit in probability, i.e., $0$,
        which concludes the proof since $d_t^2 \leq w_t$, for all $t \geq 1$.
\end{proof}

\section{A Discussion and Simpler Proofs for the Opportunistic \\
\phantom{Appendix~D. }Approachability Theory by \citet{bernstein2014opportunistic}}
\label{app:go-over-Bernstein}

We consider an extension of the general setting of Appendix~\ref{sec:opp-app} that is designed to handle possibly non-convex target sets
(known as D-sets), just as in~\citet{bernstein2014opportunistic}.
The point of this appendix is to reprove the main result of the latter article,
which corresponds to Theorem~\ref{th:general} of Appendix~\ref{sec:gal-theory}
(and is actually a general version of the specific convergence result
established in Theorem~\ref{th:main}).
Theorem~\ref{th:general} provides
some general construction and analysis of opportunistically approaching strategies
in the sense of Definition~\ref{def:opp-app} (extended to D-sets).

There are two differences between the original
version of the result by \citet{bernstein2014opportunistic} and our version
in Theorem~\ref{th:general}.

First, we require that closures are taken in the defining expression of $\C_Q$
(see Definition~\ref{def:image}, unlike \citealp[Definition~8]{bernstein2014opportunistic}),
i.e., we shoot for a slightly less ambitious target. However, we prove in Appendix~\ref{app:counterexample}
the necessity of taking these closures, through a counter-example,
while also explaining in
Appendix~\ref{sec:B2014-special} that taking or not taking closures is indifferent
in several natural cases.

Second, we relax the assumption on the response function $\mfp^\star$
into a mere measurability requirement (\citealp[Theorem 4]{bernstein2014opportunistic}
assumed that $\mfp^\star$ is piecewise continuous).

\paragraph{Setting, with D-sets.}
The formal setting throughout this appendix is the following.
Let $\cA$ and $\cB$ be finite action sets, $\mfm : \cA \times \cB \to \R^d$ be a general vector-valued payoff function, and $\cC \subset \R^d$ be a general D-set.

A closed set $\cC$ is called a D-set if for all $\bq \in \Delta(\cB)$, there exists $\bp \in \Delta(\cA)$ such that $\mfm(\bp,\bq) \in \cC$.
For any D-set $\cC$, we can thus define a response function $\mfp^\star : \Delta(\cB) \to \Delta(\cA)$ and an associated goal function $\mfm^{\star} : \Delta(\cB) \to \C$ such that
\begin{align}
    \forall \bq \in \Delta(\cB), \qquad \mfm^{\star}(\bq) \eqdef \mfm\bigl(\mfp^\star(\bq),\bq\bigr) \in \C \,. \label{eq:responseandgoalfunctions2}
\end{align}
This construction generalizes~\eqref{eq:responseandgoalfunctions} to non-convex target sets. The response function $\mfp^\star$ defined in Equation~\eqref{eq:responseandgoalfunctions2}
will only be required to be measurable.

A repeated game takes place.
At each round $t \geq 1$, the learner and the opponent simultaneously pick actions $a_t \in \cA$
and $b_t \in \cB$. The learner obtains the payoff $\mfm(a_t,b_t)$ and observes
$b_t$, not only $\mfm(a_t,b_t)$ as in Section~\ref{sec:Blk}.
(This is because the strategy considered in Theorem~\ref{th:general}
relies on an auxiliary calibrated strategy~$\mathcal{F}$, forecasting the $b_t$.)

\subsection{General Theory, with Closures}
\label{sec:gal-theory}

For the sake of clarity, we recast the general statement of the opportunistic approachability procedure in Algorithm~\ref{algo:general}, as introduced by \citet{bernstein2014opportunistic}.

\begin{algorithm}[t]
	\caption{\label{algo:general} General opportunistic approachability procedure}
	\begin{algorithmic}
		\STATE\textbf{Inputs:} response function $\mfp^\star$; auxiliary sequential forecaster $\mathcal{F}$ with outputs in $\Delta(\cB)$
        \STATE\textbf{Initialization:} play $a_1$ uniformly at random over $\cA$ \\
		\STATE\textbf{At time steps $t=2, 3, \, \ldots\,$:} \\
		\quad 1. Get from $\mathcal{F}$ a probabilistic forecast $\bz_t \in \Delta(\cB)$ of the opponent's pure action \\			
		\quad 2. Play $a_t \sim \mfp^\star(\bz_t)$ \\
		\quad 3. Receive $\mfm(a_t,b_t)$, observe $\mfm(a_t,b_t)$ and $b_t$,
and feed $\mathcal{F}$ with $b_t$
	\end{algorithmic}
\end{algorithm}

\begin{theorem}[General opportunistic approachability]\label{th:general}
If the response function $\mfp^\star : \Delta(\cB) \to \Delta(\cA)$
satisfying the dual condition~\eqref{eq:responseandgoalfunctions2}
is measurable, then the strategy stated in Algorithm~\ref{algo:general}, when run with a calibrated forecaster, is an opportunistically approaching strategy for the D-set $\C$, in the sense of Definition~\ref{def:opp-app}, namely:
for any convex subset $Q \subseteq \Delta(\cB)$,
whenever the opponent is $Q$--restricted,
    \[
    \overline{\mfm}_T \longrightarrow \C_Q \qquad \as, \qquad \quad \mbox{as $T \to \infty$.}
    \]
\end{theorem}

\begin{proof}
    The proof follows the same lines and uses the same notation as the proof of Theorem~\ref{th:main}
    in Appendix~\ref{sec:proof-th-main}, the only change being the choice of the partition in the first step.
    In particular, we start with a similar triangular inequality,
    \begin{equation}
    \label{eq:firstsum-general-proof}
        d\bigl(\overline{\mfm}_T,\C_Q\bigr) \leq \left\lVert \overline{\mfm}_T-\frac{1}{T}\sum_{t=1}^T \mfm\bigl(\mfp^\star(\bz_t),\bz_t\bigr)\right\rVert + d\!\left(\frac{1}{T}\sum_{t=1}^T \mfm\bigl(\mfp^\star(\bz_t),\bz_t\bigr), \C_Q\right) \,,
    \end{equation}
    where the convergence of the second term, involving the quantities $\mfm\bigl(\mfp^\star(\bz_t),\bz_t\bigr) = \mfm^\star(z_t)$,
    is ensured by Equation~\eqref{eq:ccl-pr-thmain-step2} and Appendix~\ref{app:geometric}, as both relied
    on general arguments (not involving the specific definition of $\mfm$ in Appendix~\ref{sec:proof-th-main}).

    Therefore, we only need to show the convergence of the first term in Equation~\eqref{eq:firstsum-general-proof}.
    First, by martingale convergence (e.g., the Hoeffding-Azuma inequality together with the Borel-Cantelli lemma),
    \[
    \limsup_{T \to \infty}
    \left\lVert \overline{\mfm}_T-\frac{1}{T}\sum_{t=1}^T \mfm\bigl(\mfp^\star(\bz_t),\bz_t\bigr)\right\rVert
    = \limsup_{T \to \infty}
    \left\lVert \frac{1}{T}\sum_{t=1}^T \mfm\bigl(\mfp^\star(\bz_t),b_t\bigr) - \frac{1}{T}\sum_{t=1}^T \mfm\bigl(\mfp^\star(\bz_t),\bz_t\bigr)\right\rVert.
    \]
    Fix $\epsilon > 0$, and consider a disjoint $\epsilon$--covering $\bigl\{P_k : k \in [N]\bigr\}$ of $\Delta(\cA)$ as per Definition~\ref{def:disjointcovering}, with associated centers $\bigl\{\bc_k : k \in [N]\bigr\}$.
    This is the main change with respect to the proofs of Theorems~\ref{th:main} and~\ref{th:mainepsilon}:
    the covering is now on $\Delta(\cA)$, and not on $\Delta(\cB)$.
    For each $k \in [N]$, we define the preimage of $P_k$ by $\mfp^\star$ as $Q_k = \bigl\{ \bq \in \Delta(\cB) : \mfp^\star(\bq) \in P_k \bigr\}$;
    by measurability of $\mfp^\star$, the subset $Q_k$ is also measurable.
    We group rounds according to the disjoint $\epsilon$--covering:
    \[
    \frac{1}{T} \sum_{t=1}^{T} \mfm\bigl(\mfp^\star(\bz_t),\bz_t\bigr) = \frac{1}{T} \sum_{t=1}^{T} \sum_{k=1}^N \indic_{\bz_t \in Q_k} \, \mfm\bigl(\mfp^\star(\bz_t),\bz_t\bigr) \,.
    \]
    We recall that for all $k \in [N]$,
    \[
    \forall \bq \in Q_k, \qquad \bigl\Arrowvert \mfp^\star(\bq) - \bc_k \bigr\Arrowvert_1 \leq \epsilon\,,
    \qquad \mbox{which entails} \qquad
    \bigl\Arrowvert \mfm \bigl( \mfp^\star(\bq), \bq \bigr) - \mfm(\bc_k,\bq) \bigr\Arrowvert \leq \rho\epsilon\,,
    \]
    where we denoted $\smash{\rho = \max \Bigl\{ \bigl\lVert \mfm(a,b) \bigr\rVert : (a, b) \in \cA \times \cB \Bigr\} < \infty}$.
    Therefore, by a triangular inequality,
    \[
    \left\lVert \frac{1}{T} \sum_{t=1}^{T} \sum_{k=1}^N \indic_{\bz_t \in Q_k} \, \mfm\bigl(\mfp^\star(\bz_t),\bz_t\bigr) -
    \frac{1}{T}\sum_{t=1}^T \sum_{k=1}^N \indic_{\bz_t \in Q_k} \mfm(\bc_k,\bz_t)
    \right\rVert \leq \rho \epsilon\,.
    \]
    A similar argument may be applied to the average of the $\mfm\bigl(\mfp^\star(\bz_t),b_t\bigr)$,
    and we get, by a triangular inequality, that the quantity of interest is bounded as
    \begin{align*}
    \lefteqn{\left\lVert \frac{1}{T}\sum_{t=1}^T \mfm\bigl(\mfp^\star(\bz_t),b_t\bigr) - \frac{1}{T}\sum_{t=1}^T \mfm\bigl(\mfp^\star(\bz_t),\bz_t\bigr)\right\rVert} \\
    & \leq 2\rho\epsilon +
    \left\lVert \frac{1}{T}\sum_{t=1}^T \sum_{k=1}^N \indic_{\bz_t \in Q_k} \mfm(\bc_k,b_t) - \frac{1}{T}\sum_{t=1}^T \sum_{k=1}^N \indic_{\bz_t \in Q_k} \mfm(\bc_k,\bz_t) \right\rVert \\
    & \leq 2\rho\epsilon +
    \rho \sum_{k=1}^N \left\lVert \frac{1}{T}\sum_{t=1}^T \indic_{\bz_t \in Q_k} \bigl( \bdelta_{b_t} - \bz_t \bigr) \right\rVert_1,
    \end{align*}
    where the second inequality follows again by a triangular inequality and by the boundedness of $\mfm$.
    Now, by a triangular inequality, by a standard norm inequality,
    and by $N$ applications of the definition of the calibration of $\mathcal{F}$ (this is where we use that the $Q_k$ are measurable),
    \[
    \sum_{k=1}^N \left\lVert \frac{1}{T}\sum_{t=1}^T \indic_{\bz_t \in Q_k} \bigl(\bdelta_{b_t} - \bz_t\bigr) \right\rVert_1
    \leq \sqrt{|\cB|} \, \sum_{k=1}^N \left\lVert \sum_{t=1}^{T} \indic_{\bz_t \in Q_k} \bigl(\bz_t - \bdelta_{b_t}\bigr)\right\rVert \longrightarrow 0 \as
    \]
    Collecting all elements, we showed so far that for all $\epsilon > 0$,
    \[
    \limsup_{T \to \infty}
    \left\lVert \overline{\mfm}_T-\frac{1}{T}\sum_{t=1}^T \mfm\bigl(\mfp^\star(\bz_t),\bz_t\bigr)\right\rVert
    \leq 2\rho\epsilon \as,
    \]
    from which the desired conclusion follows, by letting $\epsilon \to 0$.
\end{proof}

\subsection{Special Case where Closures Are Not Necessary}
\label{sec:B2014-special}

Theorem~\ref{th:general} and the underlying Definitions~\ref{def:image} and~\ref{def:opp-app}
entail the convergence of $\overline{\mfm}_T$ to
\[
\C_Q = \bigcap_{\eta>0}\cl\biggl(
\conv\Bigl(\bigl\{ \mfm^\star(\bq) : \bq \in \Delta(\cB) \ \ \mbox{s.t.} \ \ d(\bq,Q) \leq \eta \bigr\}\Bigr)
\biggr),
\]
where $\mfm^{\star}(\bq) = \mfm\bigl( \mfp^\star(\bq),\bq)$ and when the opponent plays in a $Q$--restricted way.

Sometimes (but not always, see Appendix~\ref{app:counterexample}), convergence to a smaller target set $\C'_Q \subseteq \C_Q$ may
be achieved; this set $\C'_Q$ is defined
by not taking closures in the definition of $\C_Q$ and is the set considered in \citet{bernstein2014opportunistic}.
Formally,
\begin{align}
\label{eq:alternative}
\C_Q' = \bigcap_{\eta > 0} \conv\Bigl(\bigl\{ \mfm^\star(\bq) : \bq \in \Delta(\cB) \ \ \mbox{s.t.} \ \ d(\bq,Q) \leq \eta \bigr\}\Bigr) \,.
\end{align}

We may indifferently consider $\C'_Q$ or $\cl(\C'_Q)$, as the distance of a point to a set is equal to the distance
of a point to the closure of that set.
The closure $\cl(\C'_Q)$ of the intersection of some sets is always contained in the intersection $\C_Q$ of the closures of these sets. The question is to determine when the converse is true.
We recall that the dimension of a convex set $C$ is the dimension of the affine subspace that it generates.

\begin{lemma}
If $\interior(\C_Q') \ne \emptyset$, then $\cl(\C_Q') = \C_Q$,
and thus, the convergence of $\overline{\mfm}_T$ to $\C_Q$ in Theorem~\ref{th:general}
corresponds exactly to the convergence of $\overline{\mfm}_T$ to $\C_Q'$.

The equality holds more generally if the dimension of $\C_Q'$
equals the limit of the dimensions of the sets $C_n$ defined in the proof.
\end{lemma}

\begin{proof}
By nestedness, it suffices to
apply the general result of Lemma~\ref{lm:closure-intersection} (which we state and prove next)
to the sequences of convex sets
\[
C_n = \conv\Bigl(\bigl\{ \mfm^\star(\bq) : \bq \in \Delta(\cB) \ \ \mbox{s.t.} \ \ d(\bq,Q) \leq 1/n \bigr\}\Bigr)\,,
\]
indexed by $n \geq 1$.
\end{proof}

    The following result is perhaps a well-known result but we could find no exactly similar statement of it; therefore,
    we provide a (simple but) complete proof thereof.

    \begin{lemma}\label{lm:closure-intersection}
        Let $\bigl(C_n\bigr)_{n \geq 1}$ be a sequence of convex subsets of $\R^d$ such that its intersection $C$ has a non-empty interior;
        then the closure of $C$ is the intersection of the closures of the $C_n$, i.e.,
        \[
        \mbox{let} \quad  C = \bigcap_{n \geq 1} C_n\,: \qquad \quad
        \interior(C) \ne \emptyset \quad \Longrightarrow \quad
        \cl(C) = \bigcap_{n \geq 1} \cl(C_n) \,.
        \]
        The result holds more generally if the convex sets $C_n$ are nested and if the limit of their dimensions as $n \to \infty$ equals the dimension of $C$.
    \end{lemma}

    \begin{proof}
    We prove the first statement of the lemma, under the condition that $\interior(C) \ne \emptyset$.
    As the intersection of closed sets always contains the closure of the intersection of sets
    (with no additional condition), it suffices to prove that
    \[
    \cl(C) \supseteq \bigcap_{n \geq 1} \cl(C_n)\,.
    \]
    We do so by contradiction: suppose that there exists $x \in \displaystyle{\bigcap_{n \geq 1} \cl(C_n)}$ such that $x \notin \cl(C)$.

        By the strong separation theorem for convex sets (e.g., \citealp[Theorem 2.39]{rockafellar1998variational}, also
        called the Hahn-Banach theorem), applied to the closed convex set $\cl(C)$ and to the singleton $\{x\}$, which is a compact convex set,
        there exists a hyperplane $\mathcal{H} = \bigl\{ u \in \R^d : h(u) = 0 \bigr\}$ parameterized by an affine function $h$ that strictly separates $x$ from $\cl(C)$, i.e.,
        there exists $\delta > 0$ such that
        \[
        \{x\} \subset \cH^+ \defeq \bigl\{ u \in \R^d : h(u) > \delta \bigr\}
        \qquad \mbox{and} \qquad
        \cl(C) \subseteq \cH^- \defeq \bigl\{ u \in \R^d : h(u) < {-\delta} \bigr\} \,.
        \]
        The idea of the proof consists in constructing a point $z$ that belongs to $\cH$ and to $\cH^-$, which is impossible.

        We use first the assumption of non-empty interior: take $y \in \interior(C)$. By definition, $C \subseteq C_n$ for all $n \geq 1$, and we thus have $y \in \interior(C_n)$ for all $n \geq 1$. Moreover, by design, $x \in \cl(C_n)$ for all $n \geq 1$, and therefore $[y,x) \subseteq \interior(C_n)$ by convexity of $C_n$ (this is a classic property, see Lemma~\ref{lm:segm} below).
        Second, $y \in C$ thus $y \in \cH^-$ while $x \in \cH^+$, so that the segment $[y,x]$ intersects
        in particular the hyperplane $\cH$: there exists $z \in (y,x) \cap \cH$.

        Finally, we use the inclusion $[y,x)\subseteq \interior(C_n)$ for all $n \geq 1$ to get that $z \in (y,x)$ also belongs to $\interior(C_n)$ thus to $C_n$, for all $n \geq 1$, and in turn, $z$ belongs to $C$. This creates the contradiction: indeed,
        $z$ belongs to $\cH$ and was proved to belong to $C \subseteq \cH^-$. \medskip

        The second statement of the lemma relies on the following facts.
        Denote by $\mathcal{H}$ the (affine) subspace generated by $C$;
        it can be identified with some $\R^{d'}$ space, up to some change of coordinates.
        By assumption, the convex sets $C_n$ are nested, thus the sequence of their dimensions is a non-increasing, integer-valued sequence.
        That its limit equals the dimension of $C$ means that there exists $N \geq 1$ such that $C$ and $C_n$ are subsets
        of $\mathcal{H}$ for all $n \geq N$. Indeed, $C$ is included in $C_n$, and thus the affine space generated by $C$ is included in the affine space generated by $C_n$; as these two spaces have the same dimension, they are equal.
        Now, $C$ has a non-empty interior when seen as a subset of $\cH$: this follows from the general fact that a non-empty convex set
        has always a non-empty interior in the affine subspace it generates (e.g., \citealp[Theorem 6.2]{rockafellar1997convex}).
        Therefore, the second statement of the lemma follows from the first one, applied to the sequence $(C_{N+n})_{n \geq 1}$ of convex subsets of
        some $\R^{d'}$ space.
    \end{proof}

\begin{lemma}[{see, e.g., \citealp[Theorem 6.1]{rockafellar1997convex}}]
\label{lm:segm}
Consider a convex $C \subseteq \R^d$ with non-empty interior; then,
\[
\forall x \in \cl(C), \ \ \forall y \in \interior(C), \qquad [y,x) \subseteq C\,.
\]
\end{lemma}

\begin{proof}
Fix $w \in [y,x)$: there exists $\lambda \in [0,1)$ such that $w = (1-\lambda) y + \lambda x$.
Since $y \in \interior(C)$, there exists $\epsilon > 0$ such that $B(y,\epsilon) \subseteq C$; in addition, the linear mapping $u \mapsto (1-\lambda) u + \lambda x$ is a homothety of center $x$ and ratio $1-\lambda$ that maps the ball $B(y,\epsilon)$ to the ball $B\bigl(w,(1-\lambda)\,\epsilon\bigr)$, which is included in $\cl(C)$ by convexity of the latter. Therefore, the ball $B\bigl(w,(1-\lambda)\,\epsilon/2\bigr)$ is included in $C$, and thus $w \in \interior(C)$, as announced.
\end{proof}

\subsection{Counter-example for the General Necessity of Closures} \label{app:counterexample}

In general, we do not have that $\cl(\C_Q')$ equals $\C_Q$.
\citet[proof of Corollary~2]{bernstein2014opportunistic} directly claim (with no proof and under the additional assumption that $\mfp^\star$
is piecewise continuous) that
\begin{equation}
\label{eq:not-OK}
\lim_{T \to \infty} \frac{1}{T}\sum_{t=1}^{T} d(\bz_t,Q) = 0 \qquad \mbox{entails} \qquad
d\!\left(\frac{1}{T}\sum_{t=1}^T \mfm^{\star}(\bz_t), \, \C'_Q\right) \longrightarrow 0 \,,
\end{equation}
where $(\bz_t)_{t \geq 1}$ is a sequence in $\Delta(\cB)$ and
where we recall that $\C'_Q$ was defined in~\eqref{eq:alternative}.

However, we found a counter-example showing that the above implication does not hold in general.
Given the results of Appendix~\ref{sec:B2014-special}, such a counter-example
requires a gap in the limit for the dimensions of the convex sets involved:
the limit of the dimensions of the
\begin{align}
    \cM_Q^\eta = \conv\Bigl(\bigl\{ \mfm^\star(\bq) : \bq \in \Delta(\cB) \ \ \mbox{s.t.} \ \ d(\bq,Q) \leq \eta \bigr\}\Bigr) \label{eq:Meta}
\end{align}
must be strictly larger than the dimension of their intersection
$\displaystyle{\C'_Q = \bigcap_{\eta > 0} \cM_Q^\eta}$.

We have $\C_Q^\eta = \cl\bigl(\cM_Q^\eta\bigr)$ with the notation of Appendix~\ref{app:geometric}
but we omit the closures in this section.
We build a simple counter-example in $\R^2$, where the $\cM_Q^\eta$ are of dimension~2 (i.e., span the entire space)
while $\C'_Q$ is of dimension~1 (i.e., it generates a line).

\begin{remark}
The counter-example was inspired by the reward function defined for ACI in Section~\ref{sec:setting};
however, for the sake of simplicity, we present it with explicit scalar values.
\end{remark}

\paragraph{Setting.}
We consider the action sets $\cA = \bigl\{ 1, 2, 3 \bigr\}$ and $\cB = \bigl\{ 1, 2, 3 \bigr\}$, and the following reward function $\mfm : \cA \times \cB \to \R^2$,
which we describe by listing all its values in a $3 \times 3$ matrix, where rows correspond
to the learner playing in $\cA$ and columns to the opponent picking actions in $\cB$:
\begin{align} \label{eq:rewardcounterexample}
    \begin{bmatrix}
    \mfm(1,1) & \mfm(1,2) & \mfm(1,3) \\
    \mfm(2,1) & \mfm(2,2) & \mfm(2,3) \\
    \mfm(3,1) & \mfm(3,2) & \mfm(3,3)
    \end{bmatrix}
      = \begin{bmatrix}
        \begin{pmatrix}
            0.1 \\
            2
        \end{pmatrix}
        &
        \begin{pmatrix}
            0.1 \\
            2
        \end{pmatrix}
        &
        \begin{pmatrix}
            0.1 \\
            2
        \end{pmatrix} \smallskip \\
        \begin{pmatrix}
            -1 \\
            3
        \end{pmatrix}
        &
        \begin{pmatrix}
            0 \\
            3
        \end{pmatrix}
        &
        \begin{pmatrix}
            0 \\
            3
        \end{pmatrix} \smallskip \\
        \begin{pmatrix}
            1 \\
            1
        \end{pmatrix}
        &
        \begin{pmatrix}
            1 \\
            1
        \end{pmatrix}
        &
        \begin{pmatrix}
            0 \\
            1
        \end{pmatrix}
    \end{bmatrix}\,.
\end{align}
The objective is to approach $\C = [-1,0.2] \times [1,3] \subseteq \R^2$.

\paragraph{Response function.}
We describe a (valid but not unique) response function $\mfp^\star$ for the row player and its associated payoff function $\mfm^{\star}$ defined by
$\bq \in \Delta(\cB) \mapsto \mfm^\star(\bq) = \mfm\bigl(\mfp^\star(\bq), \bq \bigr) \in \C$;
namely, for all $\bq \in \Delta(\cB)$,
\[
\begin{cases}
    \mfp^\star(\bq) = \bdelta_{1} \quad \mbox{and} \quad \mfm^\star(\bq) = \begin{pmatrix} 0.1 \\ 2 \end{pmatrix} \in \C & \mbox{if } q_{1} = 0 \mbox{ and } q_{1} + q_{2}  > 0.2 \,, \smallskip \\
    \mfp^\star(\bq) = \bdelta_{2} \quad \mbox{and} \quad \mfm^\star(\bq) = \begin{pmatrix} -q_1 \\ 3 \end{pmatrix} \in \C & \mbox{if } q_{1} > 0  \mbox{ and } q_{1} + q_{2}  > 0.2\,, \smallskip \\
    \mfp^\star(\bq) = \bdelta_{3} \quad \mbox{and} \quad \mfm^\star(\bq) = \begin{pmatrix} q_1+q_2 \\ 1 \end{pmatrix} \in \C & \mbox{if } q_{1} + q_{2}  \leq 0.2 \,.
\end{cases}
\]
Note that $\mfp^\star$ is piecewise constant, and thus both measurable and piecewise continuous, as required
by Theorem~\ref{th:general} and \citet[Corollary 2]{bernstein2014opportunistic}, respectively.

\paragraph{Set of $Q$--restrictions.}
Finally, we introduce the convex subset $Q$ of $\Delta(\cB)$ defined by
\[
Q = \bigl\{\bq \in \Delta(\cB) : q_{1} = 0\,, q_{2} \geq 0.2 \bigr\} \,.
\]
The core property of $Q$ is that its boundary contains the discontinuity point $\bq = (0,0.2,0.8)^{\top}$ of the response function $\mfp^\star$,
and that any neighborhood of $(0,0.2,0.8)^{\top}$ contains distributions $\bq$ leading to any of
the three possible outcomes of $\mfp^\star$.

\begin{theorem}
    In the setting above, where $\mfp^\star$ is piecewise constant, there exists a sequence $(\bz_t)_{t \geq 1}$ in~$\Delta(\cB)$ that satisfies
    \[
    \lim_{T \to \infty} \frac{1}{T}\sum_{t=1}^{T} d(\bz_t,Q) = 0 \qquad \mbox{but} \qquad \liminf_{T \to \infty} \ d\!\left(\frac{1}{T}\sum_{t=1}^T \mfm^{\star}(\bz_t), \,\C'_Q\right) > 0 \,,
    \]
    where $\displaystyle{\C'_Q = \bigcap_{\eta > 0} \cM_Q^\eta}$.
\end{theorem}

\begin{proof}
    We decompose the proof of the counter-example into two steps. First, we provide closed-form expressions for the sets $\cM_Q^{\eta}$
    and for their limit $\C_Q'$, as illustrated in Figure~\ref{fig:counter-example}.
    Second, we construct a sequence $(\bz_t)_{t \geq 1}$ that satisfies the claim. \bigskip

    \emph{Step 1: Structure.}
    We show that for $0 < \eta < 0.2$,
    \begin{equation}
    \label{eq:decomposition}
    \M_Q^{\eta} = \conv\Biggl( \Biggl[ \begin{pmatrix} -\sqrt{2/3}\,\eta \\ 3 \end{pmatrix},
    \begin{pmatrix} 0 \\ 3 \end{pmatrix} \Biggr)
    \, \cup \, \biggl\{ \begin{pmatrix} 0.1 \\ 2 \end{pmatrix} \biggr\} \, \cup \,
    \Biggl[ \begin{pmatrix} 0.2-\eta /\sqrt{2} \\ 1 \end{pmatrix} , \begin{pmatrix} 0.2 \\ 1 \end{pmatrix} \Biggr]
    \Biggr)\,.
    \end{equation}
    On the left and middle pictures of Figure~\ref{fig:counter-example},
    we illustrate what $\M_Q^{\eta}$ is the convex hull (in shaded green) of:
    the upper semi-open interval (the left endpoint is included and is marked in green while the right endpoint is not included is marked in red),
    the middle point (marked in green), and the lower closed interval
    (with both endpoints marked in green).

    \begin{figure}[t]
    \definecolor{mygreen}{RGB}{0,128,0}
\definecolor{mylightgreen}{RGB}{200,240,200}
\definecolor{myred}{RGB}{200,0,0}

\centering

\begin{minipage}{0.3\textwidth}
\centering
\begin{tikzpicture}[scale=4,xshift=0.1cm,yshift=0.05cm, x = 1cm, y = 0.9cm]

\def\alphanum{0.4}
\def\Ltop{1.0}
\def\Lone{0.66}
\def\Ltwo{0.33}
\def\eps{0.0707}
\def\epsbis{0.0816}
\pgfmathsetmacro{\Cx}{\alphanum-\eps}

\draw[->] (-0.15,0.05) -- (-0.15,1.1);
\draw[->] (-0.15,0) -- (.75,0);

\draw (-0.17,\Ltop) -- (-0.13,\Ltop);
\node[left] at (-0.17,\Ltop) {$3$};

\draw (-0.17,\Lone) -- (-0.13,\Lone);
\node[left] at (-0.17,\Lone) {$2$};

\draw (-0.17,\Ltwo) -- (-0.13,\Ltwo);
\node[left] at (-0.17,\Ltwo) {$1$};

\draw (\alphanum,0.02) -- (\alphanum,-0.02);
\node[below] at (\alphanum,0) {\small$0.2$};

\draw (0.2,0.02) -- (0.2,-0.02);
\node[below] at (0.2,0) {\small$0.1$};

\draw (0,0.02) -- (0,-0.02);
\node[below] at (0,0) {\small$0$};

\coordinate (A) at (0.2,\Lone);
\coordinate (B) at (\alphanum,\Ltwo);
\coordinate (C) at (\Cx,\Ltwo);
\coordinate (D) at (0,\Ltop);
\coordinate (E) at (-\epsbis,\Ltop);

\fill[mylightgreen] (B) -- (C) -- (E) -- (D) -- cycle;

\draw[ mygreen] (A) -- (B);
\draw[mygreen] (B) -- (C);
\draw[ mygreen] (C) -- (E);
\draw[mygreen] (E) -- (D);

\draw[thick, myred, postaction={
    decorate,
    decoration={
      markings,
      mark=between positions 0.05 and 0.95 step 0.08
      with {
        \draw[myred, thick] (0,-0.04) -- (0,0.04);
      }
    }
  }] (D) -- (A);

\draw[dashed]
  ($(D)!-0.5!(B)$) -- ($(D)!1.75!(B)$)
  node[right] at ($(D)!-.25!(B)$) {$\mathcal{D}$};


\fill[mygreen] (A) circle (0.02);
\fill[mygreen] (B) circle (0.02);
\fill[mygreen] (C) circle (0.02);
\fill[myred] (D) circle (0.02);
\fill[mygreen] (E) circle (0.02);

\end{tikzpicture}
\end{minipage}
\hspace{0.02\textwidth}
\begin{minipage}{0.3\textwidth}
\centering
\begin{tikzpicture}[scale=4,xshift=0.2cm,yshift=0.05cm, x = 1cm, y = 0.9cm]
\def\alphanum{0.4}
\def\Ltop{1.0}
\def\Lone{0.66}
\def\Ltwo{0.33}
\def\eps{0.035}
\def\epsbis{0.0408}
\pgfmathsetmacro{\xnew}{-(\Ltop*\alphanum/3)/(\Lone-\Ltwo)}
\pgfmathsetmacro{\Cx}{\alphanum-\eps}

\draw[->] (-0.15,0.05) -- (-0.15,1.1);
\draw[->] (-0.15,0) -- (.75,0);

\draw (-0.17,\Ltop) -- (-0.13,\Ltop);
\node[left] at (-0.17,\Ltop) {$3$};

\draw (-0.17,\Lone) -- (-0.13,\Lone);
\node[left] at (-0.17,\Lone) {$2$};

\draw (-0.17,\Ltwo) -- (-0.13,\Ltwo);
\node[left] at (-0.17,\Ltwo) {$1$};

\draw (\alphanum,0.02) -- (\alphanum,-0.02);
\node[below] at (\alphanum,0) {\small$0.2$};

\draw (0.2,0.02) -- (0.2,-0.02);
\node[below] at (0.2,0) {\small$0.1$};

\draw (0,0.02) -- (0,-0.02);
\node[below] at (0,0) {\small$0$};

\coordinate (A) at (0.2,\Lone);
\coordinate (B) at (\alphanum,\Ltwo);
\coordinate (C) at (\Cx,\Ltwo);
\coordinate (D) at (0,\Ltop);
\coordinate (E) at (-\epsbis,\Ltop);

\fill[mylightgreen] (B) -- (C) -- (E) -- (D) -- cycle;

\draw[mygreen] (A) -- (B);
\draw[mygreen] (B) -- (C);
\draw[mygreen] (C) -- (E);
\draw[mygreen] (E) -- (D);

\draw[thick, myred, postaction={
    decorate,
    decoration={
      markings,
      mark=between positions 0.05 and 0.95 step 0.08
      with {
        \draw[myred, thick] (0,-0.04) -- (0,0.04);
      }
    }
  }] (D) -- (A);

\draw[dashed]
  ($(D)!-0.5!(B)$) -- ($(D)!1.75!(B)$)
  node[right] at ($(D)!-.25!(B)$) {$\mathcal{D}$};


\fill[mygreen] (A) circle (0.02);
\fill[mygreen] (B) circle (0.02);
\fill[mygreen] (C) circle (0.02);
\fill[myred] (D) circle (0.02);
\fill[mygreen] (E) circle (0.02);

\end{tikzpicture}
\end{minipage}
\hspace{0.02\textwidth}
\begin{minipage}{0.3\textwidth}
\centering
\begin{tikzpicture}[scale=4,xshift=0.1cm,yshift=0.05cm, x = 1cm, y = 0.9cm]

\def\alphanum{0.4}
\def\Ltop{1.0}
\def\Lone{0.66}
\def\Ltwo{0.33}
\def\eps{0.0707}
\pgfmathsetmacro{\xnew}{-(\Ltop*\alphanum/3)/(\Lone-\Ltwo)}
\pgfmathsetmacro{\Cx}{\alphanum-\eps}

\draw[->] (-0.15,0.05) -- (-0.15,1.1);
\draw[->] (-0.15,0) -- (.75,0);

\draw (-0.17,\Ltop) -- (-0.13,\Ltop);
\node[left] at (-0.17,\Ltop) {$3$};

\draw (-0.17,\Lone) -- (-0.13,\Lone);
\node[left] at (-0.17,\Lone) {$2$};

\draw (-0.17,\Ltwo) -- (-0.13,\Ltwo);
\node[left] at (-0.17,\Ltwo) {$1$};

\draw (\alphanum,0.02) -- (\alphanum,-0.02);
\node[below] at (\alphanum,0) {\small$0.2$};

\draw (0.2,0.02) -- (0.2,-0.02);
\node[below] at (0.2,0) {\small$0.1$};

\draw (0,0.02) -- (0,-0.02);
\node[below] at (0,0) {\small$0$};

\coordinate (A) at (0.2,\Lone);
\coordinate (B) at (\alphanum,\Ltwo);
\coordinate (D) at (0,\Ltop);

\draw[thick,mygreen] (A) -- (B);

\draw[dashed]
  ($(D)!-0.5!(B)$) -- ($(D)!1.75!(B)$)
  node[right] at ($(D)!-.25!(B)$) {$\mathcal{D}$};


\fill[mygreen] (A) circle (0.02);
\fill[mygreen] (B) circle (0.02);

\end{tikzpicture}
\end{minipage}
    \caption{\label{fig:counter-example}
    Representations of $\M_Q^{\eta}$ for $\eta = 0.1$ (\emph{left}) and $\eta = 0.05$ (\emph{center})
    and of the limit $\C'_Q$ as $\eta \downarrow 0$ (\emph{right}).
    The interiors are shaded in \textcolor{Green}{green}. The endpoints of the two intervals and the point $(0.1,2)^\top$ generating
    $\M_Q^{\eta}$ according to~\eqref{eq:decomposition} are marked in \textcolor{Green}{green}
    (for four of them, included) or in \textcolor{red}{red} (for the point not included).
    As a consequence, the semi-open red hatched interval is not part of $\M_Q^{\eta}$.
    The dashed line $\mathcal{D}$ is introduced in Step~1 of the proof.}
\end{figure}

    Based on this expression, we then show that $\C_Q'$ is a (small) segment,
    \begin{equation}
    \label{eq:CQprime}
    \C_Q' \eqdef \bigcap_{\eta > 0} \M_Q^{\eta} = \Biggl[ \begin{pmatrix} 0.1 \\ 2 \end{pmatrix},
    \begin{pmatrix} 0.2 \\ 1 \end{pmatrix} \Biggr] \,.
    \end{equation}
    \bigskip

    \emph{Step 1A: Proof of~\eqref{eq:decomposition}.}
    It suffices to show that
    \begin{equation}
    \label{eq:step1proofeqdec}
    \bigl\{ \mfm^\star(\bq) : \bq \in Q^\eta \bigr\}
    = \Biggl[ \begin{pmatrix} -\sqrt{2/3}  \,\eta\\ 3 \end{pmatrix},
    \begin{pmatrix} 0 \\ 3 \end{pmatrix} \Biggr)
    \, \cup \, \Biggl\{ \begin{pmatrix} 0.1 \\ 2 \end{pmatrix} \Biggr\} \, \cup \,
    \Biggl[ \begin{pmatrix} 0.2-\eta /\sqrt{2} \\ 1 \end{pmatrix} , \begin{pmatrix} 0.2 \\ 1 \end{pmatrix} \Biggr] \,,
    \end{equation}
    where $Q^\eta = \bigl\{  \bq \in \Delta(\cB) : d(\bq,Q) \leq \eta \bigr\}$
    denotes the closed $\eta$--neighborhood of $Q$.
    Recall that distances are measured in Euclidean norm.

    We start with proving the reverse inclusion $\supseteq$ in~\eqref{eq:step1proofeqdec}.
        Since the distributions $(0,0.5,0.5)^\top$ and $(0,0.2,0.8)^\top$ belong to $Q$, we have that
    for all $0 < \eta' \leq \eta < 0.2$,
    \begin{align}
        \by_{\eta'} = \begin{pmatrix}
            \sqrt{2/3}  \, \eta' \\
            0.5 -\sqrt{2/3} \,\eta'/2\\
            0.5 -\sqrt{2/3}  \,\eta'/2
        \end{pmatrix} \quad \mbox{and} \quad
        \bz_{\eta'} = \begin{pmatrix}
            0 \\
            0.2 -\eta/\sqrt{2} \\
            0.8 +\eta/\sqrt{2}
        \end{pmatrix}  \quad \mbox{belong to} \quad Q^\eta \,. \label{eq:voisQ1}
    \end{align}
    The first distributions satisfy the conditions ``$q_1 >0$ and $q_1 + q_2 > 0.2$'',
    thus $\mfp^\star(\by_{\eta'}) = \bdelta_2$ and
    \[
    \bigl\{ \mfm^\star(\by_{\eta'}) : 0 < \eta' \leq \eta \bigr\} =
    \Biggl[ \begin{pmatrix} - \sqrt{2/3} \, \eta \\ 3 \end{pmatrix},
    \begin{pmatrix} 0 \\ 3 \end{pmatrix} \Biggr) \subseteq \bigl\{ \mfm^\star(\bq) : \bq \in Q^\eta \bigr\} \,.
    \]
    Similarly, the second distributions satisfy the condition ``$q_1 + q_2 \leq 0.2$'',
    for the indicated $\eta'$ but also for $\eta'=0$,
    thus $\mfp^\star(\bz_{\eta'}) = \bdelta_3$ and
    \[
    \bigl\{ \mfm^\star(\by_{\eta'}) : 0 \leq \eta' \leq \eta \bigr\} =
    \Biggl[ \begin{pmatrix} 0.2-\eta /\sqrt{2} \\ 1 \end{pmatrix} , \begin{pmatrix} 0.2 \\ 1 \end{pmatrix} \Biggr] \subseteq \bigl\{ \mfm^\star(\bq) : \bq \in Q^\eta \bigr\} \,.
    \]
    Finally, the distribution $(0,0.5,0.5)^\top$ belongs to $Q$ thus to $Q^\eta$, so that
    \[
      \Biggl\{ \begin{pmatrix} 0.1 \\ 2 \end{pmatrix} \Biggr\} \subset \bigl\{ \mfm^\star(\bq) : \bq \in Q^\eta \bigr\} \,.
    \]

    For the inclusion $\subseteq$ in~\eqref{eq:step1proofeqdec}, we first use the geometry of $Q$ and of the simplex $\Delta(\cB)$ to show that
    \begin{align}
    \forall \bq \in Q^\eta, \qquad q_1 \leq \sqrt{2/3} \, \eta \qquad \mbox{and} \qquad q_1 + q_2 \geq 0.2 -\eta/\sqrt{2} \,. \label{eq:boundsq1andq1+q2}
    \end{align}
    To prove these bounds, we fix $\bq \in Q^\eta$ and denote by $\bq'$ its projection onto $Q$ in Euclidean norm; in particular, $\Arrowvert\bq - \bq' \Arrowvert = d(\bq,Q)$.

    For the upper bound on $q_1$, we use that
    $q_1 + q_2 + q_3 = 1 = q_2' + q_3'$ (since $q'_1=0$ by definition of $Q$), which yields in turn,
    together with $u^2+v^2 \geq 2uv$,
    \[
        q_1^2 =  (q_2 - q_2')^2 + (q_3 - q_3')^2 + 2(q_2-q_2')(q_3-q_3') \geq 4(q_2-q_2')(q_3-q_3')\,.
    \]
    Therefore, by substituting first the inequality above, and then the equality above,
    \begin{align*}
        \sqrt{\frac{3}{2}} \, q_1 = \sqrt{2q_1^2 - \frac{1}{2}q_1^2} &\leq \sqrt{2q_1^2 - 2(q_2-q_2')(q_3-q_3')} \\
        &= \sqrt{q_1^2 + (q_2 - q_2')^2 + (q_3 - q_3')^2} = \Arrowvert\bq - \bq' \Arrowvert
        = d(\bq,Q) \leq \eta \,.
    \end{align*}

    For the lower bound on $q_1+q_2$, we only need to prove it in the case where $q_1+q_2 \leq 0.2$,
    which we assume next. The bound follows again from the equality
    $q'_3 - q_3 = q_1 + q_2 - q_2'$ and from the fact that $0 \leq q_1 + q_2 \leq 0.2 \leq q'_2$
    (since $\bq' \in Q$ and by definition of $Q$): therefore,
    $0 \leq 0.2-(q_1+q_2) \leq q'_2-(q_1+q_2)$, thus
    \begin{align*}
        \sqrt{2}\bigl(0.2-(q_1+q_2)\bigr) = \sqrt{2\,\bigl(0.2-(q_1+q_2)\bigr)^2}
        \leq \sqrt{2\,\bigl(q'_2 - (q_1+q_2)\bigr)^2} \smash{\underbrace{\leq d(\bq,Q)}_{\mbox{\tiny see below}}}
        \leq \eta \,,
    \end{align*}
    where the inequality marked above comes from
    \[
        2\,\bigl(q_1+(q_2-q'_2)\bigr)^2 = q_1^2 + (q_2 - q'_2)^2 +\underbrace{2q_1(q_2-q'_2)}_{\vphantom{(q_3 - q'_3)^2}\leq 0} +\underbrace{\bigl(q_1+q_2-q'_2\bigr)^2}_{= (q_3 - q'_3)^2} \leq \Arrowvert \bq - \bq' \Arrowvert^2 \,.
    \]

    Finally, we resort to the bounds obtained in~\eqref{eq:boundsq1andq1+q2} together with the definition of $\mfm^{\star}$, and get, for all $\bq \in Q^\eta$,
    \[
    \mfm^\star(\bq) =
    \begin{cases}
    \begin{pmatrix} 0.1 \\ 2 \end{pmatrix} & \mbox{if } q_{1} = 0 \mbox{ and } q_{1} + q_{2}  > 0.2 \,, \smallskip \\
    \begin{pmatrix} -q_1 \\ 3 \end{pmatrix} \in     \Biggl[ \begin{pmatrix} -\sqrt{2/3} \, \eta  \\ 3 \end{pmatrix} , \begin{pmatrix} 0\\ 3 \end{pmatrix} \Biggr) & \mbox{if } q_{1} > 0  \mbox{ and } q_{1} + q_{2}  > 0.2\,, \smallskip \\
    \begin{pmatrix} q_1+q_2 \\ 1 \end{pmatrix} \in     \Biggl[ \begin{pmatrix} 0.2-\eta /\sqrt{2} \\ 1 \end{pmatrix} , \begin{pmatrix} 0.2 \\ 1 \end{pmatrix} \Biggr] \quad & \mbox{if } q_{1} + q_{2}  \leq 0.2 \,,
    \end{cases}
    \]
    which concludes the proof of the inclusion $\subseteq$ in~\eqref{eq:step1proofeqdec}.
    \bigskip

    \emph{Step 1B: Proof of~\eqref{eq:CQprime}.} We consider the line $\cD$ generated by the segment indicated in~\eqref{eq:CQprime} and illustrated by the dashed line in Figure~\ref{fig:counter-example}:
    \[
    \cD = \bigl\{ \bc : \phi(\bc) = 0 \bigr\}\,, \qquad \mbox{where} \qquad \phi : (x,y) \longmapsto y - 3 + 10\,x \,.
    \]
    We first note that the closed-form expression~\eqref{eq:decomposition} of the $\M_Q^{\eta}$
    shows that $\C'_Q \subseteq \cD$.
    We now show that no point of the red hatched semi-open interval of Figure~\ref{fig:counter-example} (which is a subset of $\cD$)
    belongs to any of the sets $\cM_Q^{\eta}$, where $0 < \eta < 0.2$.
    Indeed, as illustrated in Figure~\ref{fig:counter-example}, $\cM_Q^{\eta}$ is included in the left hyperplane
    formed by $\cD$, i.e., for all $\bc \in \cM_Q^{\eta}$, we have $\phi(\bc) \leq 0$.
    Resorting to the Carathéodory's theorem and due to~\eqref{eq:decomposition},
    any point in $\cM_Q^{\eta}$ can be written as a convex combination of at most three points
    in
    \[
    \G_Q^{\eta} = \Biggl[ \begin{pmatrix} -\sqrt{2/3}\,\eta \\ 3 \end{pmatrix},
    \begin{pmatrix} 0 \\ 3 \end{pmatrix} \Biggr)
    \, \cup \, \biggl\{ \begin{pmatrix} 0.1 \\ 2 \end{pmatrix} \biggr\} \, \cup \,
    \Biggl[ \begin{pmatrix} 0.2-\eta/\sqrt{2} \\ 1 \end{pmatrix} , \begin{pmatrix} 0.2 \\ 1 \end{pmatrix} \Biggr]\,.
    \]
    Note that
    \[
    \G_Q^{\eta} \cap \cD = \G_Q^{\eta} \cap \bigl\{ \bc : \phi(\bc) = 0 \bigr\}
    = \biggl\{ \begin{pmatrix} 0.1 \\ 2 \end{pmatrix}, \begin{pmatrix} 0.2 \\ 1 \end{pmatrix} \biggr\}\,,
    \]
    while $\phi(\bc) < 0$ for all other points in $\G_Q^{\eta}$.
    Therefore, by the linearity of $\phi$, only
    points in $\cM_Q^{\eta}$ that can be written as convex combinations of $\G_Q^{\eta} \cap \cD$
    belong to $\M_Q^{\eta} \cap \cD$:
    \[
    \M_Q^{\eta} \cap \cD = \conv\bigl(\G_Q^{\eta} \cap \cD \bigr)
    = \biggl[ \begin{pmatrix} 0.1 \\ 2 \end{pmatrix}, \begin{pmatrix} 0.2 \\ 1 \end{pmatrix} \biggr]\,.
    \]
    Thus, using the inclusion $\C'_Q \subseteq \cD$ stated above,
    \[
    \C'_Q = \C'_Q \cap \cD = \bigcap_{\eta > 0} \bigl( \M_Q^{\eta} \cap \cD \bigr) =
    \biggl[ \begin{pmatrix} 0.1 \\ 2 \end{pmatrix}, \begin{pmatrix} 0.2 \\ 1 \end{pmatrix} \biggr]\,,
    \]
    as announced.
    \bigskip

    \emph{Step 2: Construction of the sequence.} We build a sequence $(\bz_t)_{t \geq 1}$ that satisfies the claim of the counter-example, i.e., such that
    \[
    \lim_{T \to \infty} \frac{1}{T}\sum_{t=1}^{T} d(\bz_t,Q) = 0 \qquad \mbox{but} \qquad \liminf_{T \to \infty} \ d\!\left(\frac{1}{T}\sum_{t=1}^T \mfm^{\star}(\bz_t), \, \C'_Q\right) > 0\,.
    \]
    Following Step~1, we define the sequence in such a way that $\bz_t$ is in a $\eta_t$--neighborhood of $Q$ for $\eta_t = \sqrt{2}/(t+2)^2$, which is smaller than $0.2$ for all $t \geq 1$.
    To do so, we consider a fixed distribution in $Q$, namely, $(0,0.2,0.8)^{\top}$,
    and introduce a vanishing perturbation of order $1/t^2$ to this distribution.
    Formally, we define the sequence $(\bz_t)_{t \geq 1}$ as
    \[
    \bz_t = \bigl(1/(t+2)^2\bigr) \, \bdelta_{1} +  0.2 \, \bdelta_{2} + \bigl(0.8-1/(t+2)^2\bigr)\, \bdelta_{3}  \,.
    \]
    By construction, the sequence $(\bz_t)_{t \geq 1}$ satisfies $d(\bz_t,Q) \leq \sqrt{2}/(t+2)^2$ for all $t \geq 1$, so that
    \[
    \lim_{T \to \infty} \frac{1}{T}\sum_{t=1}^{T} d(\bz_t,Q) = 0 \,.
    \]
    We now use the continuity of the point-to-set distance and show that
    \[
    \lim_{T \to \infty} \ d\!\left(\frac{1}{T}\sum_{t=1}^T \mfm^{\star}(\bz_t), \, \C'_Q\right)
    = \left\Arrowvert \begin{pmatrix}
    0 \\
    3
    \end{pmatrix} - \begin{pmatrix}
    0.1 \\
    2
    \end{pmatrix} \right\Arrowvert = \sqrt{1.01}
    > 0 \,.
    \]
    Indeed, each distribution $\bz_t = (z_{t,1}, z_{t,2}, z_{t,3})^{\top}$ is such that $z_{t,1} >0$ and $z_{t,1} + z_{t,2} > 0.2$; thus, for all $t \geq 1$, the definitions of $\mfp^\star$ and $\mfm^\star$ imply $\mfp^\star(\bz_t) = \bdelta_{2}$ and
    \[
    \mfm^\star(\bz_t) = \begin{pmatrix}
    - 1/(t+2)^2 \\
    3
    \end{pmatrix}\,,
    \qquad \mbox{so that} \qquad
    \lim_{T \to \infty}\frac{1}{T}\sum_{t=1}^T \mfm^{\star}(\bz_t) =
    \begin{pmatrix}
    0 \\
    3
    \end{pmatrix} \,.
    \]
    The average reward converges to the point $(0,3)^\top$, and the closest point in $\C'_Q$
    is $(0.1, 2)^\top$, as follows from the closed-form expression of $\C'_Q$ derived in Step~1
    (see also Figure~\ref{fig:counter-example}). This concludes the proof.
\end{proof}

\begin{remark}
   The sequence $(\bz_t)_{t \geq 1}$ in \citet[Corollary 2]{bernstein2014opportunistic} is generated by a calibrated forecaster, typically involving randomized predictions \citep{oakes1985self,dawid1985comment}.
   In contrast and with no loss of generality, Step~2 of the proof of the counter-example uses a deterministic sequence.
\end{remark}

\bibliography{PS-Blackwell-ACI-bib}
\end{document}